\definecolor{DarkRed}{rgb}{0.5,0.1,0.1}
\definecolor{DarkBlue}{rgb}{0.1,0.1,0.5}
\definecolor{RURed}{rgb}{0.8,0.1,0.1}
\definecolor{ForestGreen}{rgb}{0.1333,0.5451,0.1333}
\definecolor{Red}{rgb}{0.9,0,0}
\crefname{property}{property}{Property}
\crefname{equation}{eq}{Eq}
\tikzset{vertex/.style={circle, black, fill=Yellow, line width=1pt, draw, minimum width=8pt, minimum height=8pt, inner sep=0pt}}
\def\BState{\State\hskip-\ALG@thistlm}
\newtheorem{theorem}{Theorem}
\newtheorem{lemma}{Lemma}[section]
\newtheorem{proposition}[lemma]{Proposition}
\newtheorem{corollary}[lemma]{Corollary}
\newtheorem{claim}[lemma]{Claim}
\newtheorem{fact}[lemma]{Fact}
\newtheorem{definition}[lemma]{Definition}
\newtheorem*{claim*}{Claim}
\newtheorem*{theorem*}{Theorem}
\newtheorem*{proposition*}{Proposition}
\newtheorem*{lemma*}{Lemma}
\newtheorem*{problem*}{Problem}
\crefname{lemma}{Lemma}{Lemmas}
\crefname{claim}{Claim}{Claims}
\newtheorem{mdresult}{Result}
\newtheorem{remark}[lemma]{Remark}
\newtheorem*{remark*}{Remark}
\newtheorem{observation}[lemma]{Observation}
\newtheoremstyle{restate}{}{}{\itshape}{}{\bfseries}{~(restated).}{.5em}{\thmnote{#3}}
\theoremstyle{restate}
\theoremstyle{definition}
\newtheorem{mdalg}{algorithm}
\newtheorem{mddist}{Distribution}
\renewcommand{\qed}{\nobreak \ifvmode \relax \else
      \ifdim\lastskip<1.5em \hskip-\lastskip
      \hskip1.5em plus0em minus0.5em \fi \nobreak
      \vrule height0.75em width0.5em depth0.25em\fi}
\newcommand{\Qed}[1]{\rlap{\qed$_{\textnormal{~~~\Cref{#1}}}$}}
\newcommand{\tvd}[2]{\ensuremath{\norm{#1 - #2}_{\mathrm{tvd}}}}
\newcommand{\eps}{\ensuremath{\varepsilon}}
\newcommand{\bracket}[1]{\left[#1\right]}
\newcommand{\paren}[1]{\ensuremath{\left(#1\right)}\xspace}
\newcommand{\card}[1]{\left\vert{#1}\right\vert}
\newcommand{\norm}[1]{\ensuremath{\|#1\|}}
\newcommand{\expect}[1]{\Exp\bracket{#1}}
\newcommand{\exprand}[2]{\Exp_{#1}\bracket{#2}}
\newcommand{\set}[1]{\ensuremath{\left\{ #1 \right\}}}
\newcommand{\poly}{\mbox{\rm poly}}
\newcommand{\polylog}[1]{\textnormal{polylog}\,(#1)\xspace}
\newcommand{\ALG}{\ensuremath{\mbox{\sc alg}}\xspace}
\DeclareMathOperator*{\Exp}{\ensuremath{{\mathbb{E}}}}
\DeclareMathOperator*{\Prob}{\ensuremath{\textnormal{Pr}}}
\renewcommand{\Pr}{\Prob}
\newenvironment{tbox}{\begin{tcolorbox}[
		enlarge top by=5pt,
		enlarge bottom by=5pt,
		 breakable,
		 boxsep=2pt,
                  left=5pt,
                  right=7pt,
                  top=10pt,
                  arc=0pt,
                  boxrule=1pt,toprule=1pt,
                  colback=white
                  ]
	}
{\end{tcolorbox}}
\newcommand{\event}{\ensuremath{\mathcal{E}}}
\newcommand{\kl}[2]{\ensuremath{\mathbb{D}(#1~||~#2)}}
\newcommand{\II}{\ensuremath{\mathbb{I}}}
\newcommand{\HH}{\ensuremath{\mathbb{H}}}
\newcommand{\mi}[2]{\ensuremath{\def\mione{#1}\def\mitwo{#2}\mireal}}
\newcommand{\mireal}[1][]{
  \ifx\relax#1\relax%
    \II(\mione \,; \mitwo)%
  \else%
    \II(\mione \,; \mitwo\mid #1)%
  \fi
}
\newcommand{\en}[1]{\ensuremath{\HH(#1)}}
\newcommand{\cD}{\mathcal{D}}
\newcommand{\bM}{\mathbf{M}\xspace}
\newcommand{\cO}{\ensuremath{\mathcal{O}}\xspace}
\newcommand{\cU}{\ensuremath{\mathcal{U}}\xspace}
\newcommand{\cS}{\ensuremath{\mathcal{S}}\xspace}
\newcommand{\sM}{\ensuremath{\mathsf{M}}\xspace}
\newcommand{\sPi}{\ensuremath{\mathsf{\Pi}}\xspace}
\newcommand{\sB}[1]{\ensuremath{\mathsf{B}_{#1}}\xspace}
\newcommand{\Picap}[1]{\ensuremath{\Pi_{\cap B_{#1}}}\xspace}
\newcommand{\sPicap}[1]{\ensuremath{\sPi_{\cap B_{#1}}}\xspace}
\newcommand{\ja}{\ensuremath{j}_{\text{arrive}}\xspace}
\newcommand{\dist}{\ensuremath{\mathcal{D}}}
\newcommand{\arm}{\ensuremath{\textnormal{arm}}\xspace}
\newcommand{\armstar}{\ensuremath{\arm^{*}}\xspace}
\newcommand{\armtilde}{\ensuremath{\widetilde{\arm}}\xspace}
\newcommand{\ptilde}{\ensuremath{\tilde{p}}\xspace}
\newcommand{\bern}[1]{\ensuremath{\textnormal{Bern}(#1)}\xspace}
\newcommand{\Thetatild}{\ensuremath{\widetilde{\Theta}}\xspace}
\newcommand{\distmu}{\ensuremath{\mathcal{U}}\xspace}
\newcommand{\batchind}[1]{\ensuremath{I(B_{#1})}\xspace}
\newcommand{\sample}[1]{\ensuremath{\textnormal{\textsf{Smp}}(#1)}}
\newcommand{\memory}[1]{\ensuremath{\textnormal{\textsf{Mem}}(#1)}}
\newcommand{\sampleb}[1]{\ensuremath{\textnormal{\textsf{Smp}}(\ALG)_{B_{P+1}:B_{#1+1}}}}
\renewcommand{\leq}{\leqslant}
\renewcommand{\geq}{\geqslant}
\newcommand{\myqed}[1]{\let\qed\relax \hspace*{\fill} #1 \ensuremath{\square}}
\title{The Best Arm Evades: Near-optimal Multi-pass Streaming Lower Bounds for Pure Exploration in Multi-armed Bandits}
 \author{Sepehr Assadi\footnote{(sepehr@assadi.info) Cheriton School of Computer Science, University of Waterloo. This work was done when the author was at Rutgers University and was supported in part by an NSF CAREER Grant CCF-2047061, a gift from Google Research, and a Rutgers University Fulcrum Award.} \and
 Chen Wang\footnote{(cw200@rice.edu) Department of Computer Science, Rice University, and Department of Computer Science \& Engineering, Texas A\&M University. This work was done when the author was at Rutgers University and was supported in part by an NSF CAREER Grant CCF-2047061, a gift from Google Research, and a Rutgers University Fulcrum Award.}
}
\date{}
\newenvironment{myquote}[1]%
  {\list{}{\leftmargin=#1\rightmargin=#1}\item[]}%
  {\endlist}
\begin{document}

\maketitle

\pagenumbering{roman}

\begin{abstract}
We give a near-optimal sample-pass trade-off for pure exploration in multi-armed bandits (MABs) via multi-pass streaming algorithms:
any streaming algorithm with sublinear memory that uses the optimal sample complexity of $O(n/\Delta^2)$ requires $\Omega(\log{(1/\Delta)}/\log\log{(1/\Delta)})$ passes. 
Here, $n$ is the number of arms and $\Delta$ is the reward gap between the best and the second-best arms. Our result matches the $O(\log(1/\Delta))$ pass algorithm of Jin et al. [ICML'21] (up to lower order terms) that only uses $O(1)$ memory and answers an open question posed by Assadi and Wang [STOC'20].
\end{abstract}

\clearpage

\setcounter{tocdepth}{3}
\tableofcontents

\clearpage

\pagenumbering{arabic}
\setcounter{page}{1}


\section{Introduction}
\label{sec:intro}
Pure exploration in multi-armed bandits (MABs) is a fundamental problem in machine learning (ML) and theoretical computer science (TCS). The classical setting of the problem is as follows: we are given $n$ arms with unknown sub-Gaussian reward distributions, and we want to find the best arm, defined as the arm with the highest expected reward, with a high probability and a small number of arm pulls. The problem has been extensively studied in the learning theory community (e.g. \cite{Even-Dar+02,MannorTs04,KalyanakrishnanSt10,AgarwalAAK17,ChenLQ17}), and it has found applications in various fields like online advertisement \cite{BertsimasM07,SchwartzBF17}, clinical trials \cite{villar2015multi,aziz2021multi}, content optimization \cite{AgarwalCE09,LiCLS10}, among others.

Under the classical (RAM) setting, the sample complexity of $\Theta(\frac{n}{\Delta^2})$ is shown to be necessary and sufficient to find the best arm with high constant probability (\!\!\!\cite{Even-Dar+02,Even-Dar+06,MannorTs04}, cf. \cite{KarninKS13,JamiesonMNB14}). Here, and throughout, $\Delta$ is used to denote the gap between the mean of the best and the second-best arms. On the flip side, all the classical algorithms require the entire set of arms to be stored for repeated visit. In modern large-scale applications, such a memory requirement may render the algorithms inefficient. In light of this, \cite{AssadiW20} introduced the \emph{streaming} multi-armed bandits model, in which the arms arrive one-by-one in a stream, and the algorithm is only allowed to store $o(n)$ arms, and ideally much smaller, like $O(1)$ or $\polylog{n}$ arms, at any time. Perhaps surprisingly, \cite{AssadiW20} showed that if we are given the value of $\Delta$ \emph{a priori}, there exists a single-pass streaming algorithm that finds the best arm with high constant probability, uses $O(\frac{n}{\Delta^2})$ samples, and only maintains a memory of a single extra arm.

The results of \cite{AssadiW20} led to a nascent line of work on MABs in the streaming model\cite{MaitiPK21,JinH0X21,AWneurips22,AgarwalKP22,Wang23Regret,LZWL23}. For the pure exploration problem, it has been shown by \cite{AWneurips22} that the prior knowledge of $\Delta$ is necessary for the single-pass algorithm: if this piece of information is not available and only the stream of arms itself is provided, then any single-pass algorithm with $o(n)$-arm memory that finds the best arm with high constant probability can incur an \emph{unbounded} sample complexity (as a function of $\Delta$). On the positive side, \cite{JinH0X21} designs an algorithm with $O(\frac{n}{\Delta^2})$ sample complexity and a memory of a single arm in $\log(\frac{1}{\Delta})$ passes, even if the knowledge of $\Delta$ is \emph{not} given a priori \footnote{The algorithm actually achieves a stronger instance-sensitive sample complexity -- see \Cref{sec:relate-work} for a discussion.}. This large gap between the positive results with $\log(\frac{1}{\Delta})$ passes and the lower bound in the single-pass setting presents us with the exciting open question:
\begin{myquote}{5pt}
\centering
\emph{If no additional knowledge is given apart from the stream, 
how many passes are \textbf{necessary} for streaming MABs algorithms with $o(n)$-arm memory to find the best arm with $O(\frac{n}{\Delta^2})$ samples?}
\end{myquote}
\vspace{-4pt}
The open question was initially mentioned by \cite{AssadiW20} and was later re-formulated in \cite{AWneurips22}\footnote{The problem is discussed at multiple open problem sessions of conferences and workshops, e.g, \href{https://waldo2021.github.io/}{WALD(O) 2021}.}. The quest of the tight pass bound is similar-in-spirit to the lower bounds in collaborative learning \cite{Tao0019} and multi-pass \emph{regret minimization} MABs \cite{AgarwalKP22}; however, none of the techniques in the aforementioned lower bounds can be directly used for this problem (see \Cref{subsec:technique} for a detailed discussion), which renders the open problem even more fascinating. 

In this work, we provide the answer to the open question: we show that (almost) $\Omega(\log(1/\Delta))$ passes are \emph{necessary} (up to exponentially smaller factors) for any algorithms with $o(n)$ memory to find the best arm. More formally, our main result can be presented as follows.
\begin{mdresult}[Informal version of \Cref{cor:main}]
\label{rst:main-result}
Any streaming algorithm that finds the best arm with probability at least $\frac{1999}{2000}$ using a memory of $o\paren{\frac{n}{\log^3(1/\Delta)}}$ arms and a sample complexity of $C \cdot \frac{n}{\Delta^2}$ for any constant $C$ has to use $\Omega\paren{\frac{\log(\frac{1}{\Delta})}{\log\log(\frac{1}{\Delta})}}$ passes.
\end{mdresult}
Our lower bound asymptotically matches the pass bound of the algorithm in \cite{JinH0X21} up to the exponentially smaller $O(\log\log(1/\Delta))$ term. Furthermore, as long as $\Delta \geq 1/2^{n^{1/3-\Omega(1)}}$, which is a quite natural assumption, our result demonstrates a sharp dichotomy on the pass-memory trade-off: if we use slightly less than $O(\log(1/\Delta))$ passes, no algorithm with even slightly less than $n$-arm memory can succeed with a good probability; however, if we slightly increase the number of passes to $O(\log(1/\Delta))$, it is possible to find the best arm with high constant probability with only a single arm of memory. 

\subsection{Our Techniques}
\label{subsec:technique}
 The proof of our result is based on a novel inductive argument that explicitly keeps track of the information revealed to the algorithm in each pass. This is in sharp contrast with all other lower bounds that address `rounds' or `passes' 
for MABs in similar contexts (e.g.,~\cite{AgarwalAAK17,Tao0019,Karpov0020,AgarwalKP22,KarpovZhangAAAI23}) that are based on \emph{round/pass elimination} ideas. 
To elaborate, let us take a closer look at \cite{AgarwalKP22}, which studied multi-pass streaming lower bounds in MABs for regret minimization. 
Roughly speaking, both \cite{AgarwalKP22} and our lower bound instances divide the stream into equal-sized \emph{batches}. 
Each batch contains a single arm with mean reward either $\frac{1}{2}$ or $>\frac{1}{2}$, and the rest of the arms in the batch have mean reward $\frac{1}{2}$.
The \emph{intuition} here is that by arranging the batches that \emph{may} have higher mean rewards to arrive later, the algorithm is forced to be `conservative' at each pass to only `eliminate' the last batch.
To this end, the main technical step of \cite{AgarwalKP22} is to reduce proving the lower bound for 
$P$-pass algorithms to proving a lower bound for $(P-1)$-pass algorithms---this is the so-called round/pass elimination idea.  
However, as the algorithm can gain information in each pass, the instance distribution from the algorithm's internal view is inevitably `more biased'. 
As such, a key part of the analysis in \cite{AgarwalKP22} is a delicate handling of the change in the distribution of instances from one round to the next, and ensures the change is not too much.

For our purpose, round elimination seems to ask too much from the argument to make sure that the distribution only slightly changes. As such, we proceed differently by \emph{allowing} the instance distribution to significantly change between rounds. 
Concretely, for a $P$-pass algorithm, we divide the arms into $(P+1)$ equal-sized batches, and arrange them in the \emph{reversed} order of the stream arrival, i.e., the stream is composed of $(B_{P+1}, B_{P},\cdots, B_{1})$. Each batch \emph{may} contain an arm with mean reward $\frac{1}{2}+\eta_{p}$, and the rest of the arms are `flat', i.e., with mean reward $\frac{1}{2}$. The parameter $\eta_{p}$ decreases by a polynomial factor of $1/P$, i.e., $\eta_{p+1}\leq ({1}/{P^{15}})\cdot \eta_{p}$. 
At each pass $p$, we show that the algorithm so far has not gained enough ``knowledge'' about the batch $B_p$ such that even if the algorithm knows that none of the batches $B_1,\ldots,B_{p-1}$ contain any arm with mean reward more than $\frac{1}{2}$,
it  still cannot decide whether $B_p$ has such an arm or not. 
This means that {if} the algorithm uses too many samples in the first $p$ passes, it risks breaking the guarantee on the sample complexity (if $B_p$ turns out to have a high reward arm), 
and otherwise if it does not make enough samples, it will not gain enough ``knowledge'' for batch $B_{p+1}$ and the subsequent pass. 

What has changed in this argument compared to prior approaches, say, in~\cite{AgarwalKP22}, is on how we interpreted this gain of knowledge: for us,
it is quite likely that the distributions of the batches change dramatically from the original distribution after each pass; we instead \emph{explicitly} account for the ability of the algorithm in $(1)$ determining whether a batch contains a high reward arm, and $(2)$ 
storing any high reward arm inside its memory. We shall track the probability of these events throughout the passes, sometimes even `revealing' extra information to the algorithm that are `not interesting', and use them inductively to establish
our lower bound. This approach may be of independent interest in other settings as well that target proving multi-pass/round lower bounds on sample-space tradeoffs for learning problems. 


Apart from the novel inductive argument, our techniques are distinct from \cite{AgarwalKP22} on two other aspects. First, in \cite{AgarwalKP22}, each batch may contain the arm with reward $>\frac{1}{2}$ with \emph{constant} probability. For the pure exploration problem, this means the best arm is among the last $\log{(P)}$ batches with very high probability, which makes the instance not hard. In contrast, our construction only uses $O(1/P)$ probability for each batch to have an arm that is `not flat' . 
Second, the techniques in \cite{AgarwalKP22} do \emph{not} factor in the dependence on number of arms $n$ (namely, their bounds only hold for fixed values of $n$); we extend a novel `arm-trapping' tool developed by \cite{AWneurips22} to remedy this.




\subsection{Related Work}
\label{sec:relate-work}
Apart from the $O(\frac{n}{\Delta^2})$ worst-case sample complexity, pure exploration in multi-armed bandits are also studied from the lens of the \emph{instance-sensitive} sample complexity, i.e. the bound as a function of $\{\Delta_{[i]}\}_{i=2}^{n}$, which are the mean reward gaps between the best and the $i$-th best arms. On this front, \cite{KarninKS13,JamiesonMNB14} devised algorithms that achieve $O(H_{2} := \sum_{i=2}^{n}\frac{1}{\Delta^2_{[i]}}\log\log(\frac{1}{\Delta_{[i]}}))$ sample complexity, which is almost optimal up to the doubly-logarithmic term. In the streaming setting, \cite{AWneurips22} showed that it is impossible for any algorithm with $o(n)$ memory to get the $O(H_{2})$ sample complexity without strong extra conditions; on the other hand, the algorithm in \cite{JinH0X21} achieves the $O(H_{2})$ sample complexity in $O(\log(1/\Delta))$ passes. We note that our lower bound naturally works in the instance-sensitive setting; as such, the sharp pass-memory trade-off also applies with this sample complexity.

In addition to pure exploration, streaming MABs are studied under the context of \emph{$\eps$-best arm identification} and \emph{regret minimization}. The $\eps$-best arm identification problem aims to find an arm whose reward is at most $\eps$-far from the best arm. On this front, the line of work by \cite{AssadiW20,MaitiPK21,JinH0X21} give algorithms that finds an $\eps$-best arm with $O(\frac{n}{\eps^2})$ samples and a single arm memory. For the regret minimization task, early work of \cite{LiauSPY18,ChaudhuriK20} gives multi-pass streaming algorithms, and \cite{MaitiPK21,Wang23Regret} provided single-pass tight single-pass upper and lower regret bounds. For multi-pass scenario, \cite{AgarwalKP22} provides a sharp memory-regret trade-off for multi-pass streaming MABs, and their construction shares a certain degree of similarity with ours. As we have discussed in \Cref{subsec:technique}, our techniques are substaintially different from theirs.
Very recently, \cite{HYZ24} further tightened the memory-regret trade-off for multi-pass algorithms using a different approach.

Aimed at modern massive data processing, MABs are also studied under other sublinear models. 
For instance, the settings of MABs under \emph{collaborative learning}, in which the sampling is done by multiple agents in parallel and the goal is to minimize the rounds of communications, has been extensively studied \cite{Tao0019,Karpov0020,KarpovZhangAAAI23}. We remark that the round lower bound in \cite{Tao0019} does \emph{not} imply a lower bound in our setting: the model requires simultaneous communication and cannot be simulated by streaming algorithms efficiently. 
The streaming expert advice problem studied by \cite{SrinivasWXZ22,PengZ23,ACNS23} is also closely related to the streaming MABs. There, the memory complexity is defined with the classical notion of bits, which is different from the memory constraint of our model. As such, the results between the two models are not directly comparable. 



\section{Preliminaries}
\label{sec:prelim}

\paragraph{Notation.} We frequently use random variables and their realizations in this paper. As a general rule, apart from a handful of self-contained proofs of technical lemmas, we use the \emph{sans serif} fonts (e.g., $\sM$) to denote the random variable and the normal font (e.g., $M$) to denote the realization. Throughout, we use $n$ to denote the number of arms, $\mu$ to denote the mean rewards, and $\Delta$ to denote the (mean) reward gap between the best and the second-best best arms. As we will work on arms with Bernoulli distributions, we use $\bern{\mu}$ to denote the Bernoulli distribution with mean $\mu$, i.e., with probability $\mu$ the realization is $1$. 



\subsection{The Multi-pass Streaming MABs Model} 
\label{subsec:model}
We use the streaming MABs model introduced by \cite{AssadiW20} and extended by \cite{JinH0X21,AgarwalKP22}. Informally, the model assumes $n$ arms arriving in a stream with an adversarial order. For each arriving arm, the algorithm is allowed to pull the arriving arm and the stored arms for an arbitrary number of times. After the arm pulls, the algorithm can $(i)$. store the arriving arm; $(ii)$ discard the arriving arm; and $(iii)$. discard stored arms from memory. If an arm is discarded, it will not be available until its appearance in the next pass of the stream. We further assume the order of arrival is \emph{fixed} across different passes. We define the \emph{sample complexity} as the number of total arm pulls used by an algorithm, and the \emph{memory complexity} as the maximum number of arms ever stored at any point in the memory. For the purpose of the lower bound proof, we allow the algorithm to store any \emph{statistics} for free \footnote{Any algorithm with unbounded memory can simulate the ones with bounded statistics, and we have no rescrition on local computation power. As such, our lower bound also applies to algorithms with limited memory for statistics. }.

We give a formalization of the above description in what follows. We first define the deterministic algorithms before extending the notion to randomized algorithms. Let $\{\arm_{i}\}_{i=1}^{n}$ be $n$ arms with Bernoulli distributions of means $\{\mu_{i}\}_{i=1}^{n}$, i.e., the distribution for $\arm_{i}$ is $\bern{\mu_{i}}$\footnote{We work with Bernoulli distributions for a \emph{lower bound} proof that applies to all sub-Gaussian reward distributions.}. The arms arrive one-by-one in a stream, whose order is specified by a permutation $\sigma$ on $[n]$. We say $\ALG$ is a $P$-pass (deterministic) streaming algorithm with an $s$-arm memory if 
\begin{itemize}
\item $\ALG$ maintains two objects:
    \begin{enumerate}
    \item Memory $M \subseteq \{1,2,\cdots, n, \perp\}^{s}$ and a buffer index $\ja \in [n]$ for the arriving arm. We denote $\sM$ as the random variable\footnote{Although the algorithm is deterministic, there is inherent randomness from arm pulls.} for $M$ and $\bM$ for the set of all possible memory $M$.
    \item Transcript $\Pi = ([P], [n], [n], \{0,1\})^{*}$, which is an ordered list of \emph{tuples}, and each tuple encodes the index of the pass, the index of the arriving arm ($\ja$), the index of the pulled arm, and the result of a single arm pull. We further denote $\sPi$ as the random variable for $\Pi$ and $\mathbf{\Pi}$ as the set of all possible transcripts.
    \end{enumerate}
\item $\ALG$ has an access to a sampler $\cO: \{\arm_{\sigma(i)}\mid i\in M\} \cup \{\arm_{\sigma(\ja)}\} \rightarrow \{0,1\}$ that can be repeatedly use to make a single arm pull among the stored arms and the arriving arm. After a call of $\cO$ on the $\sigma(i)$ arm, we add tuple $(p, \ja, \sigma(i), x)$ to the transcript $\Pi$, where $x\in\{0,1\}$ is the outcome of the arm pull.
\item $\ALG$ has an update function $\cU: \bM \times [n] \times [n] \times [P] \times \mathbf{\Pi} \rightarrow \bM$ that takes memory $M$, the index of the arriving arm $\ja$, the index of the sampled arm $i$, the current pass index $p\in [P]$, the past transcript $\Pi$, and the sampler $\cO$, outputs a new memory state $M$. 
\end{itemize}
With the above formalization, we can define the sample complexity $\sample{\ALG}$ (total number of arm pulls) as the total number of times $\cO$ is called, and $\memory{\ALG}=s$ as the maximum number of indices that can be stored (minus the one-arm buffer) at any point.

\paragraph{Randomized algorithms.} We can extend the above notion of $P$-pass deterministic streaming algorithms to \emph{randomized} algorithms in the standard manner. Concretely, a randomized algorithm with the set of internal random bits $\mathcal{R}$ can be viewed as a \emph{distribution} over deterministic algorithms: for each $r\in \mathcal{R}$, there is a realization of a deterministic $P$-pass streaming algorithm. Note that similar to the storing of statistics, we do \emph{not} charge the space for random bits, i.e., the algorithms can store an unlimited number of internal random bits for free. Since we are able to prove a lower bound under this setting, we can natrually extend the lower bound to algorithms with limited random bits.

\paragraph{Offline algorithms.} To unify the arguments in the rest of the paper, we can define offline (i.e., classical RAM) algorithms as \emph{simulations} of streaming algorithms under the above framework. Concretely, we can view the offline algorithm as a single-pass streaming algorithm that uses a memory of $n$ arms. It first reads and stores all arms, and then makes calls on the sampler $\cO$. Note also that an offline algorithm is able to simulate the \emph{passes} and the \emph{indices} of arms locally, i.e., to use the local memory to make an arbitrary number of (extra) passes over the stream and read an arbitrary number of arms before calling the sampler $\cO$ with a desired $\ja$. As such, the transcript of an offline algorithm can be written as ordered tuples of $\Pi = (*, *, [n], \{0,1\})^{*}$, where the first two elements can be modified to any index in $[P]$ and $[n]$.

\subsection{Standard Sample Complexity Lower Bounds for Single-armed Bandit}
\label{subsec:single-arm-complexity}
We present lower bounds on the necessary number of arm pulls to identify the reward of a \emph{single} arm. These lower bounds serve as the basis for the reduction proofs we used in the auxiliary lemmas (\Cref{lem:arm-trapping-new} and \Cref{lem:sample-success-tradeoff}). We remark that the lemmas are not limited to the streaming setting and they hold even for classical algorithms. Since these lemmas are variates of known results, we defer their proofs to \Cref{app:single-arm-proof}. 

Our first lemma shows that to \emph{determine} the mean reward of an arm from distributions with gap $\beta$, an $\Omega(1/\beta^2)$ number of arm pulls is necessary.

\begin{lemma}
\label{lem:arm-identify}
Consider an arm with a Bernoulli distribution whose mean is parameterized as follows.
\begin{itemize}
\item With probability $\rho$, the mean reward is $\frac{1}{2}+\beta$;
\item With probability $1-\rho$, the mean reward is $\frac{1}{2}$;
\end{itemize}
where $\rho\in (0,\frac{1}{2}]$ is a fixed parameter. Any algorithm to determine the reward of the arm for $\beta\in (0, \frac{1}{6})$ and a success probability of at least $(1-\rho+\eps)$ has to use $\frac{1}{4}\cdot \frac{\eps^2}{\rho^2 \beta^{2}}$ arm pulls.
\end{lemma}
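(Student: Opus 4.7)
The plan is to reduce to a binary hypothesis testing problem and then apply the standard KL-divergence machinery (Pinsker's inequality plus the KL chain rule for adaptively sampled transcripts). Let $H_0$ denote the event ``mean $= \tfrac{1}{2}$'' (prior probability $1-\rho$) and $H_1$ the event ``mean $= \tfrac{1}{2}+\beta$'' (prior probability $\rho$). Without loss of generality the algorithm outputs a single guess $\phi \in \{H_0,H_1\}$ after its arm pulls, and I define $p_b := \Pr[\phi = H_1 \mid H_b]$ for $b \in \{0,1\}$. The trivial rule ``always output $H_0$'' already achieves success $1-\rho$, so the content of the hypothesis is how much $\eps$ of extra success probability costs in samples.

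First, I would unpack the success condition. The success probability equals $\rho\,p_1 + (1-\rho)(1-p_0)$, and requiring this to be at least $1-\rho+\eps$ rearranges to $\rho p_1 - (1-\rho)p_0 \geq \eps$. Solving for $p_1 - p_0$ gives
\[
p_1 - p_0 \;=\; \frac{\eps}{\rho} + \frac{1-2\rho}{\rho}\, p_0 \;\geq\; \frac{\eps}{\rho},
\]
where the inequality uses $\rho \leq \tfrac{1}{2}$ and $p_0 \geq 0$. So the algorithm must ``advantage'' the two hypotheses by at least $\eps/\rho$.

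Next, I would upper bound $p_1 - p_0$ by the total variation distance between the full transcripts under the two hypotheses. Concretely, let $\mathsf{T}_b$ be the random transcript (sequence of observed bits, plus the algorithm's internal decisions) when the true hypothesis is $H_b$. Since the output $\phi$ is a deterministic function of $\mathsf{T}_b$, we have $p_1 - p_0 \leq \tvd{\mathsf{T}_1}{\mathsf{T}_0}$. Because the algorithm makes at most $n$ arm pulls of $\bern{1/2}$ vs.\ $\bern{1/2+\beta}$ in an adaptive fashion, the chain rule for KL divergence gives $\kl{\mathsf{T}_1}{\mathsf{T}_0} \leq n \cdot \kl{\bern{1/2+\beta}}{\bern{1/2}}$. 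Using the standard estimate $\kl{\bern{p}}{\bern{q}} \leq (p-q)^2/(q(1-q))$ with $p = 1/2+\beta$, $q = 1/2$ yields $\kl{\bern{1/2+\beta}}{\bern{1/2}} \leq 4\beta^2$. Pinsker's inequality then gives
\[
\tvd{\mathsf{T}_1}{\mathsf{T}_0} \;\leq\; \sqrt{\tfrac{1}{2}\kl{\mathsf{T}_1}{\mathsf{T}_0}} \;\leq\; \sqrt{2n\beta^2} \;=\; \beta\sqrt{2n}.
\]

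Combining the two bounds gives $\eps/\rho \leq \beta \sqrt{2n}$, i.e., $n \geq \eps^2/(2\rho^2\beta^2) \geq \tfrac{1}{4}\cdot \eps^2/(\rho^2\beta^2)$, which is the claim. The assumption $\beta < 1/6$ is only used to make the KL bound meaningful (so that $q(1-q)$ stays bounded away from $0$); a tighter constant could be pushed if needed. The main (minor) subtlety I would be careful about is the adaptivity of the arm-pull schedule and the possibly random stopping time, but this is cleanly handled by the KL chain rule applied to the transcript as a sequential stochastic process with at most $n$ steps. Everything else is routine calculation, so I do not anticipate any substantive obstacle.
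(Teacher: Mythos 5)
Your proposal is correct and follows essentially the same route as the paper's proof: relate the $\eps$ advantage over the trivial ``always guess flat'' rule to the total variation distance between the two transcript distributions, then bound that TVD via the KL chain rule (tensorization over the at most $n$ pulls) and Pinsker's inequality. The only cosmetic differences are that you derive the advantage-to-TVD step directly from the posterior error probabilities rather than citing the paper's Bayes-risk fact, and you use the $\chi^2$-type bound $\kl{\bern{1/2+\beta}}{\bern{1/2}}\leq 4\beta^2$ in place of the paper's $6\beta^2$ estimate; both yield the stated constant.
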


We further provide a lemma showing that if the number of arm pulls is small, the ``knowledge'' of the algorithm cannot change the distribution for which case the instance is from by too much. More formally, we prove that with a limited number of arm pulls, from the algorithm's perspective, the probability for which case the instance is from remains close to the original distribution.
\begin{lemma}
\label{lem:arm-learn}
Let $\beta \in (0,\frac16)$ and $\rho \in (0,\frac12)$. Sample $\Theta$ from $\set{0,1}$ such that $\Theta=1$ with probability $\rho$.
Consider an arm with a Bernoulli distribution from the following family:
\begin{itemize}
\item If $\Theta=1$, the distribution is $\bern{1/2+\beta}$;
\item If $\Theta=0$, the distribution is $\bern{1/2}$;
\end{itemize}
Let $\ALG$ be an algorithm that uses at most $s=\frac{1}{12}\cdot \frac{\eps^3}{\rho \cdot \beta^{2}}$ arm pulls on an instance $I$ sampled from the family. Let $\Pi$ be the transcript of $\ALG$, and let $\sPi$ be the random variable of $\Pi$. Then, with probability at least $1-\eps$ over the randomness of transcript $\sPi$, there is
\begin{align*}
& \Pr\paren{\Theta=1 \mid \sPi=\Pi} \in [\rho -  \eps,  \rho +  \eps]; \\
& \Pr\paren{\Theta=0 \mid \sPi=\Pi} \in [1-\rho - \eps,  1- \rho + \eps].
\end{align*}
\end{lemma}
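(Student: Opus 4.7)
The plan is to show that the posterior $\Pr(\Theta=1 \mid \sPi=\Pi)$ stays close to its prior value $\rho$ by controlling the $\chi^2$-divergence between the two possible distributions of the transcript. Let $P_1$ and $P_0$ denote the distributions of $\sPi$ conditioned on $\Theta=1$ and $\Theta=0$, and write $P = \rho P_1 + (1-\rho)P_0$ for the marginal. I will first check via Bayes' rule that
\[
\Pr(\Theta=1 \mid \sPi=\Pi) - \rho \;=\; \frac{\rho(1-\rho)\paren{P_1(\Pi) - P_0(\Pi)}}{P(\Pi)},
\]
and then, squaring, taking expectation under $P$, and using the crude lower bound $P \geq (1-\rho) P_0$, deduce
\[
\Exp_P\Bracket{\paren{\Pr(\Theta=1\mid \sPi) - \rho}^2} \;\leq\; \rho^2(1-\rho)\cdot \chi^2(P_1 \,\|\, P_0) \;\leq\; \rho^2 \cdot \chi^2(P_1 \,\|\, P_0).
\]
Chebyshev's inequality then reduces the entire lemma to a single good bound on $\chi^2(P_1 \,\|\, P_0)$.

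For the $\chi^2$-bound I will tensorize across the (at most) $s$ arm pulls. Conditioned on any past transcript, either $\ALG$ chooses not to pull next (which contributes a factor of $1$ to $(P_1/P_0)^2$), or it pulls the single arm once, in which case the conditional second moment of the likelihood ratio is $\Exp_{\bern{1/2}}\Bracket{\paren{\bern{1/2+\beta}/\bern{1/2}}^2} = 1+4\beta^2$. Applying the tower property step by step yields $\Exp_{P_0}\bracket{\paren{P_1(\sPi)/P_0(\sPi)}^2} \leq (1+4\beta^2)^s \leq \exp(4s\beta^2)$, hence $\chi^2(P_1 \,\|\, P_0) \leq \exp(4s\beta^2) - 1$. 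Plugging in the sample budget $s = \frac{1}{12}\cdot \frac{\eps^3}{\rho\beta^2}$ yields exponent $\eps^3/(3\rho)$; when this is $\leq 1$, $\chi^2(P_1 \,\|\, P_0) = O(\eps^3/\rho)$, and putting it all together,
\[
\Pr_P\Paren{\card{\Pr(\Theta=1\mid \sPi) - \rho} > \eps} \;\leq\; \frac{\rho^2 \cdot O(\eps^3/\rho)}{\eps^2} \;=\; O(\rho\eps)\;\leq\; O(\eps).
\]
The claim for $\Pr(\Theta=0\mid \sPi=\Pi)$ is immediate since the two posteriors sum to $1$.

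The main technical obstacle will be the tensorization step for $\chi^2$, which, unlike KL, does not admit a general chain rule; I will need to use the one-arm structure of the instance so that the per-step conditional contribution to $\Exp_{P_0}\bracket{(P_1/P_0)^2}$ depends only on whether a pull happens, not on the full past. A secondary concern is the boundary regime $\rho = o(\eps^3)$, where $\exp(4s\beta^2)-1$ need not be small; there I would argue directly that $\Pr(\Theta=1\mid \sPi)$ is a nonnegative variable with mean $\rho \ll \eps$, so Markov's inequality gives $\Pr_P\paren{\Pr(\Theta=1\mid \sPi) > \rho + \eps} \leq \rho/(\rho+\eps) \leq \rho/\eps \ll \eps$, while the lower bound $\rho - \eps < 0$ is satisfied trivially. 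Tightening the explicit constants (using, e.g., $\rho \leq \tfrac{1}{2}$ to absorb an extra factor) is the only remaining bookkeeping.
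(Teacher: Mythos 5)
Your proposal is correct, but it takes a genuinely different route from the paper. The paper's proof works with mutual information: it bounds $\mi{\Theta}{\sPi}$ by a chain rule over the $s$ pulls, where the key point is that the per-pull contribution is measured as a KL-divergence to the \emph{mixture} $\bern{\frac12+\rho\beta}$, giving a per-step bound of $O(\rho\beta^2)$ (not $O(\beta^2)$); this makes the total $\mi{\Theta}{\sPi}\leq \eps^3$ for the stated budget $s$, after which a Markov bound plus Pinsker's inequality converts small expected KL between prior and posterior into the stated TVD-type guarantee, uniformly in $\rho$. You instead control the prior-free quantity $\chi^2(P_1\,\|\,P_0)$, which tensorizes to $(1+4\beta^2)^s-1$ (your tensorization is valid: with a single arm the conditional second moment of each incremental likelihood ratio given the past is exactly $1+4\beta^2$ when a pull occurs and $1$ otherwise, so the tower property applies even though the number of pulls is adaptive), and you recover the $\rho$-dependence through the $\rho^2(1-\rho)$ prefactor coming from the Bayes'-rule identity for the posterior deviation, finishing with Chebyshev. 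The price of the prior-free divergence is exactly the boundary regime you identify: when $\rho=o(\eps^3)$ the exponent $4s\beta^2=\eps^3/(3\rho)$ is no longer $O(1)$ and $\chi^2$ can blow up, and your separate Markov argument on the nonnegative posterior (mean $\rho\ll\eps$, lower bound vacuous) correctly closes that case. The paper's MI route avoids this case split because the $\rho$ factor is baked into the per-step information bound; your route buys a quadratic (Chebyshev) rather than linear (Pinsker-after-Markov) tail in exchange. Both arguments are sound and yield the claimed constants with room to spare.
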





\newcommand{\eventF}{\ensuremath{\event_{\textnormal{First}}}}
\newcommand{\DeltaT}{\ensuremath{\widetilde{\Delta}}\xspace}

\section{Main Result}
\label{sec:adversary}
We show the formal statement of our main result in this section. We note that the formalization of \Cref{rst:main-result} requires some work, and in particular, we need to specify the meaning of the `lack of knowledge' on $\Delta$ by the algorithm. To this end, we define the distribution $\cD(P,C)$ of MAB instances  
for any two arbitrary integers $P\geq 2$, $C \geq 1$ as follows (roughly speaking, $P$ corresponds to the number of passes of the streaming algorithms, and $C$ is the hidden-constant in the sample complexity of the algorithm -- this will become clear shortly)\footnote{We focus on $P\geq 2$ for technical reasons. For $P=1$, \cite{AWneurips22} already proved that the sample complexity is unbounded when using $o(n)$-arm memory.}. An illustration of the construction of $\cD(P,C)$ can be found in \Cref{fig:instance}. 



\begin{tbox}
\textbf{Distribution $\cD(P,C)$}: A family of ``hard'' MAB instances for $P$-pass streaming algorithms.
\begin{enumerate}
\item Divide the $n$ arms into $(P+1)$ batches $B_1,\ldots,B_{P+1}$ with equal sizes of $b:= \frac{n}{P+1}$. The batches are ordered in \emph{reverse} of the stream, i.e., in each pass, $B_{P+1}$ arrives first, then $B_P$, all the way to $B_1$ that arrives last. 
\item Initialize all the arms in every batch to have mean reward $1/2$. 

\item For any batch $B_p$ for $1 \leq p \leq P$: 
\begin{enumerate}
	\item Sample a coin $\Theta_p \in \set{0,1}$ from the Bernoulli distribution $\bern{\frac{1}{2P}}$.
	\item If $\Theta_p = 1$, then sample an arm uniformly at random from the batch $B_p$ and change its mean reward to $1/2+\eta_p$ for a parameter $\eta_p$ defined as: 
	\begin{align}
		\eta_p := \paren{\frac{1}{6 \, C \cdot P}}^{15p}. \label{eq:gap-parameter} 
	\end{align}
	We refer to this arm as the \textbf{special arm} of batch $B_p$ (which only exists if $\Theta_p = 1$). 
\end{enumerate}
\item For the batch $B_{P+1}$: 
\begin{enumerate}
	\item Sample an arm uniformly at random from $B_{P+1}$ and change its mean reward to $1/2+\eta_{P+1}$ for $\eta_{P+1}$ as defined in \Cref{eq:gap-parameter}. 
	We refer to this arm as the \textbf{special arm} of batch $B_{P+1}$ (which always exists) and denote it by $\armstar_{P+1}$.  
\end{enumerate}
\end{enumerate}
\end{tbox}

\begin{figure}[h!]
	\centering
	\includegraphics[width=0.9\textwidth]{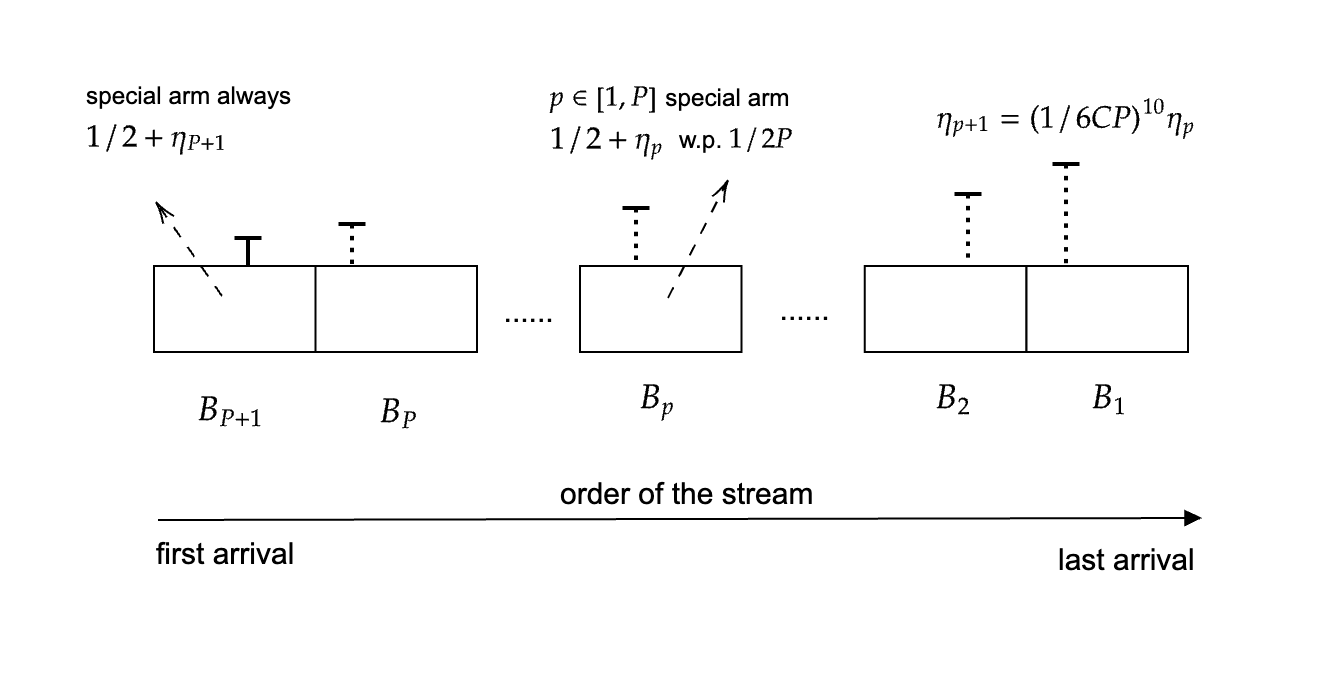}
	\caption{\label{fig:instance} An illustration of $\cD(P,C)$. The indices of batches are arranged in the \emph{reversed} order of the arrival of the stream. Batch $B_{P+1}$ always has an arm with $1/2+\eta_{P+1}$ mean reward, while other batches $p$ has its special arm with mean reward $1/2+\eta_{p}$ with probability $\frac{1}{2P}$.}
\end{figure}

To continue, we need some notation. We use $I \sim \cD(P,C)$ to denote an instance of streaming MAB sampled from the distribution $\cD(P,C)$. 
For any instance $I$, we define $\Delta(I)$ to denote the gap between the best and second best arm. Moreover, for any instance $I$ and integer $p \in [P+1]$, we define the following event: 
\begin{itemize}
	\item $\eventF(p)$: the variables $\Theta_1 = \Theta_2 = \cdots = \Theta_{p-1} = 0$ (shorthand, $\Theta_{<p} = 0$), but $\Theta_p = 1$ (with a slight abuse of notation, we take $\Theta_{P+1}$ to be a deterministic variable which is always $1$). 
\end{itemize}
Notice that the events $\eventF(1),\ldots,\eventF(P+1)$ are mutually exclusive and exactly one of them happens for any instance. We define the \textbf{special batch} of an instance $I$ as the batch $B_p$ for the value of $p \in [P+1]$ where $\eventF(p)$ happens. 

The following observation shows that the parameter $\Delta$ of an instance $I$ is basically determined by the choice of the special batch. 
\begin{observation}\label{obs:Delta}
	For any $I \sim \cD(P,C)$, if the special batch of $I$ is $B_p$ for $p \in [P+1]$, then 
	\[
		\frac{1}{2} \cdot \eta_p \leq \Delta(I) \leq \eta_p. 
	\]
\end{observation}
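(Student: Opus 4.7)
The plan is to argue directly from the construction of $\cD(P,C)$ by identifying the best and second-best arms on the event $\eventF(p)$, and then using the rapid decay of $\eta_p$ in $p$ to control the gap. The key observation is that $\eta_p = (1/(6CP))^{15p}$ is geometrically decreasing, so the ratio $\eta_q/\eta_p$ for $q > p$ is at most $(1/(6CP))^{15}$, which is much smaller than $1/2$ for any $C \geq 1$ and $P \geq 2$.

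First, I would condition on $\eventF(p)$ and enumerate the possible mean rewards present in the instance. The batches $B_1,\ldots,B_{p-1}$ contain only arms of mean $1/2$ (since $\Theta_{<p}=0$). Batch $B_p$ contains exactly one special arm of mean $1/2+\eta_p$, with all its other arms having mean $1/2$. Batches $B_{p+1},\ldots,B_P$ may each contain a special arm of mean $1/2+\eta_q$ for $q\in\{p+1,\ldots,P\}$, and $B_{P+1}$ deterministically contains $\armstar_{P+1}$ of mean $1/2+\eta_{P+1}$. Since $\eta_p > \eta_q$ for $q>p$, the unique arm of largest mean is the special arm of $B_p$, which identifies the best arm.

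Next, I would identify the second-best arm. Let $q^\star$ be the smallest index in $\{p+1,\ldots,P+1\}$ with $\Theta_{q^\star}=1$; this is well-defined since $\Theta_{P+1}=1$ by convention. Then the second-best mean is $1/2+\eta_{q^\star}$ if $q^\star \leq P+1$, and hence
\[
\Delta(I) \;=\; \eta_p - \eta_{q^\star},
\]
with the convention that when $p=P+1$ there is no such $q^\star$ and $\Delta(I)=\eta_{P+1}$. Either way the upper bound $\Delta(I)\leq \eta_p$ is immediate.

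For the lower bound, I would plug in the definition of $\eta$ to bound the ratio:
\[
\frac{\eta_{q^\star}}{\eta_p} \;=\; \Paren{\frac{1}{6CP}}^{15(q^\star-p)} \;\leq\; \Paren{\frac{1}{6CP}}^{15} \;\leq\; \frac{1}{12^{15}} \;\leq\; \frac{1}{2},
\]
using $q^\star \geq p+1$, $C\geq 1$, and $P\geq 2$. Therefore $\eta_{q^\star} \leq \tfrac{1}{2}\eta_p$, which yields $\Delta(I) \geq \eta_p - \tfrac{1}{2}\eta_p = \tfrac{1}{2}\eta_p$, as desired. There is no serious obstacle here: the observation is essentially bookkeeping, and the only point that requires any care is verifying that the geometric gap between consecutive $\eta_p$'s is enough to dominate the contribution of any later special arms to the second-best mean; the exponent $15$ in the definition of $\eta_p$ is chosen with exactly this slack in mind.
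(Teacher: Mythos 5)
Your proof is correct and follows essentially the same route as the paper's: identify the special arm of $B_p$ as the best arm, upper-bound $\Delta(I)$ by the gap to a flat arm, and lower-bound it by $\eta_p - \eta_{p+1} \geq (1-(1/(6CP))^{15})\eta_p \geq \eta_p/2$. Your only (harmless) refinement is pinning down the exact second-best arm via $q^\star$ rather than just bounding the second-best mean by $1/2+\eta_{p+1}$, which the paper does directly.
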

\begin{proof}
Note that by our construction, the best arm is the special arm of the special batch. Let $p$ be the index of the special batch, i.e. $B_{p}$ is the first batch such that $\Theta_{p}=1$. We prove the upper and lower bounds separately:
\begin{enumerate}
\item Upper bound: Observe that there exist (many) arms with mean reward $\frac{1}{2}$, which create a gap of $\eta_{p}$ w.r.t. the best arm. Since $\Delta(I)$ is the \emph{smallest} gap w.r.t. the best arm, we have $\Delta(I)\leq \eta_{p}$.
\item Lower bound: Observe that when $\Theta_{p}=1$, the (potentially existing) arm with the closest mean reward is with reward $\frac{1}{2}+\eta_{p+1}$. As such, the value of $\Delta(I)$ is at least 
\[ \eta_{p}-\eta_{p+1}=\paren{1-(\frac{1}{6CP})^{15}} \cdot \eta_{p}\geq \frac{1}{2}\cdot \eta_{p},\]
where the last inequality is obtained by using $C\geq 1$ and $P\geq 2$.
\end{enumerate}
Combining the above gives us the desired bounds. 
\end{proof}

By \Cref{obs:Delta}, the value of $\Delta$ varies based on the realization of $\eventF(p)$ with different $p$ values. As such, if an algorithm can achieve the optimal sample complexity bound without the knowledge of $\Delta$ given a priori, it must `adjust' its sample complexity to be competitive with $O(n/\eta^2_{p})$ \emph{if $\eventF(p)$ happens}. This requirement and its consequence can be formalized in our main technical theorem as follows.

\begin{theorem}\label{thm:main}
	For any integers $P\geq 2$, $C \geq 1$, the following is true. Let $\ALG$ be any deterministic $P$-pass streaming algorithm that uses a memory of $\memory{\ALG} \leq {n}/{(20000 P^3)}$ arms. 
	
	Suppose the following is true for $\ALG$  on instances of distribution $\cD(P,C)$ and every $p \in [P+1]$: 
	\[
		\Exp\bracket{\sample{\ALG} \mid \eventF(p)} \leq C \cdot \frac{n}{\eta_p^2}, 
	\]
	where the randomness is taken over the choice of the instance $I \sim \cD(P,C) \mid \eventF(p)$ and the arm pulls. 
	Then, the probability that $\ALG$ can output the best arm for $I \sim \cD(P,C)$ is strictly less than $999/1000$. 
\end{theorem}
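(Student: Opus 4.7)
The plan is to prove the theorem by induction on the pass index $j = 0, 1, \ldots, P$, maintaining the following invariant $\mathcal{I}_j$: conditioned on $\Theta_{\leq j} = 0$, after the first $j$ passes of $\ALG$ the algorithm is essentially blind about batch $B_{j+1}$, in the two senses that (a) its posterior probability of $\Theta_{j+1} = 1$ given the transcript and memory lies within a $\poly(1/P)$ additive error of the prior $1/(2P)$, and (b) conditioned on $\Theta_{j+1} = 1$, the special arm $\armstar_{j+1}$ is not stored in memory except with probability $\poly(1/P)$. The base case $j = 0$ is immediate since no samples have been taken.

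For the inductive step $\mathcal{I}_j \Rightarrow \mathcal{I}_{j+1}$, I would condition on $\Theta_{\leq j+1} = 0$ and carefully analyze pass $j+1$. The sample-complexity assumption $\Ex[\sample{\ALG} \mid \eventF(j+1)] \leq Cn/\eta_{j+1}^2$, combined with $\mathcal{I}_j$ (which says that through pass $j$ the algorithm cannot distinguish $\eventF(j+1)$ from $\eventF(p)$ with $p > j+1$, so its distribution of samples is essentially the same in both cases), lets me conclude via a Markov-style argument that with high probability the total samples spent on batch $B_{j+1}$ through pass $j+1$ are at most $O(Cn/\eta_{j+1}^2)$. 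Since the special arm, when it exists, is a uniformly random arm among the $b = n/(P+1)$ arms in $B_{j+1}$, and since the memory bound $n/(20000P^3)$ forces the algorithm to discard most arms from $B_{j+1}$, an arm-trapping argument extending the technique of \cite{AWneurips22} reduces the batch problem to a single-arm problem, showing the special arm receives only $O(CP/\eta_{j+1}^2)$ samples in expectation. Applying \Cref{lem:arm-learn} with $\rho = 1/(2P)$ and $\beta = \eta_{j+1}$ then shows that the posterior of $\Theta_{j+1}$ barely shifts, giving part (a) of $\mathcal{I}_{j+1}$; part (b) follows from the same arm-trapping argument since the identity of the retained arms is essentially uncorrelated with the identity of the random special arm.

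To conclude, I apply $\mathcal{I}_{p-1}$ for each $p \in [P+1]$: on the event $\eventF(p)$, the unique best arm is $\armstar_p$, but by $\mathcal{I}_{p-1}$ the algorithm has $\armstar_p$ in memory with probability $\poly(1/P)$ and cannot meaningfully distinguish $\Theta_p = 1$ from $\Theta_p = 0$, so it cannot output $\armstar_p$ except with small probability. Summing these failure contributions across the mutually exclusive events $\eventF(1), \ldots, \eventF(P+1)$ (which together have probability $1$), the overall failure probability exceeds $1/1000$, as required.

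The main obstacle I anticipate is the quantitative bookkeeping of the slack in $\mathcal{I}_j$: each inductive step introduces $\poly(1/P)$ errors from \Cref{lem:arm-learn} and from the arm-trapping step, and these errors must not compound past the failure threshold over $P$ iterations. This is precisely why $\eta_p = (1/(6CP))^{15p}$ decays so rapidly and why the memory bound takes the form $n/(20000P^3)$: the exponent $15$ in the gap and the large constant factor in the memory bound supply enough per-pass slack to absorb polynomial losses at each step. A secondary technical challenge is the arm-trapping step itself, which must carefully argue that the algorithm's choice of which arms from $B_{j+1}$ to retain is essentially independent of the identity of the unknown random special arm, so that conditioning on which arms are retained reveals little about $\Theta_{j+1}$ and enables the reduction from a batch of $b$ arms to the single-arm setting.
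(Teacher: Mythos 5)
Your high-level shape (induct over passes, maintain an ``obliviousness'' invariant, use arm-trapping for the memory constraint and \Cref{lem:arm-learn} for the knowledge constraint) matches half of the paper's argument, but the engine that actually produces the contradiction is missing, and the quantitative core of your inductive step does not go through. The problem is which batch the sample budget constrains. Under $\eventF(j+1)$ the budget is $C n/\eta_{j+1}^2$, and that is \emph{more than enough} to learn $\Theta_{j+1}$: the threshold below which the posterior on a batch with gap $\eta_{j+1}$ stays put is roughly $n/(\poly(P)\,\eta_{j+1}^2)$ (this is what \Cref{lem:sample-success-tradeoff} gives with $k=n/(P+1)$, $\alpha=1/2P$, $\gamma=\poly(1/P)$), which is a $\poly(P)$ factor \emph{below} $Cn/\eta_{j+1}^2$. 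So your conclusion ``$O(Cn/\eta_{j+1}^2)$ samples on $B_{j+1}$, hence the posterior of $\Theta_{j+1}$ barely shifts'' is false; indeed the upper-bound algorithm of Jin et al.\ does learn $\Theta_p$ during pass $p$ within exactly this budget. The construction's hardness is that $Cn/\eta_{j+1}^2$ is far too small to learn about the batches $B_{j+2},\ldots,B_{P+1}$ (gaps $\leq \eta_{j+2}=\eta_{j+1}\cdot(6CP)^{-15}$), and those batches arrive \emph{earlier} in the stream. What you must bound is $\sampleb{j+1}$, the pulls spent on $B_{>j+1}$ during the first $j+1$ passes, and the only way to transfer the conditional budget $\Exp[\sample{\ALG}\mid\eventF(j+1)]\leq Cn/\eta_{j+1}^2$ onto that quantity is the observation that these pulls are committed before the algorithm reaches $B_{j+1}$ in pass $j+1$ and are therefore (conditionally) independent of $\Theta_{j+1}$ (the paper's \Cref{clm:batch-q-sample-irre} and \Cref{lem:high-sample-extra-cond}). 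This yields the dichotomy the paper calls conservative vs.\ radical: either $\Exp[\sampleb{p}]\lesssim n/(\poly(P)\eta_{p+1}^2)$ in every pass (so obliviousness about all batches $q>p$ propagates), or at the first radical pass $\tilde p$ one gets $\Exp[\sample{\ALG}\mid\eventF(\tilde p)]\gg Cn/\eta_{\tilde p}^2$, contradicting the hypothesis. Your proposal never derives this contradiction, so nothing forces the algorithm to stay oblivious.

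The second gap is in the conclusion. $\mathcal{I}_{p-1}$ concerns the state after pass $p-1$, but the output is produced after pass $P$; nothing in your invariant prevents the algorithm from spending $\approx n/(\poly(P)\eta_p^2)$ pulls on $B_p$ when it arrives in pass $p$, identifying $\armstar_p$, storing it, and answering correctly -- and this is affordable under $\eventF(p)$ for every $p\leq P$. So one cannot ``sum failure contributions across $\eventF(1),\ldots,\eventF(P+1)$''; the algorithm may well succeed on all of $\eventF(1),\ldots,\eventF(P)$. The paper's failure event is concentrated on $\eventF(P+1)$ (probability $\approx(1-\tfrac{1}{2P})^P$, comfortably above $1/1000$): the always-present special arm $\armstar_{P+1}$ sits in the first-arriving batch with the smallest gap, a conservative algorithm never spends enough on $B_{P+1}$ in any of its $P$ passes to trap or detect it, and memory-obliviousness at the end of pass $P$ then directly precludes outputting it. Repairing your argument essentially requires importing both of these pieces, at which point it becomes the paper's proof.
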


\Cref{thm:main} implies that for a streaming algorithm to find the best arm with a good probability and without the a priori knowledge of $\Delta$, it cannot simultaneous achieve $i).$ a low number of passes, $ii).$ a low memory, and $iii).$ the ability to `adjust' the sample complexity to compete with the optimal bound. As such, combining \Cref{thm:main} with \Cref{obs:Delta} formalizes our \Cref{rst:main-result} in the introduction.

\begin{corollary}[Formalization of \Cref{rst:main-result}]
	\label{cor:main}
	For any $\DeltaT>0$, there exists a family of streaming MABs instances $\dist$ in which every instance has $\Delta\geq \DeltaT$, such that any streaming algorithm (deterministic or randomized) that finds the best arm with an \emph{expected} sample complexity of $O(n/\Delta^2)$, a success probability of at least $1999/2000$, and a
	space of $o(n/\log^3{(1/\DeltaT)})$ arms requires $\Omega(\frac{\log{(1/\DeltaT)}}{\log\log{(1/\DeltaT)}})$ passes over the stream. 
\end{corollary}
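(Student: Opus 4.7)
The plan is to reduce \Cref{cor:main} to \Cref{thm:main} by instantiating the distribution $\cD(P,C)$ with parameters $P$ and $C$ chosen as explicit functions of $\widetilde{\Delta}$ and the implicit constant in the hypothesised $O(n/\Delta^2)$ sample complexity. Throughout I assume $\widetilde{\Delta}$ is small enough that $P\ge 2$ and the stated pass lower bound is non-trivial, since otherwise the claim is vacuous.

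\emph{Parameter selection.} Let $C_0$ denote the implicit constant in the per-instance bound $\Exp[\sample{\ALG}]\le C_0\cdot n/\Delta^2$. Set $P := \lfloor c\cdot \log(1/\widetilde{\Delta})/\log\log(1/\widetilde{\Delta})\rfloor$ for a small absolute constant $c>0$, and $C := 16(P+1)C_0$. Since $\log(CP)=O(\log\log(1/\widetilde{\Delta}))$, the identity $\log(1/\eta_{P+1}) = 15(P+1)\log(6CP)$ gives $\log(1/\eta_{P+1})\le \log(1/\widetilde{\Delta})$ for $c$ small enough, so $\eta_{P+1}\ge \widetilde{\Delta}$. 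Define $\dist := \cD(P,C)$: by \Cref{obs:Delta} every instance in $\dist$ has $\Delta\ge \eta_{P+1}/2\ge\widetilde{\Delta}/2$, and the factor of $2$ is absorbed by running the argument with $\widetilde{\Delta}/2$ in place of $\widetilde{\Delta}$.

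\emph{Derandomization via Markov's inequality.} Given a randomized algorithm $\ALG$ satisfying the corollary's hypotheses, view it as a distribution over deterministic algorithms $\{\ALG_r\}_r$. \Cref{obs:Delta} yields $\Delta\ge\eta_p/2$ on $\eventF(p)$, so the per-instance sample bound gives $\Exp_{I,r}[\sample{\ALG_r(I)}\mid\eventF(p)]\le 4C_0\cdot n/\eta_p^2$ for every $p\in[P+1]$. Applying Markov's inequality to the failure probability of $\ALG_r$ on $\dist$ (whose $r$-average is at most $1/2000$) shows that at least a $1/2$-fraction of seeds yields $\ALG_r$ with success probability at least $999/1000$ on $\dist$; and applying Markov's inequality to the conditional expected sample complexity of $\ALG_r$ on each $\eventF(p)$ at threshold $C\cdot n/\eta_p^2=16(P+1)C_0\cdot n/\eta_p^2$, then union-bounding over the $P+1$ events, shows that at least a $3/4$-fraction of seeds meets every threshold simultaneously. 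The two sets of seeds have a positive-measure intersection, producing a seed $r^\ast$ such that the deterministic $\ALG_{r^\ast}$ succeeds with probability at least $999/1000$ on $\dist$ and satisfies $\Exp_I[\sample{\ALG_{r^\ast}(I)}\mid\eventF(p)]\le C\cdot n/\eta_p^2$ for every $p$.

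\emph{Applying \Cref{thm:main} and the main obstacle.} The memory hypothesis $\memory{\ALG}=o(n/\log^3(1/\widetilde{\Delta}))$ combined with $P^3\le\log^3(1/\widetilde{\Delta})$ yields $\memory{\ALG_{r^\ast}}\le n/(20000P^3)$ for small enough $\widetilde{\Delta}$, so $\ALG_{r^\ast}$ satisfies every hypothesis of \Cref{thm:main} on $\cD(P,C)$. The theorem then forces the success probability of $\ALG_{r^\ast}$ to be strictly below $999/1000$, contradicting the derandomization step; hence no algorithm with the stated guarantees can use fewer than $P=\Omega(\log(1/\widetilde{\Delta})/\log\log(1/\widetilde{\Delta}))$ passes. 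The most delicate point is that $C$ must absorb the $(P+1)$-factor coming from the Markov union bound while still keeping $\eta_{P+1}\ge\widetilde{\Delta}$; because $C$ enters $\eta_{P+1}$ only through $\log(CP)$ in the exponent, a $\poly(P)$ inflation of $C$ costs only a constant factor in the final pass lower bound, making the apparent circularity benign. A secondary subtlety is the calibration of the numerical gap between the assumed $1999/2000$ and the $999/1000$ appearing in \Cref{thm:main}: this gap is exactly what is needed for a single Markov step on the failure probability to supply a deterministic seed $r^\ast$ compatible with the (Markov-plus-union-bound) sample complexity requirement.
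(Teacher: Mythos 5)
Your proposal is correct and takes essentially the same route as the paper: instantiate $\cD(P,C)$ with $P=\Theta\paren{\frac{\log(1/\DeltaT)}{\log\log(1/\DeltaT)}}$ and $C$ a suitable multiple of the hidden constant, verify the hypotheses of \Cref{thm:main} via \Cref{obs:Delta}, and derandomize by Markov. If anything, your derandomization is more careful than the paper's: you Markov-bound each conditional expectation $\Exp\bracket{\sample{\ALG}\mid \eventF(p)}$ separately and union-bound over the $P+1$ events (inflating $C$ by a factor of $P+1$, which as you observe is harmless since $C$ enters $\eta_{P+1}$ only through $\log(6CP)$), whereas the paper's Yao-style argument only Markov-bounds the unconditional expected sample complexity before invoking \Cref{thm:main}.
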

\begin{proof}
We first prove the statement for deterministic algorithms on the distribution $\dist(P,C)$ with success probability $999/1000$. 
Our proof strategy is as follows. For any $\DeltaT$ and any algorithm that uses $C' \cdot \frac{n}{\Delta^2(I)}$ arm pulls for arbitrary constant $C'$, we pick appropriate $P$ based on $\DeltaT$ and $C$ based on $C'$. Then, we sample an instance from $\dist(P,C)$, and argue that the properties in \Cref{cor:main} matches the properties prescribed in \Cref{thm:main} -- in particular, if the algorithm always uses $C' \cdot \frac{n}{\Delta^2(I)}$ arm pulls in expectation, the expected arm pulls of $\Exp[\sample{\ALG} \mid \eventF(p)]$ is at most $4C' \cdot \frac{n}{\eta^2_{p}}$. Finally, it turns out that the value of $P$ is at least $\Omega\paren{\frac{\log{(1/\DeltaT})}{\log\log{(1/\DeltaT)})}}$ by this construction, which gives the desired lower bound.

We now formalize the above strategy. For a streaming algorithm that uses $C' \cdot \frac{n}{\Delta^2(I)}$ samples, we pick $C=4\cdot C'$ and use the distribution $\dist(P,C)$ as the adversarial family of instances. We further choose $P = \Omega(\frac{\log{(1/\DeltaT)}}{\log\log{(1/\DeltaT)}})$, and observe the following properties:
\begin{itemize}
\item If the event $\eventF(p)$ happens in $\dist(P,C)$, the algorithm takes at most $C\cdot \frac{n}{\eta^2_{p}}$ arm pulls. To see this, note that by the upper bound of the expected number of samples, and conditioning on $\eventF(p)$, there is
\[
	\Exp\bracket{\sample{\ALG} \mid \eventF(p)} \leq C'\cdot \frac{n}{\Delta(I)^2} = \frac{C}{4} \cdot \frac{n}{\Delta(I)^2} \leq \frac{C}{4} \cdot \frac{n}{(\eta_p/2)^2} = C \cdot \frac{n}{\eta_p^2}, 
\]
where the first inequality follows from ~\Cref{obs:Delta}.
\item For any $p\in[P+1]$, there is $\Delta \geq \DeltaT$ and $\eta_{p}> 2\cdot \DeltaT$, i.e., 
\[
	\Delta \geq \frac{\eta_p}{2} \geq \frac{\eta_{P+1}}{2} = \frac{1}{2}\cdot \paren{\frac{1}{6 \, C \cdot P}}^{15P+15} > \DeltaT, 
\]
where the last inequality is obtained by plugging in $P = \Omega\paren{\frac{\log{(1/\DeltaT})}{\log\log{(1/\DeltaT)})}}$.
\end{itemize}

Note that the above conditions imply $(i).$ the expected number of samples follows the upper bound of \Cref{thm:main}; $(ii).$ the memory follows the upper bound of \Cref{thm:main} since $o(n/(\log{(1/\DeltaT)})^3)=o(n/P^3)$ by the choice of $P$; and $(iii).$ the condition of $\Delta \geq \DeltaT$ is satisfied. 
As such, we can apply \Cref{thm:main}, and show that the algorithm must make at least $P=\Omega(\frac{\log{(1/\tilde{\Delta})}}{\log\log{(1/\tilde{\Delta})}})=\Omega(\frac{\log{(1/\Delta)}}{\log\log{(1/\Delta)}})$ passes over the stream for any instance in the family, which proves the corollary for deterministic algorithms.

We now extend the result to randomized algorithms. This is a standard application of Yao's minimax principle, and we provide the proof for completeness. Assume for the purpose of contradiction that there exists a randomized algorithm with a success probability of $1999/2000$ and the same restrictions as the deterministic algorithms. 
Let $\mathbf{R}$ be the set of internal randomness, and define $r\in \mathbf{R}$ as a \emph{good} random string if
$\Pr(\text{\ALG returns the wrong arm} \mid r) \leq 1/1000$,
where the randomness is over the inputs and the arm pulls. We say $r$ is a \emph{bad} random string if the above inequality does not hold. By the success probability of the algorithm, there is
\begin{align*}
	\exprand{r \in \mathbf{R}}{\Pr(\text{\ALG returns the wrong arm} \mid r)} \leq \frac{1}{2000}.
\end{align*}
Therefore, by the Markov bound, we have 
\begin{align*}
	\Pr(r\in\mathbf{R} \text{ is good}) \geq \frac{1}{2}.
\end{align*}
As such, the expected sample complexity can be written as
\begin{align*}
	\expect{\sample{\ALG}} &\geq \expect{\sample{\ALG}\mid r\in\mathbf{R} \text{ is good}}\cdot \Pr(r\in\mathbf{R} \text{ is good})\\
	&= \frac{1}{2}\cdot \expect{\sample{\ALG}\mid r\in\mathbf{R} \text{ is good}}.
\end{align*}
As such, by the expected sample complexity $\expect{\sample{\ALG}}\leq O(n/\Delta^2)$ of the randomized algorithm, we have $\expect{\sample{\ALG}\mid r\in\mathbf{R} \text{ is good}} \leq O(n/\Delta^2)$. Therefore, for any \emph{good} choice of $r$, we obtain a \emph{deterministic} algorithm that uses $O(n/\Delta^2)$ arm pulls, a success probability of at least $999/1000$, and the memory restriction of $o(n/(\log{(1/\DeltaT)})^3)$ -- which reaches a contradiction with the lower bound for the deterministic algorithm.
\end{proof}

Note that in \Cref{cor:main}, the memory is fixed, but the sample complexity and the number of passes are allowed to be random. 
As long as $\DeltaT \geq 2^{n^{1/3-\Omega(1)}}$, the result matches the upper bound of \cite{JinH0X21} up to an exponentially smaller term.

 



 The rest of this paper is dedicated to the proof of \Cref{thm:main}. In the next section, we state some auxiliary information-theoretic lemmas in the context of finding best arm, or rather ``trapping'' it, outside the streaming model. 
Afterwards, we present the main part of our argument that uses these lemmas to establish a streaming lower bound and prove \Cref{thm:main}.

\begin{remark}
We use a success probability of $\frac{1999}{2000}$ in \Cref{rst:main-result} for technical convenience. For lower bounds with a lower success probability, we can apply the standard reduction argument that ``boosts'' the success probability. Concretely, suppose we have a $P$-pass algorithm with $s$ samples, $m$-arm memory, and a success probability $\frac{1}{2}+\eps$ for any $\eps=\Omega(1)$. In our distribution $\cD(P,C)$, we can obtain the value of $\Delta$ by the end of pass $P$. Therefore, we can run $O(1)$ streams in parallel, and spend $O(\frac{1}{\Delta^{2}})$ samples in the end to return the arm with the best empirical rewards. Such an algorithm has a success probability of ${1999}/{2000}$, an $O(m)$-arm memory, and uses $O(s)$ samples. Hence, the asymptotical sample-memory-pass trade-off remains valid for algorithms with $\frac{1}{2}+\eps$ success probability for any $\eps=\Omega(1)$. 
\end{remark}

\newcommand{\istar}{\ensuremath{i^*}\xspace}


\section{Auxiliary Lemmas for Pure Exploration in MABs}
\label{sec:tech-lemma}
We present two auxiliary lemmas in this section that are needed for our main proof. These lemmas concern MABs for \emph{offline} algorithms, i.e., without any streaming restriction, and they can be used in the streaming setting with arbitrary pass and $\ja$ index (see \Cref{subsec:model} for the detailed discussion).
The proofs are rather standard
application of known ideas. However, we are not aware of an exact formulation of these lemmas in prior work that we need in our main proofs in the subsequent section; thus, for completeness, we present and prove these lemmas in this section. 

The first lemma is a generalization of the arm-trapping lemma of~\cite{AWneurips22} to the case when success probability can be quite small. 



\begin{lemma}[low-probability arm-trapping lemma]
\label{lem:arm-trapping-new}
Suppose we have a set of $k \geq 1$ arms with mean reward $1/2$ and we pick one of them uniformly at random -- called the \textbf{special arm} -- and increase its reward to $1/2+\beta$ for some $\beta > 0$. 

For any parameter $\gamma \in (0, \frac{1}{2}]$, any algorithm that outputs a set $S$ of $(\gamma\cdot k/12)$ arms such that with probability at least $\gamma$ the special arm belongs to $S$ 
requires $\frac{1}{300} \cdot \frac{\gamma^3}{\beta^2} \cdot k$ arm pulls. 
\end{lemma}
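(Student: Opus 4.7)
The plan is to reduce arm-trapping to the single-arm reward identification task of \Cref{lem:arm-identify}. Given an unknown arm $X$ that is $\bern{1/2+\beta}$ with prior $\rho = 1/2$ and $\bern{1/2}$ otherwise, the distinguisher samples a uniformly random position $i^* \in [k]$, embeds $X$ at $i^*$, fills the remaining $k-1$ positions with locally simulated $\bern{1/2}$ arms, precomposes $\ALG$ with a uniformly random permutation of positions (to make $\ALG$ effectively symmetric), runs $\ALG$ to obtain the trap set $S$, and finally returns ``$X$ is $\bern{1/2+\beta}$'' iff $i^* \in S$.

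For the success analysis: when $X$ truly has mean $1/2+\beta$, the input to $\ALG$ is exactly the lemma's input distribution (a uniformly-placed special arm with gap $\beta$), so $\Pr[i^* \in S] \geq \gamma$ by hypothesis; when $X$ has mean $1/2$, all arms are i.i.d.\ and the random permutation forces $\Pr[i^* \in S] = |S|/k = \gamma/12$. Combining gives distinguisher success at least $(1-\rho) + \rho \gamma - (1-\rho)\gamma/12 = 1/2 + 11\gamma/24$, i.e., an $\eps' = \Theta(\gamma)$ advantage over the trivial ``always output $\bern{1/2}$'' baseline.

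For the pull count: by the uniform placement of $X$ (and exchangeability from the random permutation), the expected pulls on $X$ in the reduction are at most $T/k$, where $T$ is the expected total pulls of $\ALG$ (after a preliminary cap to handle the ``all-$\bern{1/2}$'' input, which lies outside $\ALG$'s promise). By Markov's inequality, a modified distinguisher that aborts once pulls on $X$ exceed $\tau = O(T/(k\eps'))$ uses at most $\tau$ pulls on $X$ in the worst case while retaining $\Omega(\eps')$ advantage. Plugging this into \Cref{lem:arm-identify} with $\rho = 1/2$ and $\eps = \Theta(\gamma)$ yields $\tau \geq \Omega(\eps^2/(\rho^2 \beta^2)) = \Omega(\gamma^2/\beta^2)$, and therefore $T = \Omega(k \tau \gamma) = \Omega(\gamma^3 k/\beta^2)$, matching the claimed $1/300$ constant after routine bookkeeping.

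The main technical difficulty is the passage from \emph{expected} pulls on $X$ (as given by the symmetry/random-position argument) to a \emph{worst-case} bound (as required by \Cref{lem:arm-identify}). Markov's inequality bridges the gap, but doing so costs a $\Theta(1/\gamma)$ factor, which is precisely what turns the natural $\eps^2 = \Theta(\gamma^2)$ lower bound of \Cref{lem:arm-identify} into the stated $\gamma^3$ exponent. A secondary subtlety is that $\ALG$'s behavior on the all-$\bern{1/2}$ input is not controlled by its promise; this is handled by a second Markov step that caps total pulls at a constant multiple of $T$, ensuring the $T/k$ bound on expected per-arm pulls is valid in both cases while losing at most a constant factor in the distinguisher's advantage.
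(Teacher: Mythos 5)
Your proposal is correct and follows essentially the same route as the paper's proof: a reduction to \Cref{lem:arm-identify} with $\rho=1/2$, embedding the unknown arm at a uniformly random position among $k-1$ flat dummies, using symmetry in the all-flat case to bound the expected pulls on the embedded arm by $T/k$, and a Markov-based abort to convert this into the worst-case pull bound that \Cref{lem:arm-identify} requires, which is exactly where the extra factor of $\gamma$ (hence the $\gamma^3$) enters in the paper as well. The only (harmless) deviations are that you drop the paper's biased-coin preamble---unnecessary, since \Cref{lem:arm-identify} only concerns the prior-averaged success probability---and you add a preliminary total-pull cap to control $\ALG$ on the off-promise all-flat input, a point the paper's proof leaves implicit.
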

\begin{proof}
We give a reduction proof in the `direct-sum' style in the same spirit of \cite{AWneurips22}. In particular, we show that if there exists an algorithm with less than $\frac{1}{300} \cdot \frac{\gamma^3}{\beta^2} \cdot k$ sample that `traps' the special arm, we can turn it into an algorithm that contradicts the sample lower bound in \Cref{lem:arm-identify} with $\rho=\frac{1}{2}$ and $\eps=\frac{\gamma}{6}$ (which would require $\frac{1}{36}\cdot \frac{\gamma^2}{\beta^2}$ arm pulls by \Cref{lem:arm-identify}). The reduction goes as follows.

\begin{tbox}
\textbf{Inputs:} 
\begin{enumerate}[label=\alph*)]
\item A single $\armtilde$ with the mean reward following the distribution in \Cref{lem:arm-identify} with $\rho=\frac{1}{2}$; 
\item An algorithm $\ALG$ that outputs a set $S$ of $(\gamma\cdot k/2)$ arms such that $(i).$ with probability at least $\gamma$, the special arm belongs to $S$; $(ii).$ $\ALG$ uses less than $\frac{1}{300} \cdot \frac{\gamma^3}{\beta^2} \cdot k$ arm pulls.
\end{enumerate}

\medskip

\textbf{Procedure:} 

\begin{enumerate}
\item\label{line:blind-no-case} With probability $(\frac{1}{2}-\frac{\gamma}{3})$, output \emph{``reward of $\armtilde$ is $\frac{1}{2}+\beta$''}.
\item\label{line:follow-alg-case} With probability $(\frac{1}{2}+\frac{\gamma}{3})$, output with the following procedure:
\begin{enumerate}[label=(\roman*)]
\item Create $k-1$ `dummy arms' and let their reward mean be $\frac{1}{2}$. 
\item Put $\armtilde$ uniformly at random on index $\istar$ among the $k$ arms, and run $\ALG$.
\item\label{line:sample-break} If $\ALG$ uses more than $\frac{1}{37}\cdot \frac{\gamma^2}{\beta^2}$ arm pulls on $\armtilde$, abort $\ALG$ and output \emph{``reward of $\armtilde$ is $\frac{1}{2}+\beta$''}.
\item\label{line:trap-check} Otherwise, if the output of $S$ contains index $\istar$, output \emph{``reward of $\armtilde$ is $\frac{1}{2}+\beta$''}; if the output of $S$ does \emph{not} contain index $\istar$, output \emph{``reward of $\armtilde$ is $\frac{1}{2}$''}.
\end{enumerate}
\end{enumerate}
\end{tbox}

It is straightforward to see that the algorithm never uses more than $\frac{1}{37}\cdot \frac{\gamma^2}{\beta^2}$ arm pulls on the special arm, as we directly terminate the process whenever it uses more arm pulls. It remains to verify the correctness of distinguishing the cases.

\paragraph{Case A): the true reward of $\armtilde$ is $1/2$.} Due to Line~\ref{line:blind-no-case}, there is a probability of $\frac{1}{2}-\frac{\gamma}{3}$ that the reduction never outputs ``reward of $\armtilde$ is $\frac{1}{2}$''. Nevertheless, we will eventually show that when the algorithm does \emph{not} enter Line~\ref{line:blind-no-case}, the marginal correct probability is high enough to guarantee an overall $\frac{1}{2}+O(\gamma)$ correct probability.  

Let $s_{i}$ be the number of samples that uses on an arm with index $i$. Note that in this way, $s_{\istar}$ stands for the number of samples used for $\armtilde$. Since the index of $\istar$ is chosen uniformly at random, there is
\begin{align*}
	\expect{s_{\istar}} & = \sum_{i=1}^{k} \Pr\paren{\istar=i}\cdot \expect{s_{i^{*}}\mid \istar=i} \\
	& = \frac{1}{k}\cdot \sum_{i=1}^{k} \expect{s_{i}} \\
	& =  \frac{1}{k}\cdot \expect{\sum_{i=1}^{k}  s_{i}} \tag{by linearity of expectation}\\
	& \leq \frac{1}{300}\cdot \frac{\gamma^3}{\beta^2} \tag{by the sample upper bound of $\ALG$}.
\end{align*}
Therefore, by a Markov bound, we can upper-bound the probability for the special arm to use more than $\frac{1}{37}\cdot \frac{\gamma^2}{\beta^2}$ arm pulls by
\begin{align*}
\Pr\paren{s_{\istar}\geq \frac{1}{37}\cdot \frac{\gamma^2}{\beta^2}}\leq \frac{\gamma}{8}.
\end{align*}
As such, the probability for the reduction to false report reward as $\frac{1}{2}+\beta$ from Line~\ref{line:sample-break} is at most $\frac{\gamma}{5}$. Furthermore, in line~\ref{line:trap-check}, since the arms are identical random variables and the index $\istar$ is chosen uniformly at random, there is
\begin{align*}
\Pr\paren{\text{$S$ contains index } {\istar}} & = \Pr\paren{\text{$S$ contains index } {i},\, \forall i} = \frac{\card{S}}{k} = \frac{\gamma}{8}.
\end{align*}
Therefore, the probability for the reduction to falsely output ``reward of $\armtilde$ is $\frac{1}{2}+\beta$'' through the output of $\ALG$ on line~\ref{line:trap-check} is at most $\frac{\gamma}{8}$. As such, we have
\begin{align*}
\Pr\paren{\text{$\ALG$ outputs ``reward is $\frac{1}{2}$''} \mid \text{reward is $\frac{1}{2}$, \, Line~\ref{line:follow-alg-case} happens}} \geq 1-\frac{\gamma}{8}-\frac{\gamma}{8} = 1-\frac{\gamma}{4}. \tag{by union bound}
\end{align*}
As such, the probability for the reduction to succeed when the special arm is with reward $\frac{1}{2}$ is at least:
\begin{align*}
& \Pr\paren{\text{$\ALG$ outputs ``reward is $\frac{1}{2}$''}\mid \text{reward is $\frac{1}{2}$}} \\
& \quad = \Pr\paren{\text{$\ALG$ outputs ``reward is $\frac{1}{2}$''} \mid \text{reward is $\frac{1}{2}$, \, Line~\ref{line:follow-alg-case} happens}}\cdot \Pr\paren{\text{Line~\ref{line:follow-alg-case} happens}}\\
& \quad \geq (\frac{1}{2}+\frac{\gamma}{3})\cdot (1- \frac{\gamma}{8}- \frac{\gamma}{8})\\
&\quad \geq \frac{1}{2}+\frac{\gamma}{6}. \tag{using $\gamma\leq 1/2$}
\end{align*}

\paragraph{Case B): the true reward of $\armtilde$ is $1/2+\beta$.} In this case, there is a probability of $(\frac{1}{2}-\frac{\gamma}{3})$ that the algorithm simply outputs ``reward of $\armtilde$ is $\frac{1}{2}+\beta$'' from Line~\ref{line:blind-no-case}. Furthermore, in the case of Line~\ref{line:follow-alg-case}, the reduction succeed with a probability that is at least as large as $\gamma$ by the guarantee of the trapping algorithm $\ALG$. As such, the success probability in this case is at least
\begin{align*}
\Pr\paren{\text{$\ALG$ outputs ``reward is $\frac{1}{2}+\beta$''}\mid \text{reward is $\frac{1}{2}+\beta$}} &\geq \frac{1}{2}-\frac{\gamma}{3} + (\frac{1}{2}-\frac{\gamma}{3})\cdot \gamma\\
&\geq \frac{1}{2} + \frac{\gamma}{6}.
\end{align*}

Summarizing the cases of $A)$ and $B)$ establishes the correctness of the reduction for $\rho=\frac{1}{2}$ and $\eps=\frac{\gamma}{6}$. By \Cref{lem:arm-identify}, the sample complexity has to be at least $\frac{1}{36}\cdot \frac{\gamma^2}{\beta^2}$, which contradicts the $\frac{1}{37}\cdot \frac{\gamma^2}{\beta^2}$ sample complexity and proves the lemma.
\end{proof}

The second lemma uses a distribution similar to \Cref{lem:arm-trapping-new}, albiet the special arm is now allowed to be ``flat'' -- with mean reward $\frac{1}{2}$ -- with probability $1-\alpha$. The lemma says that if the number of used arm pulls is small, the internal distribution (the ``knowledge'') of the algorithm on whether the special arm is ``flat'' remains close to the original, i.e., with probability $\sim (1-\alpha)$.

\begin{lemma}[A sample-knowledge trade-off lemma]
\label{lem:sample-success-tradeoff}
Consider the following distribution $\dist$ on $k \geq 1$ arms for some parameters $\alpha,\beta > 0$ and $\alpha<\frac{1}{2}$: 
\begin{itemize}
	\item \textbf{No} case: with probability $\alpha$, all except for one uniformly at random chosen arm have mean reward $1/2$, while the chosen arm have reward $1/2+\beta$;
	\item \textbf{Yes} case: with probability $1-\alpha$, all the arms have mean reward $1/2$.
\end{itemize}
Suppose we have an algorithm that given an instance $I$ sampled from this distribution $\distmu$ makes at most $\frac{1}{100}\cdot \frac{\gamma^{2} \cdot k}{\alpha \cdot \beta^2}$ arm pulls for some $\gamma \in (0,\frac{1}{5}]$. 
Let $\sPi$ and $\Pi$ be the random variable and the realization of the transcript. 
Then, with probability at least $(1-2 \, \gamma^{1/2})$ over the randomness of $\sPi$,
\begin{align*}
	&\Pr_{I}\paren{\text{$I$ is a \textbf{No} case} \mid \sPi = \Pi} \in [\alpha - 2\cdot\gamma^{1/2}, \alpha + 2\cdot \gamma^{1/2}]; \\
	&\Pr_{I}\paren{\text{$I$ is a \textbf{Yes} case} \mid \sPi = \Pi} \in [1-\alpha - 2\cdot \gamma^{1/2}, 1-\alpha + 2\cdot \gamma^{1/2}]. 
\end{align*}
\end{lemma}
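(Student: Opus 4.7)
The plan is to establish the lemma via a likelihood-ratio analysis that extends the single-arm martingale argument behind \Cref{lem:arm-learn} to the multi-arm setting of $\distmu$. I would first set up the right quantity: for each $i \in [k]$, let $P^{no}_i$ denote the distribution of the transcript $\sPi$ when arm $i$ is the special biased arm and all other arms have mean $1/2$, and let $P^{yes}$ denote the distribution when all $k$ arms are $\bern{1/2}$. Writing the posterior in terms of the likelihood ratio $W(\Pi) := P^{no}(\Pi)/P^{yes}(\Pi) = \tfrac{1}{k}\sum_{i=1}^k W_i(\Pi)$ with $W_i := P^{no}_i/P^{yes}$, one obtains $\Pr(\textnormal{\textbf{No}} \mid \sPi=\Pi) = \alpha W(\Pi)/(\alpha W(\Pi)+1-\alpha)$, and a short calculus check shows that $|W-1|\leq \gamma^{1/2}$ already implies $|\textnormal{posterior}-\alpha|\leq 2\gamma^{1/2}$. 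Hence the task reduces to showing that $|W-1|\leq \gamma^{1/2}$ with probability at least $1-2\gamma^{1/2}$ under the mixture $P = (1-\alpha)P^{yes} + \alpha P^{no}$.

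Next, I would bound the first two moments of $W$ under $P^{yes}$ by a martingale argument. Setting $Y_l := 2X_l - 1 \in \{-1,+1\}$ for the outcome of the $l$-th arm pull, one has $W_i = \prod_{l\,:\,\mathrm{arm}(l)=i}(1+2\beta Y_l)$, and under $P^{yes}$ the $Y_l$ are fair Rademacher variables conditionally independent of the past given the algorithm's choice of arm. Optional stopping then yields $\Ex_{yes}[W_i] = 1$ and, applied to the process $M_t := \prod_{l\leq t,\,\mathrm{arm}(l)\in\{i,j\}}(1+2\beta Y_l)$ for $i\neq j$, also $\Ex_{yes}[W_i W_j]=1$, so the $W_i - 1$ are pairwise uncorrelated across arms. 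A similar martingale argument gives $\Ex_{yes}[W_i^2]=\Ex[(1+4\beta^2)^{s_i}]\leq 1+8\beta^2\,\Ex[s_i]$ in the regime of interest. Summing these bounds yields $\textnormal{Var}_{yes}(W) \leq 8\beta^2 T/k^2$, which after substituting $T\leq \gamma^2 k/(100\,\alpha\beta^2)$ becomes $O(\gamma^2/(\alpha k))$.

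Finally, Chebyshev's inequality gives $\Pr_{yes}(|W-1|>\gamma^{1/2}) \leq O(\gamma/(\alpha k))$, and a Cauchy--Schwarz change-of-measure step $\Pr_{no}(A^c) \leq \sqrt{(1+\chi^2)\,\Pr_{yes}(A^c)}$ transports the estimate to $P^{no}$. Averaging under the mixture $P$ then yields $\Pr_P(|W-1| > \gamma^{1/2}) \leq 2\gamma^{1/2}$, which combined with the calculus step from the setup establishes the first displayed inequality of the lemma. The second inequality follows immediately since the posteriors of the \textbf{Yes} and \textbf{No} events sum to $1$.

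The main technical obstacle will be justifying the martingale identity $\Ex_{yes}[W_iW_j]=1$ in the adaptive setting: because the algorithm decides which arm to sample as a function of previously observed outcomes, the ``arm-$i$ samples'' and ``arm-$j$ samples'' are not a priori separated in time, so one needs a careful filtration-based argument verifying the mean-zero increment property of $M_t$ at every step (distinguishing whether $\mathrm{arm}(t)\in\{i,j\}$ or not). A secondary delicate point will be matching constants to produce the clean $2\gamma^{1/2}$ factors stated in the lemma, possibly via a two-case split depending on whether $\alpha$ is small relative to $\gamma^{1/2}$ (in which case the posterior's range is already narrow enough for the bound to follow almost for free) or large (in which case the Chebyshev step above is the binding ingredient).
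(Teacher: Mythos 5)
Your overall architecture is a genuinely different route from the paper's (the paper reduces to the single-arm learning bound of \Cref{lem:arm-learn} by embedding the unknown arm at a uniformly random index, capping the number of pulls it may receive via a Markov argument, and deriving a contradiction), and several of your ingredients are sound: the posterior formula $\Pr(\textbf{No}\mid \Pi)=\alpha W/(\alpha W+1-\alpha)$, the implication $|W-1|\le\gamma^{1/2}\Rightarrow|\text{posterior}-\alpha|\le \alpha|W-1|\le 2\gamma^{1/2}$, and the martingale identities $\Exp_{yes}[W_i]=1$ and $\Exp_{yes}[W_iW_j]=1$ (the latter is not really an obstacle: under $P^{yes}$ every pull outcome is a fair coin regardless of which arm is chosen, so any product of factors $(1+2\beta Y_l)$ over a predictable index set has unit mean).

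The fatal step is the second-moment bound $\Exp_{yes}[W_i^2]\le 1+8\beta^2\,\Exp[s_i]$. The second moment of an adaptive likelihood ratio is governed by the \emph{maximum} number of pulls an arm can receive, not its expectation: if the algorithm spends its entire budget $T=\frac{1}{100}\cdot\frac{\gamma^2 k}{\alpha\beta^2}$ on a single arm $i$, then $W_i=\prod_{l=1}^{T}(1+2\beta Y_l)$ and $\Exp_{yes}[W_i^2]=(1+4\beta^2)^{T}\approx e^{\gamma^2 k/(25\alpha)}$, which diverges as $k\to\infty$ --- precisely the regime ($k=n/(P+1)$, $n$ large) in which the lemma is applied. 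Consequently $\mathrm{Var}_{yes}(W)\ge \frac{1}{k^2}\bigl((1+4\beta^2)^T-1\bigr)$ is not $O(\gamma^2/(\alpha k))$, Chebyshev is vacuous, and the $\chi^2$ change-of-measure step (whose constant is $1+\mathrm{Var}_{yes}(W)$) blows up for the same reason. The inequality you invoke would require $4\beta^2 s_i=O(1)$ \emph{pointwise}, which the hypotheses do not provide (by Jensen the reverse inequality $\Exp[(1+4\beta^2)^{s_i}]\ge 1+4\beta^2\Exp[s_i]$ is the one that holds for free). To salvage the argument you would need to truncate: replace $W_i$ by the likelihood ratio stopped at the first time $s_i$ exceeds a threshold of order $1/\beta^2$ (or of order $\gamma^{3/2}/(\alpha\beta^2)$, as in the paper's abort step), control the truncated second moment, and separately bound the probability mass lost to truncation via Markov using $\sum_i\Exp[s_i]\le T$; alternatively, work with first moments, $\Exp_{yes}|W_i-1|=2\,\tvd{P^{no}_i}{P^{yes}}$, and Pinsker. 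Without one of these devices the variance computation, and hence the proof, does not go through.
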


\begin{proof}
We again prove the lemma by a `direct-sum' type of reduction. In particular, we show that for a family of arms distributed as prescribed by \Cref{lem:sample-success-tradeoff}, any algorithm that learns the distribution of $I$ with $\eps$ advantage over random guessing and $s$ arm pulls can learn the distribution in \Cref{lem:arm-learn} with $O(\eps)$ advantage and $O(\frac{s}{k}\cdot \poly(\frac{1}{\eps}))$ arm pulls. This allows us to eventually build a contradiction towards \Cref{lem:arm-learn} with $\rho=\alpha$ and $\eps=2\gamma^{1/2}$.

\begin{tbox}
\textbf{Inputs:} 
\begin{enumerate}[label=\alph*)]
\item A single arm with mean reward following the distribution in \Cref{lem:arm-learn} with $\rho=\alpha$; 
\item An algorithm $\ALG$ that outputs $\Pr_{I}\paren{\text{$I$ is a \textbf{No} case} \mid \sPi = \Pi}$ as in Lemma~\ref{lem:sample-success-tradeoff}.
\end{enumerate}

\medskip

\textbf{Procedure:} 
\begin{enumerate}
\item Create $k-1$ `dummy arms' and let their mean reward be $\frac{1}{2}$. 
\item Put the special arm uniformly at random at index $\istar$ among the $k$ arms, and run $\ALG$.
\item\label{line:learn-break} If the special arm uses more than $\frac{1}{5}\cdot \frac{\gamma^{3/2}}{\beta^2\alpha}$ arm pulls, stop the algorithm and output \textbf{No}.
\item\label{line:learn-check} Otherwise, set the probability of $\Pr\paren{\Theta=1 \mid \sPi=\Pi}$ in \Cref{lem:arm-learn} (i.e., the arm is from $\bern{1/2+\beta}$) as the same with $\Pr_{I}\paren{\text{$I$ is a \textbf{No} case} \mid \sPi = \Pi}$.
\end{enumerate}
\end{tbox}
We focus on the case of the upper bound of $\Pr_{I}\paren{\text{$I$ is a \textbf{No} case} \mid \sPi = \Pi}$ since the lower bound follows from the same logic.
Suppose for the purpose of contradiction that $\ALG$ uses at most $\frac{1}{100}\cdot \frac{\gamma^{2} \cdot k}{\alpha \cdot \beta^2}$ arm pulls and achieves 
\[\Pr_{I}\paren{\text{$I$ is a \textbf{No} case} \mid \sPi = \Pi}>\alpha+2\cdot \gamma^{1/2}.\]
By letting $\eps = 2\gamma^{1/2}$, the reduction deterministically uses at most $\frac{1}{5}\cdot \frac{\gamma^{3/2}}{\beta^2\alpha}=\frac{1}{40}\cdot \frac{\eps^3}{\beta^2\alpha} <\frac{1}{12}\cdot \frac{\eps^3}{\beta^2\alpha}$ arm pulls as we terminate whenever it uses more. We now show that with the reduction, there is
\[\Pr_{\Pi}\paren{\Pr(\Theta=1 \mid \sPi = \Pi) > \alpha+2\cdot \gamma^{1/2}} \geq 1-2\gamma^{1/2},\]
which leads to a contradiction with \Cref{lem:arm-learn} with our choice of $\eps$. 

Note that $I$ is a \textbf{No} case if and only if $\Theta=1$ (the special arm is with mean reward $\frac{1}{2}+\beta$). As such, if the reduction reaches Line~\ref{line:learn-check}, then by the guarantee of the algorithm $\ALG$, there is 
\begin{align*}
\Pr_{I}(\text{$I$ is a \textbf{No} case} \mid \sPi = \Pi) = \Pr(\Theta=1 \mid \sPi = \Pi) > \alpha+2\cdot \gamma^{1/2}
\end{align*}
by the assumption of $\ALG$. 
On the other hand, if the reduction stops by using $\frac{1}{5}\cdot \frac{\gamma^{3/2}}{\beta^2\alpha}$ arm pulls on the special arm, we show that the correct probability for the output of the \textbf{No} case is high. Note that if the instance is in the \textbf{Yes} case, the special arm is with mean reward $1/2$. Therefore, the arms for $\ALG$ are identical and independent random variables. Since the index of $\istar$ is chosen uniformly at random, by the same arguement we used in \Cref{lem:arm-trapping-new}, the expected number of arm pulls on the special arm is
\begin{align*}
\expect{s_{\istar}\mid \text{\textbf{Yes} case}} \leq \frac{1}{100}\cdot \frac{\gamma^{2}}{\alpha \cdot \beta^2}.
\end{align*}
As such, by a simple Markov bound, we have
\begin{align*}
\Pr\paren{s_{\istar}\geq \frac{1}{5}\cdot \frac{\gamma^{3/2}}{\beta^2\alpha}\mid \text{\textbf{Yes} case}}\leq \frac{\gamma^{1/2}}{20}.
\end{align*}
Therefore, the probability for Line~\ref{line:learn-break} to output correctly output \textbf{No} case is at least 
\begin{align*}
1-\frac{\gamma^{1/2}}{20} > 1-2 \, \gamma^{1/2}> 2 \, \gamma^{1/2},
\end{align*}
where the last inequality is by the range of $\gamma$. As such, the above implies
\begin{align*}
 \Pr_{\Pi}\paren{\Pr(\Theta=1 \mid \sPi = \Pi) =1}  > 2 \, \gamma^{1/2},
\end{align*}
and it forms the desired contradiction.
\end{proof}

\newcommand{\ftE}[3]{\ensuremath{f^{#1}_{\,#2}\paren{\,#3}}\xspace}
\newcommand{\Esmall}{\ensuremath{\mathcal{E}_{\Theta_{p}=0, s}}\xspace}
\newcommand{\Esample}{\ensuremath{\mathcal{E}_{s}}\xspace}
\newcommand{\Emem}[1]{\ensuremath{\mathcal{E}^{p}_{\text{mem}>1/2}(#1)}\xspace}
\newcommand{\Ememp}[1]{\ensuremath{\mathcal{E}_{\text{mem-obl}}^{#1}}\xspace}
\newcommand{\Ebatchp}[1]{\ensuremath{\mathcal{E}_{\text{batch-obl}}^{#1}}\xspace}
\newcommand{\statebefore}[1]{\ensuremath{\Pi^{\pi}_{>#1}, \, M^{m}_{>#1}}\xspace}
\newcommand{\pitilde}{\ensuremath{\tilde{\pi}}\xspace}
\newcommand{\Mtilde}{\ensuremath{\widetilde{M}}\xspace}
\newcommand{\samplep}{\ensuremath{\sample{\ALG}_{>p}}\xspace}
\newcommand{\SampEqual}{\ensuremath{\mathcal{S}_{\Theta_{p}=0}}\xspace}
\newcommand{\samplebatch}[1]{\ensuremath{\sample{\ALG}_{B_{#1}}}\xspace}
\newcommand{\samplepbefore}[2]{\ensuremath{\sample{\ALG}^{>#2}_{B_{#1}}}\xspace}
\newcommand{\Einform}[2]{\ensuremath{\mathcal{E}^{#1}_{\textnormal{inform}}(q) }\xspace}

\section{The Multi-Pass Lower Bound: Proof of \Cref{thm:main}}
\label{sec:main-lb}

We now proceed to the main part of the proof of \Cref{thm:main}. To continue, we introduce some additional notation used in the analysis in a self-contained manner. 

\paragraph{Additional Notation.} Let $P\geq 2$, $C \geq 1$ be postitive integers, $\cD(P,C)$ be the distribution of hard instances defined in  \Cref{sec:adversary}, and $\ALG$ be a $P$-pass (deterministic) streaming algorithm defined as in \Cref{subsec:model}.  
For each batch $B_{q}$, we use $\batchind{q}$ to denote the \emph{indices} in the \emph{order of the stream} of batch $q$, i.e. $\batchind{q} = ((P-q+1)\cdot \frac{n}{P+1}, (P-q+2)\cdot \frac{n}{P+1}]$. The batch $B_{q}$ hence contains the arms $\sigma(i)$ for $i\in \batchind{q}$.
For any integer $p \in [P+1]$, we introduce new notation to handle variables as functions of $p$ as follows.
\begin{itemize}
\item \textbf{Transcripts.} Denote $\sPi^{p}$ and $\Pi^p$ as the random variable and the realization of the transcript induced by the arm pulls \emph{within} the $p$-th pass. 
We further define 
$\sPi^{1:p} := (\sPi^1,\ldots,\sPi^p)$ and $\Pi^{1:p} := (\Pi^1,\ldots,\Pi^p)$ as the random variable and the realization of the transcript among \emph{all} of the first $p$-passes. For transcripts $\sPi^p$, $\sPi^{1:p}$, etc., we define batch-specific transcripts as follows. We define $\sPicap{q}^{p}$ (resp. $\Picap{q}^{p}$) be the transcript induced by the arm pulls \emph{on the arms in the $q$-th batch}, i.e. the result of $\arm_{\sigma(i)}$ is recorded in $\Picap{q}^{p}$ if $i \in \batchind{q}$. The notation generalizes to $\sPicap{q}^{1:p}$ as well.
%
\item \textbf{Memory.} We use $\sM^{p}$ and $M^p$ to denote the random variable and the realization of the memory state by the \emph{end} of the $p$-th pass. 
\item \textbf{Sample complexity.} We similary define $\sampleb{q}$ as the \emph{total} number of arm pulls used on the \emph{arms} from the $(q+1)$-th batch to the $(P+1)$-th batch, i.e. when calling the sampler $\cO$, the index $i \in \cup_{r=q+1}^{P+1} \batchind{r}$ (and it is independent of $\ja$). Similarly, we use $\samplebatch{q}$ to denote the \emph{total} number of arm pulls used on the arms in batch $q$. To avoid confusion, when we talk about the total number of arm pulls in pass $p$, it means the cumulative number of arm pulls in the \emph{first $p$ passes}. 
\end{itemize}
Notice that the final output of the algorithm is a deterministic function of $(\Pi^{1:P},M^{P})$. 

Since we work with the expectation over the randomness of the memory and transcript, to avoid very long lines, we sometimes use
\[\Exp_{\sPi^{1:p-1}, \sM^{p-1}}\bracket{\sample{\ALG} \mid \Pi^{1:p-1}, M^{p-1}, \Theta_{<p}=0} \]  
for the full expression of $\Exp_{\sPi^{1:p-1}, \sM^{p-1}}\bracket{\sample{\ALG} \mid \sPi^{1:p-1}=\Pi^{1:p-1}, \sM^{p-1}=M^{p-1}, \Theta_{<p}=0}$. Other random variables that appear on the conditions follow the same rule for simplifications. 


\paragraph{Memory- and Batch-obliviousness.} We now introduce the notion of memory- and batch-obliviousness, which we will use crucially to describe the ``limits of learning'' for any streaming algorithms.

We say that the algorithm is \textbf{memory-oblivious} at the end of pass $p$ if $M^{p}$ contains no arm with reward strictly more than $1/2$ during \emph{any} of the first $p$ passes. Notice that in particular if the algorithm is memory-oblivious 
at the end of the $P$-th pass, then it cannot output the best arm in the stream. We use $\Ememp{p}$ to denote the \emph{event} that $\ALG$ is memory-oblivious by the \emph{end} of pass $p$. Note that the memory-oblivious event has \emph{downward implications}: if the algorithm is memory oblivious at the end of pass $p$, it has to be memory-oblivious for all passes $p'<p$. 

We further say that the algorithm is \textbf{batch-oblivious} at the end of pass $p$ if given $(\Pi^{1:p}, M^{p})$, and conditioning on the event of $\Theta_{<p} = 0$, for any $q > p$, the algorithm ``does not know'' the value of $\Theta_q$; formally, 
\begin{align}
	\forall~p < q \leq P: \quad \Pr\paren{\Theta_q = 1 \mid \sPi^{1:p}=\Pi^{1:p},\sM^{p}=M^{p}, \Theta_{<p+1} = 0} \in [\frac{1}{2P} - \frac{1}{4 P^2}\,,\, \frac{1}{2P} + \frac{1}{4P^2}]. \label{eq:batch-oblivious} 
\end{align}
We use $\Ebatchp{p}$ to denote the \emph{event} that $\ALG$ is batch-oblivous by the \emph{end} of pass $p$.

\medskip

\paragraph{The strategy of the proof.} We will inductively show that the algorithm is going to be memory-oblivious and batch-oblivious with a large probability throughout each passes. 
To do so, we consider two types of possible behavior for the algorithm $\ALG$ in each pass $p$: 
\begin{itemize}
	\item \textbf{Conservative} case: the first case is when the algorithm decides to be ``conservative'' with its arm pulls in the first $p$ passes.  
	We show that if the probability for the algorithm to be in conservative case after the first $p-1$ passes is large, and the algorithm decides to be conservative on the first $p$ passes, then the algorithm is going to remain memory-oblivious and batch-oblivious for the subsequent pass as well with a sufficiently large probability. 
	
	\item \textbf{Radical} case: the complementary case is when the algorithm decides to make ``many'' arm pulls in the first $p$ passes. In this case, we use the memory- and batch-obliviousness properties of the algorithm
	to show that such an algorithm is necessarily going to break the guarantee on the number of arm pulls imposed on it by \Cref{thm:main} in some cases.  
\end{itemize}

Formalizing this strategy is challenging due to the nature of guarantee of \Cref{thm:main} on the event $\eventF(p)$ for some unknown $p$ (rather informally speaking, since $\eta_{p}$ is unknown to the algorithm, but also follows a certain distribution in the input). This requires a careful conditioning on various events happening in the algorithm and keeping track of the information revealed by these events. 


\subsection{The Conservative Case} 
\label{subsec:conservative-case}
The following lemma allows us to handle the conservative case. 

\begin{lemma}[Conservative case]
\label{lem:nfl-no-learn}
For any integer $p \in [P]$, let $\ALG$ be a streaming algorithm with a memory of at most ${n}/{(20000 P^3)}$ arms, 
and assume that at the end of the pass $p-1$, the following conditions hold
\begin{enumerate}[label=(\Roman*).]
\item\label{line:induc-prev-prob} The probability for $\Ebatchp{p-1}, \Ememp{p-1}$ and $\Theta_{<p}=0$ to happen is large, i.e.,
\[\Pr\paren{\Ebatchp{p-1}, \Ememp{p-1}, \Theta_{<p} = 0} \geq \paren{1-\frac{1}{2P}}^{10(p-1)}; \]
\item\label{line:small-exp-sample} Conditioning on $\Theta_{<p} = 0$ and $\Ebatchp{p-1}, \Ememp{p-1}$, the expected number of arm pulls (over the randomness of the first $p$ passes) is small, i.e.,
\[
	\Exp\bracket{\sampleb{p} \mid \Ebatchp{p-1}, \Ememp{p-1}, \Theta_{<p}=0} \leq \frac{1}{10^9}\cdot \frac{n}{\gamma_{p+1} \cdot P^{30}}.
\]
\end{enumerate}
Then, with probability at least $(1-1/2P)^{10p}$, at the end of pass $p$, we have $\Theta_{<p+1} = 0$ and the algorithm is memory- and batch-oblivious, i.e.,
\[\Pr\paren{\Ebatchp{p}, \Ememp{p}, \Theta_{<p+1}=0} \geq \paren{1-\frac{1}{2P}}^{10p}.\] 
\end{lemma}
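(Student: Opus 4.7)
The plan is to condition on the previous pass's good event and bound the probability that pass $p$ preserves it. Let $\event_{\text{prev}} := \Ebatchp{p-1} \cap \Ememp{p-1} \cap \{\Theta_{<p} = 0\}$, which has probability at least $(1-\tfrac{1}{2P})^{10(p-1)}$ by condition~\ref{line:induc-prev-prob}. Since $\Ememp{p} \Rightarrow \Ememp{p-1}$ and $\{\Theta_{<p+1}=0\} = \{\Theta_{<p}=0\} \cap \{\Theta_p = 0\}$, I would lower bound
\[
\Pr(\Ebatchp{p}, \Ememp{p}, \Theta_{<p+1}=0) \;\geq\; \Pr(\event_{\text{prev}}) \cdot \Pr(\Ebatchp{p}, \Ememp{p}, \Theta_p = 0 \mid \event_{\text{prev}}),
\]
so it suffices to show the conditional probability on the right is at least $(1-\tfrac{1}{2P})^{10}$. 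I would further split this conditional probability by chain rule into (a) $\Pr(\Theta_p=0 \mid \event_{\text{prev}})$, (b) $\Pr(\Ememp{p} \mid \event_{\text{prev}}, \Theta_p=0)$, and (c) $\Pr(\Ebatchp{p} \mid \event_{\text{prev}}, \Theta_p=0, \Ememp{p})$, and argue each factor is close to $1$ with enough slack to absorb $(1-\tfrac{1}{2P})^{10} \geq 1 - 5/P$.

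Factor (a) is immediate from the hypothesis that $\Ebatchp{p-1}$ holds: the batch-oblivious bound \eqref{eq:batch-oblivious} applied at $q=p$ yields $\Pr(\Theta_p = 0 \mid \event_{\text{prev}}) \geq 1 - \tfrac{1}{2P} - \tfrac{1}{4P^2}$. For factor (b), I would use Lemma~\ref{lem:arm-trapping-new}. Since $\Theta_{<p+1}=0$, the only arms with mean above $1/2$ that could enter memory during pass $p$ lie in batches $B_q$ for $q \geq p+1$, and $\Ememp{p-1}$ ensures the memory was clean at the start of pass $p$. Setting $\gamma = 1/(10P^2)$ so that the memory bound $n/(20000P^3) \leq \gamma b /12$ for $b = n/(P+1)$, the contrapositive of Lemma~\ref{lem:arm-trapping-new} says that if the algorithm uses fewer than $\gamma^3 b/(300 \eta_q^2)$ arm pulls on the arms of $B_q$, the probability of trapping a special arm of $B_q$ is below $\gamma = 1/(10P^2)$. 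I would apply Markov's inequality to condition~\ref{line:small-exp-sample} to show $\samplebatch{q}$ stays below this threshold with probability at least $1 - 1/P^2$, then union bound over $q \in [p+1, P+1]$.

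For factor (c), I would apply Lemma~\ref{lem:sample-success-tradeoff} to each batch $B_q$ for $q \in [p+1, P]$ with parameters $\alpha = 1/(2P)$, $\beta = \eta_q$, $k = b$, and $\gamma = 1/(64 P^4)$ so that the lemma's failure margin $2\gamma^{1/2} = 1/(4P^2)$ matches the batch-oblivious threshold in \eqref{eq:batch-oblivious}. The sample budget the lemma allows is $\Omega(b/(P^7 \eta_q^2))$; again by Markov applied to condition~\ref{line:small-exp-sample}, this budget is respected with high probability. Combining gives that for each such $q$, the posterior $\Pr(\Theta_q = 1 \mid \sPi^{1:p}, \sM^p, \Theta_{<p+1}=0)$ lies in the desired range with probability at least $1 - O(1/P^2)$ over the transcript; a union bound over the at most $P$ values of $q$ yields overall loss $O(1/P)$, well below the $5/P$ budget. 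Since $B_{P+1}$ always has its special arm, there is no batch-oblivious requirement for $q = P+1$, but the memory-oblivious analysis in (b) still applies to it.

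The main obstacle will be cleanly reducing to the single-batch settings of Lemmas~\ref{lem:arm-trapping-new} and~\ref{lem:sample-success-tradeoff} while conditioning on $\event_{\text{prev}}$ and on events in other batches. Specifically, the algorithm's behavior on $B_q$ depends on its memory and transcript, which depend on samples drawn from other batches, yet Lemmas~\ref{lem:arm-trapping-new} and~\ref{lem:sample-success-tradeoff} are stated for offline single-batch algorithms. I would handle this by fixing the randomness of all batches other than $B_q$ (and of the reward realizations outside $B_q$) and treating the remaining streaming algorithm as an offline algorithm on $B_q$ that internally simulates the rest; this is justified because $\Theta_q$ is independent of the realization of other batches and of $\Theta_{\neq q}$. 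A further delicacy is that the sample budget in condition~\ref{line:small-exp-sample} is an expectation over $\sampleb{p}$, the total across batches $> p$; converting this into a per-batch Markov bound requires that the per-batch count $\samplebatch{q}$ has expectation at most the total, which follows trivially, but I want to apply Markov once per batch $q$ and union bound, carefully ensuring the failure events in different batches can be combined without circular dependencies between the good transcripts and the memory state used for conditioning.
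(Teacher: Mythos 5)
Your decomposition is the same as the paper's: factor the target probability through $\Pr(\Theta_p=0\mid\cdot)$ (controlled by the previous pass's batch-obliviousness), a memory-oblivious factor handled by the arm-trapping lemma (\Cref{lem:arm-trapping-new}) with $\gamma=\Theta(1/P^2)$ and a union bound over batches $q>p$, and a batch-oblivious factor handled by \Cref{lem:sample-success-tradeoff} with $2\gamma^{1/2}=\Theta(1/P^2)$; the parameter choices are in the same ballpark as the paper's. However, the obstacle you flag at the end is exactly where the paper's proof does its real work, and your proposed resolution does not close it. Fixing the randomness of the batches other than $B_q$ handles the multi-batch structure, but it does not handle the conditioning on $\Ememp{p-1},\Ebatchp{p-1}$ (and, for your factor (c), additionally on $\Ememp{p}$). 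These events are functions of the \emph{full} transcript and memory -- including the samples already drawn from $B_q$ in earlier passes and the identity of $B_q$'s special arm -- so conditioning on them distorts both the distribution of $B_q$ and the algorithm's sampling behavior on $B_q$. The offline lemmas \Cref{lem:arm-trapping-new,lem:sample-success-tradeoff} are stated for an algorithm run on a \emph{fresh} instance from the single-batch distribution with a worst-case pull budget; they cannot be invoked directly on the conditional law given $\event_{\text{prev}}$.

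The paper's fix (encapsulated in \Cref{lem:batch-storing-limit,lem:batch-learning-limit}) is to argue by contradiction at the level of \emph{unconditional} probabilities: condition \ref{line:induc-prev-prob} gives $\Pr(\Ebatchp{p-1},\Ememp{p-1}\mid\Theta_{\leq p}=0)\geq 1/30$, so if the conditional probability of a bad event (storing $B_q$'s special arm, or producing an ``informative'' transcript for $\Theta_q$) exceeded $1/(2P^2)$, its probability under the plain distribution $\cD(P,C)\mid\Theta_{\leq p}=0$ would still be $\Omega(1/P^2)$, contradicting the offline lemma. Dually, the conditional expectation bound of \ref{line:small-exp-sample} is inflated by a constant (or by $200$ when further conditioning on $\Ememp{p}$, which requires first proving the memory-oblivious part and a separate probability lower bound) and converted into a worst-case pull budget by having the reduction \emph{abort} when the count on $B_q$ exceeds a threshold -- this abort step is also needed to make your ``contrapositive plus Markov'' use of \Cref{lem:arm-trapping-new} rigorous, since that lemma assumes a deterministic cap on the number of pulls, not a bound that holds with high probability. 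Without this unconditioning-and-truncation machinery, your factors (b) and (c) are not justified, and the circularity you worry about (good transcripts used for conditioning versus failure events per batch) is not actually resolved by your sketch.
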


We prove \Cref{lem:nfl-no-learn} in the rest of \Cref{subsec:conservative-case}. To this end, we show two main technical lemmas towards the proof of the memory and batch obliviousness. 

The first technical lemma characterizes a ``no storing'' constraint of a $p$-pass algorithm that satisfies the conditions as prescribed in \Cref{lem:nfl-no-learn}. This is a ``streaming version'' of the offline ``no trapping'' result of \Cref{lem:arm-trapping-new}.
\begin{lemma}
\label{lem:batch-storing-limit}
Let $p\in[P+1]$ be a parameter, and let $\ALG$ be a $p$-pass streaming algorithm with a memory of $n/(20000 P^3)$ arms. Let $q\in (p, P+1]$,
and suppose the underlying instance from $\cD(P,C)$ satisfies that 
batches $B_{\leq p}$ contain only arms with mean rewards $\frac{1}{2}$, i.e. $\Theta_{\leq p}=0$.

Furthermore, suppose the assumptions of \Cref{line:induc-prev-prob} and \Cref{line:small-exp-sample} in \Cref{lem:nfl-no-learn} hold. Then, \emph{conditioning on} $\Ebatchp{p-1}, \Ememp{p-1}, \Theta_{\leq p} = 0$, the probability for $\ALG$ to store an arm with mean reward \emph{strictly more than} $\frac{1}{2}$ from $B_{q}$ is at most $\frac{1}{2P^2}$, i.e., let $\Emem{q}$ be the event that $\ALG$ stores the special arm of batch $q$, there is
\begin{align*}
	& \Pr\paren{\Emem{q}\mid \Ebatchp{p-1}, \Ememp{p-1}, \Theta_{\leq p} = 0} \\
	& = \Exp_{\substack{\sPi^{1:p}, \sM^{p}}}\bracket{\Pr\paren{\Emem{q}\mid \Ebatchp{p-1}, \Ememp{p-1}, \Theta_{\leq p} = 0, \sPi^{1:p}=\Pi^{1:p}, \sM^{p}=M^{p}}} \leq \frac{1}{2P^2}.
\end{align*}
We explicitly write the expectation over $\sPi^{1:p}, \sM^{p}$ to emphasize the randomness over the transcript and the memory of the first $p$ passes.
\end{lemma}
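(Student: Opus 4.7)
The plan is to reduce to the low-probability arm-trapping lemma (\Cref{lem:arm-trapping-new}) applied to batch $B_q$, viewed as $k=b=n/(P+1)$ arms containing at most one special arm of gap $\eta_q$. The event $\Emem{q}$ requires both $\Theta_q=1$ (so that a special arm exists in $B_q$) and that this special arm be retained in $\sM^p$, so I decompose
\[ \Pr\paren{\Emem{q}\mid\Ebatchp{p-1}, \Ememp{p-1}, \Theta_{\leq p}=0} \leq \Pr\paren{\Theta_q=1\mid\cdot}\cdot\Pr\paren{\text{special arm stored}\mid \Theta_q=1,\cdot}. \]
By the batch-obliviousness guarantee \Cref{eq:batch-oblivious}, the first factor is at most $1/(2P)+1/(4P^2)\leq 1/P$, so it suffices to show the second factor is at most $1/(2P)$.

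For the second factor, I will construct an offline algorithm $\alg'$ that, given $k$ input arms with a uniformly planted special arm of gap $\eta_q$, simulates the first $p$ passes of $\ALG$ on an instance of $\cD(P,C)$ in which $B_q$ is populated by the input. All other batches are generated internally by $\alg'$: batches $B_1,\ldots,B_p$ are set to flat arms (consistent with $\Theta_{\leq p}=0$), and each batch $B_r$ with $r>p$, $r\neq q$ is sampled from the posterior implied by the conditioning events. Whenever $\ALG$ pulls an arm in $B_q$, $\alg'$ forwards the pull to its input; otherwise it answers with a locally generated Bernoulli. At the end of pass $p$, $\alg'$ outputs $S:=\sM^p\cap B_q$, a set of size at most $n/(20000P^3)\leq k/(24P)$, which satisfies the size budget $\gamma k/12$ of \Cref{lem:arm-trapping-new} with $\gamma=1/(2P)$. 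The number of input-arm pulls made by $\alg'$ is at most $\sampleb{p}$, and by \Cref{line:small-exp-sample} (using $\eta_q\leq\eta_{p+1}$ since $q\geq p+1$ and the choice of $\gamma_{p+1}$ fixed in the paper), its expectation sits well below the trapping threshold $k/(2400P^3\eta_q^2)=\tfrac{1}{300}\cdot\gamma^3\cdot k/\eta_q^2$ from \Cref{lem:arm-trapping-new}. By the contrapositive of that lemma, $\alg'$ traps the special arm with probability strictly less than $1/(2P)$, which is precisely what the decomposition requires.

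The principal obstacle is making the simulation in the second step faithful to the true conditional distribution: the events $\Ebatchp{p-1}$ and $\Ememp{p-1}$ are not intrinsic properties of $I$ but depend on the realized transcript of $\ALG$'s first $p-1$ passes, so $\alg'$ cannot directly resample them. The batch-obliviousness bound \Cref{eq:batch-oblivious} rescues this: the posterior of each $\Theta_r$ (for $r>p$) is within $1/(4P^2)$ of the prior $1/(2P)$, so $\alg'$ can couple its internal sampling of other batches to the true posterior with only $O(1/P)$ total-variation loss, which is easily absorbed by the slack between the expected pulls and the trapping threshold. A secondary subtlety is that \Cref{lem:arm-trapping-new} requires the special arm to exist deterministically, whereas in $\cD(P,C)$ the special arm of $B_q$ exists only on $\{\Theta_q=1\}$; this is exactly why the opening decomposition pulls out the factor $\Pr(\Theta_q=1\mid\cdot)$ and invokes the trapping lemma inside the event $\{\Theta_q=1\}$, after which the reduction can plant a fresh special arm.
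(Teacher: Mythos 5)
Your overall architecture is the same as the paper's: reduce to \Cref{lem:arm-trapping-new} by feeding the $k=n/(P+1)$ input arms in as $B_q$, generating the other batches internally, forwarding only the pulls that land in $B_q$, outputting $\sM^p\cap B_q$ (whose size comfortably fits the $\gamma k/12$ budget), and using condition \ref{line:small-exp-sample} to stay under the $\tfrac{1}{300}\gamma^3k/\beta^2$ pull threshold. Your extra first step of peeling off $\Pr(\Theta_q=1\mid\cdot)\le 1/P$ so that you can afford the larger $\gamma=1/(2P)$ is a legitimate variation (the paper instead keeps $\Theta_q=1$ baked into the reduction's input and takes $\gamma=1/(100P^2)$ directly against the target $1/(2P^2)$).

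However, the step you yourself identify as the principal obstacle is resolved incorrectly, and this is a genuine gap. You cannot sample the non-$B_q$ batches ``from the posterior implied by the conditioning events'': $\Ebatchp{p-1}$ and $\Ememp{p-1}$ are events about the realized transcript, which only comes into existence during the simulation, so the posterior is not available to the reduction. More importantly, the proposed repair via a total-variation coupling fails quantitatively: batch-obliviousness controls the \emph{marginals} of each $\Theta_r$ given a fixed $(\Pi^{1:p-1},M^{p-1})$, but the joint law of (instance, transcript) conditioned on $\Ebatchp{p-1}\cap\Ememp{p-1}$ is at TV distance at least $1-\Pr(\Ebatchp{p-1},\Ememp{p-1}\mid\cdot)$ from the unconditional law, and this conditioning probability is only bounded below by a constant of order $(1-\tfrac{1}{2P})^{10(p-1)}$ (roughly $e^{-5}$ for $p$ near $P$), not $1-O(1/P)$. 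So the TV loss is a constant close to $1$, not $O(1/P)$, and it cannot be ``absorbed by the slack.'' (A further wrinkle: conditioning on $\Ememp{p-1}$ and $\Theta_q=1$ biases the location of $B_q$'s special arm away from uniform, breaking the uniform-placement hypothesis of \Cref{lem:arm-trapping-new} for any attempt to simulate the conditional law directly.) The correct move — and what the paper does — is multiplicative rather than a coupling: run the reduction on the \emph{unconditional} law (with $\Theta_{\le p}=0$ enforced and $\Theta_q=1$ supplied by the input, so the special arm stays uniformly placed), and use $\Pr(\Emem{q}\mid\Theta_{\le p}=0)\ge\Pr(\Emem{q}\mid \Ebatchp{p-1},\Ememp{p-1},\Theta_{\le p}=0)\cdot\Pr(\Ebatchp{p-1},\Ememp{p-1}\mid\Theta_{\le p}=0)$ together with condition \ref{line:induc-prev-prob} to lower-bound the second factor by a constant ($\ge 1/30$ in the paper); the constant loss is then paid for by choosing $\gamma$ a constant factor smaller. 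Two smaller omissions: you never convert the bound on the \emph{expected} number of pulls into the worst-case bound that \Cref{lem:arm-trapping-new} requires (the paper adds an explicit abort threshold and a Markov argument), and condition \ref{line:small-exp-sample} is stated conditioned on $\Theta_{<p}=0$, not on $\Theta_{\le p}=0$ or $\Theta_q=1$, so a batch-obliviousness argument is needed to transfer it (costing another factor of $2$).
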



\begin{proof}
We first note that the randomness in the statement of \Cref{lem:batch-storing-limit} includes the randomness of the transcript and the memory of the first $p$ passes and the underlying instance. 
We use a reduction argument from the problem in \Cref{lem:arm-trapping-new} to establish the desired lower bound. In particular, we show that conditioning on all the conditions in \Cref{lem:batch-storing-limit}, if $\ALG$ can store the special arm in batch $B_{q}$ with probability more than $1/2P^2$, then we can ``trap'' the best arm in batch $q$ by running $\ALG$ with $\gamma = \frac{1}{100 P^2}$, $k=\frac{n}{P+1}$, and $\beta=\eta_{q}$. The success probability is non-negligible, albeit low, and such an algorithm would require a high sample complexity. However, since we assume low sample complexity (condition \Cref{line:small-exp-sample}), it will lead to a contradiction with \Cref{lem:arm-trapping-new}.

We now formalize the above intuition. We first give a detailed simulation procedure as follows.  

\begin{tbox}
	\textbf{An algorithm (reduction) for the problem in \Cref{lem:arm-trapping-new}}
	
	\smallskip
	
	\textbf{Input:} $B_{q}$: $k$ arms with one \emph{special arm} as in \Cref{lem:arm-trapping-new} with $\beta=\eta_{q}$; \\
	\textbf{Input:} $\ALG$: a streaming algorithm that stores the special arm of $B_{q}$ with probability more than $1/2P^2$ conditioning on the event of \Cref{lem:batch-storing-limit}.\\ 
	\textbf{Parameters:} $\gamma = \frac{1}{100 P^2} \qquad k=\frac{n}{P+1} \qquad \beta=\eta_{q}$. 
	
	\smallskip
	
	\begin{enumerate}
		\item Sample an underlying instance from $\cD(P,C)$ for $\ALG$ as follows
		\begin{enumerate}
			\item Parameters $\eta_{r}$ in $\cD(P,C)$ as follows: let $(P-q+1)$ parameters $\eta_{r}$ follow the arriving order before $B_{q}$, and let $(q-1)$ parameters $\eta_{r}$ follow the arriving order after $B_{q}$. 
			\item Ensure the condition of $\Theta_{\leq p}=0$, and sample each $\Theta_{r}=1$ for $r\not\in [p]\cup \{q\}$ with probability $1/2P$ (exactly as in $\cD(P,C)$).
			\item Sample $P$ batches of $\frac{n}{P+1}$ arms with the above setting, and concatenate them with $B_q$ to get the stream.
		\end{enumerate}
		\item Run the streaming algorithm $\ALG$ on the instance:
		\begin{enumerate}
			\item For each pass, sample exactly as $\ALG$ does and maintain the local memory exactly as the memory of $\ALG$.
			\item At any point, if the number of samples is more than $\frac{1}{25000}\cdot \frac{n}{\gamma_{p+1} \cdot P^{29}}$ on batch $q$, abort the algorithm and output ``failure''.
			\item If the algorithm does \emph{not} output ``failure'', at the end of the $p$-th pass, output all the (indices of) arms that are in $B_{q}$.
		\end{enumerate} 
	\end{enumerate}
\end{tbox}

As we have discussed in \Cref{subsec:model}, the offline algorithm can ignore the indices of the pass and the arriving arm in $\Pi$ (by writing $*$ therein). As such, the reduction gives a valid algorithm for the problem in \Cref{lem:arm-trapping-new}. 
We now lower bound the probability of $(\Emem{q}\mid \Theta_{\leq p}=0)$ using the probability of $\Emem{q}$ \emph{conditioning on} \Cref{line:induc-prev-prob} of \Cref{lem:nfl-no-learn}. Formally, we have
\begin{align*}
 	& \Pr\paren{\Emem{q} \mid \Theta_{\leq p}=0} \tag{written in the conditional form to begin with by the conditions in \Cref{lem:arm-trapping-new}}\\
	 & \geq \Pr\paren{\Emem{q} \mid \Ebatchp{p-1}, \Ememp{p-1}, \Theta_{\leq p}=0} \cdot \Pr\paren{\Ebatchp{p-1}, \Ememp{p-1} \mid \Theta_{\leq p}=0}.
\end{align*}
We lower bound the second term by using the condition in \Cref{line:induc-prev-prob} of \Cref{lem:nfl-no-learn} as follows:
\begin{align*}
	\Pr\paren{\Ebatchp{p-1}, \Ememp{p-1} \mid \Theta_{\leq p}=0}
	&= \frac{\Pr\paren{\Ebatchp{p-1}, \Ememp{p-1}, \Theta_{\leq p}=0}}{\Pr\paren{\Theta_{\leq p}=0}}\\
	&\geq \Pr\paren{\Ebatchp{p-1}, \Ememp{p-1}, \Theta_{\leq p}=0}\\
	&=\Pr\paren{\Ebatchp{p-1}, \Ememp{p-1}, \Theta_{< p}=0, \Theta_{p}=0}\\
	&= \Pr\paren{\Theta_{p}=0\mid \Ebatchp{p-1}, \Ememp{p-1}}\cdot \Pr\paren{\Ebatchp{p-1}, \Ememp{p-1}, \Theta_{< p}=0}\\
	& \geq \Pr\paren{\Theta_{p}=0\mid \Ebatchp{p-1}, \Ememp{p-1}}\cdot \paren{1-\frac{1}{2P}}^{10(P-1)} \tag{by the condition of \Cref{line:induc-prev-prob}}\\
	&\geq \paren{1-\frac{1}{4P}}\cdot \paren{1-\frac{1}{2P}}^{10(P-1)} \tag{by using the batch obliviousness}\\
	&\geq \frac{1}{30}. \tag{using $P\geq 2$}
\end{align*}
On the other hand, recall that by condition \Cref{line:small-exp-sample} of \Cref{lem:nfl-no-learn}, there is
\begin{align*}
	\expect{\sampleb{p} \mid \Ebatchp{p-1}, \Ememp{p-1}, \Theta_{< p}=0} \leq \frac{1}{10^9}\cdot \frac{n}{\gamma_{p+1} \cdot P^{30}}.
\end{align*}
We bound the expected sample of $(\sampleb{p} \mid \Ebatchp{p-1}, \Ememp{p-1}, \Theta_{\leq p}=0)$ (note the extra condition of $\Theta_{p}=0$) with the batch oblivious condition:
\begin{align*}
	& \expect{\sampleb{p} \mid \Ebatchp{p-1}, \Ememp{p-1}, \Theta_{\leq p}=0}\\
	& = \expect{\sampleb{p} \mid \Ebatchp{p-1}, \Ememp{p-1}, \Theta_{p}=0, \Theta_{< p}=0} \\
	& \leq \frac{1}{\Pr\paren{\Theta_{p}=0\mid \Ebatchp{p-1}, \Ememp{p-1}, \Theta_{< p}=0}}\cdot \expect{\sampleb{p} \mid \Ebatchp{p-1}, \Ememp{p-1}, \Theta_{< p}=0} \tag{by total expectation}\\
	&\leq \frac{1}{1-3/4P}\cdot \expect{\sampleb{p} \mid \Ebatchp{p-1}, \Ememp{p-1}, \Theta_{< p}=0} \tag{by batch obliviousness}\\
	&\leq 2\cdot \expect{\sampleb{p} \mid \Ebatchp{p-1}, \Ememp{p-1}, \Theta_{< p}=0}\\
	&\leq \frac{2}{10^9}\cdot \frac{n}{\gamma_{p+1} \cdot P^{30}}.
\end{align*} 
Note additionally that the total arm pulls we used on batch $q$ is a subset of the arm pulls measured by $\sampleb{p}$. Therefore, conditioning on $(\Ebatchp{p-1}, \Ememp{p-1}, \Theta_{q}=1 ,\Theta_{\leq p}=0)$, with probability at least $(1-\frac{1}{100P})$, the sample complexity does not break the limit and return ``failure'' (Markov bound).

We further note that our input instance has $\Theta_{q}=1$ deterministically. As such, we can further lower bound the probability our reduction to store the best arm by
\begin{align*}
	& \Pr\paren{\text{ALG stores the special arm}\mid \Theta_{q}=1,\Theta_{\leq p}=0} \\
	&= \frac{\Pr\paren{\text{ALG stores the special arm}\mid \Theta_{\leq p}=0}}{\Pr\paren{\Theta_{q}=1\mid \Theta_{\leq p}=0}} \tag{$\Pr(\text{ALG stores the special arm}\mid \Theta_{q}=0, \Theta_{\leq p}=0)=0$}\\
	&\geq \Pr\paren{\text{ALG stores the special arm}\mid \Ebatchp{p-1}, \Ememp{p-1}, \Theta_{\leq p}=0} \cdot \Pr\paren{\Ebatchp{p-1}, \Ememp{p-1} \mid \Theta_{\leq p}=0}\\
	&\geq \Pr\paren{\Emem{q} \mid \Ebatchp{p-1}, \Ememp{p-1}, \Theta_{\leq p}=0}\cdot \Pr\paren{\text{not return ``failure''}\mid \Ebatchp{p-1}, \Ememp{p-1}, \Theta_{\leq p}=0}\\
	& \qquad \cdot \Pr\paren{\Ebatchp{p-1}, \Ememp{p-1} \mid \Theta_{\leq p}=0}\\
	& \geq \Pr\paren{\Emem{q} \mid \Ebatchp{p-1}, \Ememp{p-1}, \Theta_{\leq p}=0} \cdot \paren{1-\frac{1}{100P}}\cdot \frac{1}{30}.
\end{align*}
Hence, the condition of 
\[\Pr\paren{\Emem{q} \mid \Ebatchp{p-1}, \Ememp{p-1}, \Theta_{\leq p}=0}>\frac{1}{2P^2}\]
implies
\begin{align*}
		\Pr\paren{\text{ALG stores the special arm}\mid \Theta_{\leq p} = 0} \geq (1-\frac{1}{100P})\cdot \frac{1}{30}\cdot \frac{1}{2P^2} \geq \frac{1}{100 P^2}.
\end{align*}
The final output of the reduction is a subset of the memory of the streaming algorithm $\ALG$, which is at most $\frac{n}{20000 P^3}$. Note that for any $P\geq 2$, there is $300\cdot (100 P^2)^3 \cdot (P+1)< 25000 \cdot P^{29}$. Therefore, we obtain an offline algorithm that uses at most $\frac{1}{25000}\cdot \frac{n}{\eta^2_{q}}\cdot \frac{1}{P^{29}}<\frac{1}{300}\cdot \frac{\gamma^3}{\beta^2}\cdot k$ arm pulls and outputs at most $\frac{n}{20000 P^3}<\frac{1}{12}\cdot \frac{1}{500 P^2}\cdot k$ arms (using $P\geq 2$) that contains the special arm with probability at least $\gamma=\frac{1}{100 P^2}$. This reaches a contradiction with \Cref{lem:arm-trapping-new}, which proves \Cref{lem:batch-storing-limit}.
\end{proof}

We now show another technical lemma that deals with the ``learning'' aspect of a $p$-pass streaming algorithm that satisfies the conditions in \Cref{lem:nfl-no-learn}. This is similarly a streaming analogy of the offline ``no learning'' result of \Cref{lem:sample-success-tradeoff}.

\begin{lemma}
	\label{lem:batch-learning-limit}
	Let $p\in[P+1]$ be a parameter, and let $\ALG$ be a $p$-pass streaming algorithm with a memory of $n/(20000 P^3)$ arms. 
	Suppose the underlying instance from $\cD(P,C)$ satisfies that batches $B_{\leq p}$ contain only arms with mean rewards $\frac{1}{2}$, i.e. $\Theta_{\leq p}=0$.
	Additionally, suppose the conditions of \Cref{line:induc-prev-prob} and \Cref{line:small-exp-sample} in \Cref{lem:nfl-no-learn} hold, and there is 
	\begin{align*}
		\Pr\paren{\Ememp{p}, \Ebatchp{p-1}, \Theta_{\leq p}=0}\geq \paren{1-\frac{1}{2P}}^{10(p-1)+5}.
	\end{align*}
	Then, for any $q\in (p, P]$, with probability at least $(1-\frac{1}{2P^2})$ \emph{conditioning on} $(\Ememp{p}, \Ebatchp{p-1}, \Theta_{\leq p}=0)$, there is 
	\begin{align*}
		\Pr\paren{\Theta_{q}=1\mid \sPi^{1:p}=\Pi^{1:p}, \sM^{p}=M^{p}, \Theta_{< p+1}=0} \in [\frac{1}{2P} - \frac{1}{4 P^2}\,,\, \frac{1}{2P} + \frac{1}{4P^2}].
	\end{align*}
\end{lemma}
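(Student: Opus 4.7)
\textbf{Proof proposal for \Cref{lem:batch-learning-limit}.} The plan is to reduce to the offline sample-knowledge trade-off \Cref{lem:sample-success-tradeoff}, in the same reductionist spirit as the proof of \Cref{lem:batch-storing-limit}. Specifically, we view batch $B_q$ as the $k$-arm instance of \Cref{lem:sample-success-tradeoff} with the parameter identification $\alpha = \frac{1}{2P}$, $\beta = \eta_q$, $k = \frac{n}{P+1}$, and $\gamma$ chosen as $\gamma := \frac{1}{64\,P^{4}}$ so that $2\gamma^{1/2} = \frac{1}{4P^{2}}$, matching the target deviation in the conclusion. The offline reduction takes as input the $k$-arm instance $B_q$ from the distribution of \Cref{lem:sample-success-tradeoff}, samples the remaining batches internally from $\cD(P,C)$ conditioned on $\Theta_{\leq p}=0$, concatenates them in the prescribed order, and simulates the first $p$ passes of $\ALG$ on this stream.

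The first observation I would establish is that the posterior we care about is determined by the transcript on $B_q$ alone. Since $\ALG$ is deterministic, $\sM^{p}$ is a deterministic function of $\sPi^{1:p}$, so conditioning on $(\Pi^{1:p},M^{p})$ is identical to conditioning on $\Pi^{1:p}$. Moreover, by a standard conditional-independence argument (writing $I_{\text{other}}$ for the instances on all batches besides $B_q$): $\Theta_{q}$ is a priori independent of $I_{\text{other}}$; given $I_{\text{other}}$, the pulls on batches other than $B_{q}$ carry no information about $\Theta_{q}$; and the sampling pattern on $B_{q}$ together with its outcomes is recorded in $\sPicap{q}^{1:p}$. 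Marginalizing over $I_{\text{other}}\mid \sPi^{1:p}$ therefore gives $\Pr(\Theta_{q}=1\mid \sPi^{1:p}=\Pi^{1:p})=\Pr(\Theta_{q}=1\mid \sPicap{q}^{1:p}=\Picap{q}^{1:p})$, which is precisely the posterior controlled by \Cref{lem:sample-success-tradeoff}.

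Next I would turn the \emph{expected} sample bound of assumption \Cref{line:small-exp-sample} into a \emph{deterministic} budget for the reduction, as required by \Cref{lem:sample-success-tradeoff}. Let $E=(\Ememp{p},\Ebatchp{p-1},\Theta_{\leq p}=0)$ and $E'=(\Ememp{p-1},\Ebatchp{p-1},\Theta_{<p}=0)$; note $E\subseteq E'$. From the batch-obliviousness bound on $\Pr(\Theta_{p}=0\mid E')$ and an application of \Cref{lem:batch-storing-limit} combined with a union bound over $q'\in(p,P]$, one shows $\Pr(E\mid E')=\Omega(1)$, so $\Exp[\samplebatch{q}\mid E]\le \Exp[\sampleb{p}\mid E']/\Pr(E\mid E')= O(n/(\gamma_{p+1} P^{30}))$. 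I then modify the reduction to \emph{abort} whenever the pulls spent on $B_{q}$ exceed the \Cref{lem:sample-success-tradeoff} threshold $S^{\star}=\frac{1}{100}\cdot \frac{\gamma^{2}k}{\alpha \beta^{2}}=\Theta\!\left(\frac{n}{P^{8}\eta_q^{2}}\right)$; Markov's inequality conditional on $E$ bounds the abort probability by $1/(4P^{2})$ for sufficiently separated $\gamma_{p+1}$ and $\eta_q^{2}$ (which is precisely the role of the polynomial gap $\eta_{p+1}\leq (1/P^{15})\eta_p$ in the construction).

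Combining the three ingredients yields the conclusion: applied to the aborted reduction, \Cref{lem:sample-success-tradeoff} gives that with probability at least $1-2\gamma^{1/2}=1-\frac{1}{4P^{2}}$ over $\sPicap{q}^{1:p}$ (unconditionally in the reduction's probability space), the posterior on $\Theta_{q}$ lies in $[\frac{1}{2P}-\frac{1}{4P^{2}},\frac{1}{2P}+\frac{1}{4P^{2}}]$; adding the abort probability and converting the unconditional bound to one conditioned on $E$ (losing a factor of $1/\Pr(E)$ which is $O(1)$ by the assumed lower bound $(1-\frac{1}{2P})^{10(p-1)+5}$) gives the desired $1-\frac{1}{2P^{2}}$ bound. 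The main obstacle I anticipate is bookkeeping the conditioning: the lemma's assumption provides an expected sample bound under the weaker event $E'$ while the conclusion is to be stated under the stronger event $E$, and the reduction's probability space is yet a third distribution induced by \Cref{lem:sample-success-tradeoff}. Setting up the correspondence carefully, so that the $\Omega(1)$ probability losses absorb into the factor of $4P^{2}$ slack between $2\gamma^{1/2}$ and $\frac{1}{2P^{2}}$, is the delicate part of the argument.
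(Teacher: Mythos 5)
Your proposal follows essentially the same route as the paper's proof: simulate the stream around $B_q$ with the remaining batches drawn from $\cD(P,C)$ conditioned on $\Theta_{\leq p}=0$, abort once the pulls on $B_q$ exceed the budget of \Cref{lem:sample-success-tradeoff}, use the hypotheses to turn the expected sample bound into a Markov bound on the abort probability, and transfer the resulting posterior guarantee back to the event $(\Ememp{p},\Ebatchp{p-1},\Theta_{\leq p}=0)$. The architecture is right, and the abort arithmetic goes through with enormous slack: the ratio of your threshold $S^{\star}$ to the conditional expectation of the pulls is a large power of $P$, precisely because of the $P^{15}$ separation between consecutive $\eta$'s.

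The step that fails as written is the parameter choice $\gamma=\frac{1}{64P^{4}}$, i.e., $2\gamma^{1/2}=\frac{1}{4P^{2}}$. In \Cref{lem:sample-success-tradeoff} the quantity $2\gamma^{1/2}$ plays two roles simultaneously --- it is both the width of the posterior interval and the failure probability over the transcript --- and you have pinned it to the interval width $\frac{1}{4P^{2}}$ demanded by the conclusion. But the failure probability delivered by \Cref{lem:sample-success-tradeoff} lives in the reduction's \emph{unconditional} probability space, whereas the conclusion must hold conditioned on $(\Ememp{p},\Ebatchp{p-1},\Theta_{\leq p}=0)$, whose probability is only guaranteed to be at least $(1-\frac{1}{2P})^{10(p-1)+5}$, roughly $\frac{1}{200}$. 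The conversion therefore multiplies the failure probability by up to about $200$, yielding roughly $\frac{50}{P^{2}}$, far above the required $\frac{1}{2P^{2}}$; your remark that there is ``a factor of $4P^{2}$ slack between $2\gamma^{1/2}$ and $\frac{1}{2P^{2}}$'' is not correct with your parameters --- the slack is only a factor of $2$. The repair is to decouple the two roles, which is exactly what the paper does: take $2\gamma^{1/2}=\frac{1}{500P^{2}}$, note that any transcript whose posterior deviates from $\frac{1}{2P}$ by more than $\frac{1}{4P^{2}}$ a fortiori deviates by more than $2\gamma^{1/2}$, and then the unconditional failure probability $\frac{1}{500P^{2}}$ survives the factor-$200$ conversion (plus the abort probability) and still lands below $\frac{1}{2P^{2}}$. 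Shrinking $\gamma$ shrinks the sample threshold $S^{\star}$ only by a fixed power of $P$, which the $P^{30}$ headroom in condition \Cref{line:small-exp-sample} absorbs comfortably.
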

\begin{proof}
We only show the proof for the upper bound since the lower bound follows in the same manner.
Similar to the proof of \Cref{lem:batch-storing-limit}, we show an algorithm (reduction) from the offline \Cref{lem:sample-success-tradeoff} to the streaming algorithm as follows. 
\begin{tbox}
	\textbf{An algorithm (reduction) for the problem in \Cref{lem:sample-success-tradeoff}}
	
	\smallskip
	
	\textbf{Input:} $B_{q}$: $k$ arms following the distribution of \Cref{lem:sample-success-tradeoff} with $\beta=\eta_{q}$; \\
	\textbf{Input:} $\ALG$: a streaming algorithm that with probability more than $1/2P^2$ conditioning on the conditions of \Cref{lem:batch-learning-limit}, outputs memory and transcript $\Pi^{1:p}$ and $M^{p}$ such that $\Pr(\Theta_{q}=1\mid \sPi^{1:p}=\Pi^{1:p}, \sM^{p}=M^{p}, \Theta_{< p+1}=0) > \frac{1}{2P} + \frac{1}{4P^2}$. \\ 
	\textbf{Parameters:} $\gamma^{1/2} = \frac{1}{250 P^2} \qquad k=\frac{n}{P+1} \qquad \beta=\eta_{q}$. 
	
	\smallskip
	
	\begin{enumerate}
		\item Sample an underlying instance from $\cD(P,C)$ for $\ALG$ as follows
		\begin{enumerate}
			\item Parameters $\eta_{r}$ in $\cD(P,C)$ as follows: let $(P-q+1)$ parameters $\eta_{r}$ follwing the arriving order before $B_{q}$, and let $(q-1)$ parameters $\eta_{r}$ follwing the arriving order after $B_{q}$. 
			\item Ensure the condition of $\Theta_{\leq p}=0$, and sample each $\Theta_{r}=1$ for $r\not\in [p]\cup \{q\}$ with probability $1/2P$ (exactly as in $\cD(P,C)$ for these batches).
			\item Sample $P$ batches of $\frac{n}{P+1}$ arms with the above setting, and concatenate them with $B_q$ to get the stream.
		\end{enumerate}
		\item Run the streaming algorithm $\ALG$ on the instance:
		\begin{enumerate}
			\item For each pass, sample exactly as $\ALG$ does and maintain the local memory exactly as the memory of $\ALG$.
			\item At any point, if the number of samples is more than $\frac{1}{5}\cdot \frac{1}{10^5}\cdot \frac{n}{\gamma_{p+1} \cdot P^{30}}$ on batch $q$, abort the algorithm and output ``failure''.
			\item If the algorithm does \emph{not} output ``failure'', at the end of the $p$-th pass, evaluate $\Theta_{q}$ with maximum likelihood estimation, i.e., let the trasncript and memory of the algorithm be $(\Pi^{1:p}, M^{p})$, we output the distribution of 
			\[\paren{\Theta_{q}\mid \sPi^{1:p}=\Pi^{1:p}, \sM^{p}=M^{p}, \Theta_{\leq p}=0}.\]
		\end{enumerate} 
	\end{enumerate}
\end{tbox}
We show that with probability more than $2\gamma^{1/2}$, the algorithm returns $\Pr(\Theta=1\mid \sPi=\Pi)>\frac{1}{2P} + 2\gamma^{1/2}$. Define a pair of transcript and memory $\sPi^{1:p}=\Pi^{1:p}, \sM^{p}=M^{p}$ as \emph{informative} if $\Pr(\Theta_{q}=1\mid \sPi^{1:p}=\Pi^{1:p}, \sM^{p}=M^{p}, \Theta_{< p+1}=0)>\frac{1}{2P} + 2\gamma^{1/2}$, and define $\Einform{p}{q}$ as the \emph{event} for the streaming algorithm to produce informative transcript and memory by the end of pass $p$ for batch $q$. We first lower bound the probability of the event as follows. 
\begin{align*}
& \Pr\paren{\Einform{p}{q}\mid \Theta_{< p+1}=0}\\
& \geq \Pr\paren{\Einform{p}{q}\mid \Ememp{p}, \Ebatchp{p-1}, \Theta_{< p+1}=0}\cdot \Pr\paren{\Ememp{p}, \Ebatchp{p-1}\mid \Theta_{< p+1}=0}\\
& \geq \Pr\paren{\Einform{p}{q}\mid \Ememp{p}, \Ebatchp{p-1}, \Theta_{< p+1}=0}\cdot \Pr\paren{\Ememp{p}, \Ebatchp{p-1}, \Theta_{< p+1}=0}\\
&\geq \Pr\paren{\Einform{p}{q}\mid \Ememp{p}, \Ebatchp{p-1}, \Theta_{< p+1}=0} \cdot \paren{1-\frac{1}{2P}}^{10(p-1)+5}\\
&\geq \Pr\paren{\Einform{p}{q}\mid \Ememp{p}, \Ebatchp{p-1}, \Theta_{< p+1}=0} \cdot \frac{1}{200},
\end{align*}
where the second-last line is from the condition of \Cref{lem:batch-learning-limit}, and the last line uses $p\leq P$ and $P\geq 2$. We now provide an upper bound on the expected number of arm pulls using the condition of \Cref{line:small-exp-sample} in \Cref{lem:nfl-no-learn} with the \emph{extra condition} of $\Ememp{p}$. To this end, we first upper bound $\Pr(\Ememp{p}, \Ebatchp{p-1}, \Theta_{\leq p}=0)$ with the term $\Pr(\Ememp{p}, \Theta_{p}=0 \mid \Ebatchp{p-1}, \Ememp{p-1}, \Theta_{<p}=0)$ as follows.
\begin{align*}
	&\Pr\paren{\Ememp{p}, \Ebatchp{p-1}, \Theta_{\leq p}=0}\\
	&= \Pr\paren{\Ememp{p}, \Ebatchp{p-1}, \Theta_{\leq p}=0\mid \Ememp{p-1}}\cdot \Pr\paren{\Ememp{p-1}} \tag{$\Ememp{p}$ cannot happen if $\Ememp{p-1}$ does \emph{not} happen}\\
	&\leq \Pr\paren{\Ememp{p}, \Ebatchp{p-1}, \Theta_{\leq p}=0\mid \Ememp{p-1}}\\
	&= \Pr\paren{\Ememp{p}, \Ebatchp{p-1}, \Theta_{<p}=0, \Theta_{p}=0\mid \Ememp{p-1}}\\
	&= \Pr\paren{\Ememp{p}, \Theta_{p}=0 \mid \Ebatchp{p-1}, \Ememp{p-1}, \Theta_{<p}=0}\cdot \Pr\paren{\Ebatchp{p-1}, \Theta_{<p}=0\mid \Ememp{p-1}}\\
	&\leq \Pr\paren{\Ememp{p}, \Theta_{p}=0 \mid \Ebatchp{p-1}, \Ememp{p-1}, \Theta_{<p}=0}.
\end{align*}
With the above inequality, we can bound the expected samples on $\sampleb{p}$ as follows.
\begin{align*}
	& \expect{\sampleb{p} \mid \Ebatchp{p-1}, \Ememp{p-1}, \Ememp{p}, \Theta_{\leq p}=0}\\
	& \leq \expect{\sampleb{p} \mid \Ebatchp{p-1}, \Ememp{p-1}, \Theta_{<p}=0}\cdot \frac{1}{\Pr\paren{\Ememp{p}, \Theta_{p}=0 \mid \Ebatchp{p-1}, \Ememp{p-1}, \Theta_{<p}=0}}\\
	&\leq \expect{\sampleb{p} \mid \Ebatchp{p-1}, \Ememp{p-1}, \Theta_{<p}=0}\cdot \frac{1}{\Pr\paren{\Ememp{p}, \Ebatchp{p-1}, \Theta_{\leq p}=0}}\\
	& \leq \expect{\sampleb{p} \mid \Ebatchp{p-1}, \Ememp{p-1}, \Theta_{<p}=0}\cdot \frac{1}{\paren{1-\frac{1}{2P}}^{10(p-1)+5}} \tag{by the condition in \Cref{lem:batch-learning-limit}}\\
	&\leq \expect{\sampleb{p} \mid \Ebatchp{p-1}, \Ememp{p-1}, \Theta_{<p}=0}\cdot 200 \\
	&\leq \frac{1}{5}\cdot \frac{1}{10^6}\cdot \frac{n}{\gamma_{p+1} \cdot P^{30}}. \tag{using \Cref{line:small-exp-sample} in \Cref{lem:nfl-no-learn}}
\end{align*}
Once again, note that the total arm pulls we used on batch $q$ is a subset of the arm pulls measured by $\sampleb{p}$. Therefore, by the Markov bound, with probability at least $9/10$ \emph{conditioning on} the events of $\Ebatchp{p-1}, \Ememp{p-1}, \Ememp{p}, \Theta_{\leq p}=0$, the algorithm will \emph{not} return failure. 

Note that by running the reduction, the offline algorithm has access of $\Pi^{1:p}$ and $M^{P}$ by the end of pass $p$. 
Hence, if we have 
\[\Pr(\Theta_{q}=1\mid \sPi^{1:p}=\Pi^{1:p}, \sM^{p}=M^{p}, \Theta_{< p+1}=0)>\frac{1}{2P} + 2\gamma^{1/2},\] 
there is 
\[\Pr(\Theta=1\mid \sPi=\Pi)> \frac{1}{2P} + 2\gamma^{1/2}\] 
from the perspective of the offline algorithm. Therefore, we have that
\begin{align*}
	\Pr\paren{\Pr(\Theta=1\mid \sPi=\Pi)> \frac{1}{2P} + 2\gamma^{1/2}}&= \Pr\paren{\Pr(\Theta=1\mid \sPi=\Pi)> \frac{1}{2P} + 2\gamma^{1/2}\mid \Theta_{<{p+1}}=0} \tag{$\Theta_{<{p+1}}=0$ is ensured in the instances}\\
	& = \Pr\paren{\Einform{p}{q} \mid \Theta_{<{p+1}}=0}.
\end{align*}
As such, we can combine this with the lower bound of $\Pr(\Einform{p}{q}\mid \Theta_{< p+1}=0)$ to get that if we have 
\[\Pr\paren{\Einform{p}{q}\mid \Ememp{p}, \Ebatchp{p-1}, \Theta_{< p+1}=0} \geq \frac{1}{2P^2},\] 
it implies that
\begin{align*}
	\Pr\paren{\Pr(\Theta=1\mid \sPi=\Pi)> \frac{1}{2P} + 2\gamma^{1/2}} &=
	\Pr\paren{\Einform{p}{q}\mid \Theta_{< p+1}=0} \\
	& \geq \Pr\paren{\Einform{p}{q}\mid \Ememp{p}, \Ebatchp{p-1}, \Theta_{< p+1}=0}\cdot \frac{1}{200}\\
	&\geq \frac{1}{2P^2} \cdot \frac{1}{200}\cdot \frac{9}{10} \geq \frac{1}{500P^2}.
\end{align*}
Furthermore, the algorithm uses at most $\frac{1}{5}\cdot \frac{1}{10^5}\cdot \frac{n}{\gamma_{p+1} \cdot P^{30}}$ arm pulls on batch $B_{q}$.

Let $2\gamma^{1/2}=\frac{1}{500P^2}$, we have the guarantee of the knowledge on $\Theta$ (of the offline algorithm) becomes $\Pr(\Theta=1\mid \sPi=\Pi)>\frac{1}{2P} + \frac{1}{4P^2}> \frac{1}{2P} + \frac{1}{500 P^2}$. Therefore, such a algorithm should require $\frac{1}{100}\cdot \frac{\gamma^{2} \cdot k}{\alpha \cdot \beta^2}$ arm pulls. For any $P\geq 2$, there is $\frac{100\cdot (250 P^2)^4\cdot (P+1)}{2P}< 5\cdot 10^5\cdot P^{30}$, which implies $\frac{1}{5}\cdot \frac{1}{10^5}\cdot \frac{n}{\gamma_{p+1} \cdot P^{30}}< \frac{1}{100}\cdot \frac{\gamma^{2} \cdot k}{\alpha \cdot \beta^2}$ arm pulls ($\beta=\eta_{q}\leq \sqrt{\gamma_{p+1}}$). This forms a contradiction with \Cref{lem:sample-success-tradeoff}, which proves the lemma.
\end{proof}

We are ready proceed to the proof of the main claims in \Cref{lem:nfl-no-learn} as follows. 

\begin{proof}[\textbf{Proof of \Cref{lem:nfl-no-learn}}]
We first lower bound the probability of $\Pr(\Ebatchp{p}, \Ememp{p}, \Theta_{<p+1}=0)$ as a function of $\Pr(\Ebatchp{p}, \Ememp{p}\mid \Ebatchp{p-1}, \Ememp{p-1}, \Theta_{<p+1}=0)$. To this end, we lower bound $\Pr(\Ebatchp{p}, \Ememp{p}, \Theta_{<p+1}=0)$ as follows:
\begin{align*}
	& \Pr\paren{\Ebatchp{p}, \Ememp{p}, \Theta_{<p+1}=0} \\
	& \geq \Pr\paren{\Ebatchp{p}, \Ememp{p}, \Theta_{<p+1}=0 \mid \Ebatchp{p-1}, \Ememp{p-1}, \Theta_{<p}=0}\cdot \Pr\paren{\Ebatchp{p-1}, \Ememp{p-1}, \Theta_{<p}=0},
\end{align*}
in which the first term of the right hand side can be factored to
\begin{align*}
	& \Pr\paren{\Ebatchp{p}, \Ememp{p}, \Theta_{<p+1}=0 \mid \Ebatchp{p-1}, \Ememp{p-1}, \Theta_{<p}=0} \\
	&=\Pr\paren{\Theta_{<p+1}=0\mid \Ebatchp{p-1}, \Ememp{p-1}, \Theta_{<p} = 0}\cdot \Pr\paren{\Ebatchp{p}, \Ememp{p}\mid \Ebatchp{p-1}, \Ememp{p-1}, \Theta_{<p+1} = 0}\\
	&=  \Pr\paren{\Theta_{p}=0\mid \Ebatchp{p-1}, \Ememp{p-1}, \Theta_{<p} = 0}\cdot \Pr\paren{\Ebatchp{p}, \Ememp{p}\mid \Ebatchp{p-1}, \Ememp{p-1}, \Theta_{<p+1} = 0} \tag{expanding the condition of $\Theta_{<p+1}=0$}\\
	&\geq \paren{1-\frac{1}{2P}-\frac{1}{4P^2}}\cdot \Pr\paren{\Ebatchp{p}, \Ememp{p}\mid \Ebatchp{p-1}, \Ememp{p-1}, \Theta_{<p+1} = 0}  \tag{by batch-obliviousness by the end of pass $(p-1)$}\\
	&\geq (1-\frac{3}{4P})\cdot \Pr\paren{\Ebatchp{p}, \Ememp{p}\mid \Ebatchp{p-1}, \Ememp{p-1}, \Theta_{<p+1} = 0}.
\end{align*}
Therefore, we obtain a valid lower bound for $\Pr\paren{\Ebatchp{p}, \Ememp{p}, \Theta_{<p+1}=0}$ as 
\begin{align*}
	& \Pr\paren{\Ebatchp{p}, \Ememp{p}, \Theta_{<p+1}=0}\\
	&\geq (1-\frac{3}{4P})\cdot \Pr\paren{\Ebatchp{p-1}, \Ememp{p-1}, \Theta_{<p}=0} \cdot  \Pr\paren{\Ebatchp{p}, \Ememp{p}\mid \Ebatchp{p-1}, \Ememp{p-1}, \Theta_{<p+1} = 0}\\
	&\geq (1-\frac{3}{4P}) \cdot \paren{1-\frac{1}{2P}}^{10(p-1)}\cdot \Pr\paren{\Ebatchp{p}, \Ememp{p}\mid \Ebatchp{p-1}, \Ememp{p-1}, \Theta_{<p+1} = 0}.
\end{align*}
As such, we only need to lower boud the last term, i.e., the term of $\Pr(\Ebatchp{p}, \Ememp{p}\mid \Ebatchp{p-1}, \Ememp{p-1}, \Theta_{<p+1} = 0)$. To this end, we further write the probability as
\begin{align*}
	& \Pr\paren{\Ebatchp{p}, \Ememp{p}\mid \Ebatchp{p-1}, \Ememp{p-1}, \Theta_{<p+1} = 0} \\
	&= \Pr\paren{\Ebatchp{p}\mid \Ememp{p}, \Ebatchp{p-1}, \Ememp{p-1}, \Theta_{<p+1}=0}\cdot \Pr\paren{\Ememp{p}\mid \Ebatchp{p-1}, \Ememp{p-1}, \Theta_{<p+1}=0}.
\end{align*}
We start with bounding the term $\Pr\paren{\Ememp{p}\mid \Ebatchp{p-1}, \Ememp{p-1}, \Theta_{<p+1}=0}$, i.e., the memory-oblivious proof.
\paragraph{Memory oblivious proof.} For each batch $q\in [P+1]$, we define the following event.
\begin{center}
	$\Emem{q}$: the event that $\memory{\ALG}$ contains an arm with mean reward more than $\frac{1}{2}$ from arms in batch $B_q$.
\end{center}
Recall that $\Ememp{p}$ is the \emph{event} that $\ALG$ is memory-oblivious \emph{after} pass $p$. By a simple union bound, we can bound the probability for $\Ememp{p}$ \emph{not} to happen, i.e., the memory contains at least one arm with reward strictly more than $\frac{1}{2}$, as follows.
\begin{align*}
	& \Pr\paren{\neg\, \Ememp{p} \mid \Ebatchp{p-1}, \Ememp{p-1}, \Theta_{<p+1}=0} \\
	& \leq \sum_{q\in [P+1]} \Pr\paren{\Emem{q} \mid \Ebatchp{p-1}, \Ememp{p-1}, \Theta_{<p+1}=0}.
\end{align*}
It suffices to upper bound each conditional probability term of $\Emem{q}$. To this end, we bound terms for $q$ of different types.

\noindent
\textbf{The case of $q\in (p,P+1]$.} In this case, we might have $\Theta_{P+1}=1$ for the batch $B_q$.  
As such, we can use \Cref{lem:batch-storing-limit} to obtain that
\begin{align*}
	& \Pr\paren{\Emem{P+1} \mid \Ebatchp{p-1}, \Ememp{p-1}, \Theta_{<p+1}=0} \\
	& = \Exp_{\substack{\sPi^{1:p} ,\sM^{p}}} \bracket{\Pr\paren{\Emem{P+1} \mid \sPi^{1:p}=\Pi^{1:p}, \sM^{p}=M^{p}, \Ebatchp{p-1}, \Ememp{p-1}, \Theta_{<p+1}=0} } \\
	& \leq \frac{1}{2P^2} \tag{using \Cref{lem:batch-storing-limit}}.
\end{align*}

In particular, the last line uses \Cref{lem:batch-storing-limit} by the conditions of $a).$ conditions \Cref{line:induc-prev-prob,line:small-exp-sample} of \Cref{lem:nfl-no-learn} and $b).$ the underlying instance satisfied $\Theta_{\leq p}=0$. These conditions exactly satisfy the requirements of \Cref{lem:batch-storing-limit}.

\noindent
\textbf{The case of $q \leq p$.} 
Note that we have conditioned on the event that $\Theta_{< p+1}=0$. Therefore, we always have
\begin{align*}
	\Pr\paren{\Emem{q} \mid \Ebatchp{p-1}, \Ememp{p-1}, \Theta_{<p+1}=0} = 0.
\end{align*}

\noindent
\textbf{Summarizing the case analysis for \emph{memory obliviousness}.}
By the above cases analysis, we can obtain that 
\begin{align*}
	& \Pr\paren{\neg\, \Ememp{p} \mid \Ebatchp{p-1}, \Ememp{p-1}, \Theta_{<p+1}=0} \\
	& \leq \underbrace{\frac{1}{2P^2}}_{q=P+1} + \underbrace{(P-p+1)\cdot \frac{1}{2P^2}}_{q\in (p,P]} \leq \frac{1}{P}. 
\end{align*}
Therefore, using the conditions \Cref{line:induc-prev-prob} and \Cref{line:small-exp-sample} of \Cref{lem:nfl-no-learn} and conditioning on $\Theta_{<p+1}=0$, the probability for $\ALG$ to be memory-oblivious by the end of the $p$-th pass is at least $(1-\frac{1}{P})\geq (1-\frac{1}{2P})^{3}$ (holds for every $P\geq 2$), i.e.,
\begin{equation}
\label{equ:mem-oblivious-keep}
\Pr\paren{\Ememp{p} \mid \Ememp{p-1}, \Ebatchp{p-1}, \Theta_{\leq p}=0} \geq \paren{1-\frac{1}{2P}}^{3}.
\end{equation} 

\paragraph{Batch oblivious proof.}
We now proceed to the proof of the batch-oblivious property, which completes the building blocks for the proof of \Cref{lem:nfl-no-learn}. We first note that by our analysis for the memory obliviousness, we have

\begin{align*}
	& \Pr\paren{\Ememp{p}, \Ebatchp{p-1}, \Theta_{\leq p}=0} \\
	&\geq \Pr\paren{\Ememp{p}, \Ememp{p-1}, \Ebatchp{p-1}, \Theta_{\leq p}=0}\\
	&\geq \Pr\paren{\Ememp{p} \mid \Ememp{p-1}, \Ebatchp{p-1}, \Theta_{\leq p}=0}\cdot \Pr\paren{\Ememp{p-1}, \Ebatchp{p-1}, \Theta_{\leq p}=0}\\
	&\geq \Pr\paren{\Ememp{p} \mid \Ememp{p-1}, \Ebatchp{p-1}, \Theta_{\leq p}=0}\cdot {\Pr\paren{\Theta_{\leq p}=0\mid \Ebatchp{p-1}, \Ememp{p-1}, \Theta_{<p} = 0}}\\
	& \hspace{220pt} \cdot \Pr\paren{\Ebatchp{p-1}, \Ememp{p-1}, \Theta_{<p} = 0}\\
	&\geq \paren{1-\frac{1}{2P}}^{3} \cdot {\Pr\paren{\Theta_{p}=0\mid \Ebatchp{p-1}, \Ememp{p-1}, \Theta_{<p} = 0}} \cdot \Pr\paren{\Ebatchp{p-1}, \Ememp{p-1}, \Theta_{<p} = 0} \tag{by \Cref{equ:mem-oblivious-keep} and the definition of $\Theta_{\leq p}$}\\
	&\geq \paren{1-\frac{1}{2P}}^{3} \cdot \paren{1-\frac{3}{4P}}\cdot \Pr\paren{\Ebatchp{p-1}, \Ememp{p-1}, \Theta_{<p} = 0} \tag{by batch obliviousness}\\
	&\geq \paren{1-\frac{1}{2P}}^{3} \cdot \paren{1-\frac{3}{4P}} \cdot \paren{1-\frac{1}{2P}}^{10(p-1)}\\
	&\geq \paren{1-\frac{1}{2P}}^{10(p-1)+5}. \tag{using $P\geq 2$}
\end{align*}
As such, the condition for \Cref{lem:batch-learning-limit} is satisfied. Define $\Einform{p}{q}$ as the \emph{event} for the streaming algorithm to produce a pair of memory and transcript $\sPi^{1:p}=\Pi^{1:p}, \sM^{p}=M^{p}$ such that $\Pr(\Theta_{q}=1\mid \sPi^{1:p}=\Pi^{1:p}, \sM^{p}=M^{p}, \Theta_{< p+1}=0)>\frac{1}{2P} + \frac{1}{4P^2}$ by the end of pass $p$ for batch $q$ (in the same way of \Cref{lem:batch-learning-limit}). By applying \Cref{lem:batch-learning-limit}, we have
\begin{align*}
	\Pr\paren{\Einform{p}{q} \mid \Ememp{p}, \Ebatchp{p-1}, \Theta_{\leq p}=0} \leq \frac{1}{2P^2}. 
\end{align*}
Therefore, by a union bound, we have that 
\begin{align*}
\Pr\paren{\Ebatchp{p}\mid \Ememp{p}, \Ebatchp{p-1}, \Theta_{\leq p}=0} \geq \paren{1-\frac{1}{2P}}. 
\end{align*}

\paragraph{Finalizing the proof of \Cref{lem:nfl-no-learn}.} Recall that in the begining of the proof, we have shown that
\begin{align*}
	& \Pr\paren{\Ebatchp{p}, \Ememp{p}, \Theta_{<p+1}=0}\\
	&\geq (1-\frac{3}{4P}) \cdot \paren{1-\frac{1}{2P}}^{10(p-1)}\cdot \Pr\paren{\Ebatchp{p}, \Ememp{p}\mid \Ebatchp{p-1}, \Ememp{p-1}, \Theta_{<p+1} = 0}\\
	&= (1-\frac{3}{4P}) \cdot \paren{1-\frac{1}{2P}}^{10(p-1)}\cdot \Pr\paren{\Ebatchp{p}\mid \Ememp{p}, \Ebatchp{p-1}, \Ememp{p-1}, \Theta_{<p+1}=0}\\
	& \hspace{200pt} \cdot \Pr\paren{\Ememp{p}\mid \Ebatchp{p-1}, \Ememp{p-1}, \Theta_{<p+1}=0}.
\end{align*}
Since $\Ememp{p}$ implies $\Ememp{p-1}$, we can bound the above chain of inequalities as
\begin{align*}
	& \Pr\paren{\Ebatchp{p}, \Ememp{p}, \Theta_{<p+1}=0}\\
	& \geq (1-\frac{3}{4P}) \cdot \paren{1-\frac{1}{2P}}^{10(p-1)} \cdot\paren{1-\frac{1}{2P}}^3 \cdot \Pr\paren{\Ebatchp{p}\mid \Ememp{p}, \Ebatchp{p-1}, \Theta_{<p+1}=0} \tag{the analysis for memory obliviousness} \\
	& \geq (1-\frac{3}{4P}) \cdot \paren{1-\frac{1}{2P}}^{10(p-1)} \cdot\paren{1-\frac{1}{2P}}^3 \cdot \paren{1-\frac{1}{2P}} \tag{analysis for batch obliviousness}\\
	& \geq \paren{1-\frac{1}{2P}}^{10p},
\end{align*}
as desired in the lemma statement. \Qed{lem:nfl-no-learn}
\end{proof}



\subsection{The Radical Case} 
\label{subsec:radical-case}
We now focus on the radical case when the algorithm makes ``too many'' arm pulls for the ``early batches'' in the first $p$ passes while being \emph{oblivious} to these batches.

\begin{lemma}[Radical case]
\label{lem:nfl-sample}
For any integer $p \in [P]$, suppose a streaming algorithm $\ALG$ is memory- and batch-oblivious at the end of the pass $p-1$, and that the underlying instance satisfies $\Theta_{<p} = 0$. 
Additionally, suppose
\[
	\Exp\bracket{\sampleb{p} \mid \Ebatchp{p-1}, \Ememp{p-1}, \Theta_{<p}=0}  > \frac{1}{10^9}\cdot \frac{n}{\gamma_{p+1} \cdot P^{30}};
\]
then, 
\[
	\Exp\bracket{\sample{\ALG} \mid \Ebatchp{p-1}, \Ememp{p-1}, \eventF(p)} > 20000 \cdot C \cdot \frac{n}{\eta_{p}^2}. 
\]
\end{lemma}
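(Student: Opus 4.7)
The plan is to reduce the claim via $\sample{\ALG} \geq \sampleb{p}$ to lower bounding $\Exp[\sampleb{p} \mid \eventF(p), E]$, with $E := \Ebatchp{p-1} \cap \Ememp{p-1}$, and then transfer the hypothesized lower bound from the conditioning $\{\Theta_{<p}=0\}$ to $\eventF(p)$ via a Bayesian change of measure. Bayes' identity gives
\[
\Exp\bracket{\sampleb{p} \mid \eventF(p), E}
\;=\;
\frac{\Exp\bracket{\sampleb{p} \cdot \mathbf{1}[\Theta_p = 1] \mid \Theta_{<p}=0, E}}{\Pr\paren{\Theta_p = 1 \mid \Theta_{<p}=0, E}},
\]
and batch-obliviousness controls both sides. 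For every state $(\Pi, M) \in E$, the pointwise posterior $\Pr(\Theta_p = 1 \mid \sPi^{1:p-1} = \Pi, \sM^{p-1} = M, \Theta_{<p}=0)$ lies in $[1/(2P) - 1/(4P^2),\ 1/(2P) + 1/(4P^2)]$; averaging over $(\Pi,M)\in E$ bounds the denominator above by $3/(4P)$, while the pointwise lower bound $1/(4P)$ can be pulled out of the numerator after splitting by the state.

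The core task is then to lower bound $\Exp[\sampleb{p} \mid \Pi, M, \Theta_p = 1, \Theta_{<p}=0]$ in terms of $\Exp[\sampleb{p} \mid \Pi, M, \Theta_{<p}=0]$. This is the main obstacle, because the algorithm's behavior in pass $p$ can genuinely depend on $\Theta_p$ once $B_p$ arrives. My plan is to decompose $\sampleb{p} = \sampleb{p}^{(\le p-1)} + \sampleb{p}^{(p,\mathrm{pre})} + \sampleb{p}^{(p,\mathrm{post})}$ into samples on $B_{p+1},\ldots,B_{P+1}$ in the first $p-1$ passes, in pass $p$ before $B_p$ enters the stream (recall batches arrive in reverse), and in pass $p$ after $B_p$ enters. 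The first piece is a deterministic function of $(\Pi, M)$. The second is produced by an execution that has not yet interacted with $B_p$ and is therefore independent of $\Theta_p$ given $(\Pi, M)$ together with the rewards of $B_{p+1},\ldots,B_{P+1}$; combining this with the simultaneous near-prior marginals on $\Theta_{p+1},\ldots,\Theta_{P+1}$ also granted by $\Ebatchp{p-1}$ shows its expectation changes by at most a constant factor under the further conditioning on $\Theta_p = 1$. The third piece is non-negative and can indeed shrink when the algorithm detects $B_p$'s special arm and aborts, but discarding it in the lower bound costs only a constant factor.

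Putting the pieces together yields $\Exp[\sampleb{p} \mid \eventF(p), E] \geq c \cdot \Exp[\sampleb{p} \mid \Theta_{<p}=0, E]$ for an absolute constant $c > 0$, and the hypothesis of the lemma inflates this to at least $(c/10^9) \cdot n/(\gamma_{p+1} \cdot P^{30})$. The geometric parameter choice in \Cref{eq:gap-parameter} gives $\eta_p^2/\eta_{p+1}^2 = (6CP)^{30}$, so the right-hand side equals $\bigl(c\cdot(6C)^{30}/10^9\bigr)\cdot n/\eta_p^2$; for $C \geq 1$ the numerical factor $6^{30}/10^9 \gtrsim 10^{14}$ easily dominates the target $20000\cdot C$, and the enormous slack means that even a much weaker state-conditional comparison (losing, say, a $\mathrm{poly}(P)$ factor instead of a constant) would still close the argument.
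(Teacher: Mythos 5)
Your overall skeleton matches the paper's: reduce to $\sampleb{p}$ via $\sample{\ALG}\geq \sampleb{p}$, use batch-obliviousness and Bayes' rule to move the conditioning from $\Theta_{<p}=0$ to $\eventF(p)$ at the cost of a constant factor, and finish with the parameter identity $\eta_p^2/\eta_{p+1}^2=(6CP)^{30}$, which leaves enormous numerical slack. The divergence is precisely at the step you yourself flag as the main obstacle, and there your argument has a genuine gap.

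You split $\sampleb{p}$ into pulls made in passes $<p$, pulls made in pass $p$ before $B_p$ arrives, and pulls made in pass $p$ after $B_p$ arrives (necessarily on stored arms from the later-arriving batches), and you propose to discard the third piece, asserting this ``costs only a constant factor.'' That assertion is unsupported and false in general: nothing prevents the third piece from carrying essentially all of $\Exp[\sampleb{p}\mid \Theta_{<p}=0,\Pi,M]$. For instance, an algorithm can store a few arms from $B_{P+1}$ and make all of its $\sampleb{p}$-counted pulls at the very end of pass $p$, after processing $B_p$; then your first two pieces are identically zero and the resulting lower bound is vacuous. Worse, since those late pulls can be adaptive on what was observed in $B_p$, $\Exp[\text{piece}_3\mid \Theta_p=1,\cdots]$ can legitimately be far smaller than $\Exp[\text{piece}_3\mid \Theta_{<p}=0,\cdots]$, so there is no state-conditional comparison to fall back on for that piece either. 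The paper handles exactly this issue by declaring, just before \Cref{clm:batch-q-sample-irre}, the normalization that all pass-$p$ arm pulls counted in $\sampleb{p}$ are performed before $\ja$ reaches $B_p$ (i.e., your third piece is empty by fiat, via a reordering argument), and then proving by a data-processing-inequality argument that, conditioned on $(\Pi^{1:p-1},M^{p-1})$ and the oblivious events, the realized value of $\sampleb{p}$ reveals nothing about $\Theta_p$; batch-obliviousness then yields the pointwise Bayes-factor bound $\Pr(\cS=s\mid\Theta_p=1)\geq\frac12\Pr(\cS=s)$ of \Cref{lem:high-sample-extra-cond}. To repair your write-up you must either adopt and justify that reordering convention, or give an actual bound on the third piece; the numerical slack in the final computation does not help, because the loss from discarding a potentially dominant piece is not a bounded factor.
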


Note that the statement of \Cref{lem:nfl-sample} does \emph{not} use the exact ``symmetric'' condition of \Cref{lem:nfl-no-learn} -- we write in this way on purpose, and its usage will be clear in \Cref{subsec:proof-main-theorem}.

We prove \Cref{lem:nfl-sample} for the rest of \Cref{subsec:radical-case}. For technical reasons, for the proof of  \Cref{lem:nfl-sample}, we assume w.log. that on the $p$-th pass, the arm pulls of $\sampleb{p}$ are conducted \emph{before} $\ja$ enters batch $p$, i.e., before the arrival of the arms in the batch $p$. For any algorithm that does \emph{not} satisfy this property, we can re-arrange the order of arm pulls on the $p$-th pass without changing the total number of arm pulls. 

We first show a technical claim that conditioning on any (fixed) transcript and the memory by the end of the $(p-1)$-th pass, the knowledge of the algorithm on $\Theta_{q}$ is \emph{independent} of the arm pulls we used in $\sampleb{q}$.
Conceptually, the claim asserts the simple fact that the arm pulls induced on arms outside $q$, conditioning on the transcript of \emph{all} previous passes, has nothing to do with the algorithm's knowledge for $\Theta_{q}$. We also explicitly use the conditions of $\Ebatchp{p-1}$ and $\Ememp{p-1}$ for technical reason that will be clear later.
\begin{claim}
\label{clm:batch-q-sample-irre}
For any integer $p \in [P]$, let $(\Pi^{1:p-1}, M^{p-1})$ be \emph{any} pair of transcript and memory. Then, for any realization of $\sampleb{q}=s$, i.e., the samples on batches arriving \emph{before} $q$, there is
\begin{align*}
	& \Pr\paren{\Theta_{q}=1 \mid \sampleb{q}=s, \sPi^{1:p-1}=\Pi^{1:p-1}, \sM^{p-1}=M^{p-1}, \Theta_{<p}=0,\Ebatchp{p-1}, \Ememp{p-1}} \\
	& = \Pr\paren{\Theta_{q}=1 \mid \sPi^{1:p-1}=\Pi^{1:p-1}, \sM^{p-1}=M^{p-1}, \Theta_{<p}=0, \Ebatchp{p-1}, \Ememp{p-1}}.
\end{align*}
\end{claim}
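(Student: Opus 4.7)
The plan is to establish that $\sampleb{q}$ is a deterministic function of quantities that are conditionally independent of $\Theta_q$ given the rest of the conditioning. First, I would note that the events $\Ebatchp{p-1}$ and $\Ememp{p-1}$ are deterministic functions of $(\Pi^{1:p-1}, M^{p-1})$, so they contribute no information beyond what is already conditioned on, and $\Theta_{<p}=0$ concerns only batches $1,\ldots,p-1$, which are a priori independent of $\Theta_q$ for $q > p$ by the construction of $\cD(P,C)$. Hence it suffices to show the cleaner statement $\Theta_q \perp \sampleb{q} \mid \sPi^{1:p-1}, \sM^{p-1}$.

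Next, I would unpack $\sampleb{q}$. By the w.l.o.g.\ convention fixed at the start of \Cref{subsec:radical-case}, every arm pull counted by $\sampleb{q}$ occurs during pass $p$, before the buffer index $\ja$ enters batch $B_q$, and targets an arm in some batch $B_r$ with $r > q$. Conditional on the starting state $(\Pi^{1:p-1}, M^{p-1})$ of the algorithm at the beginning of pass $p$, $\ALG$ is deterministic in the outcomes of its pulls, so one can write $\sampleb{q} = f\paren{\Pi^{1:p-1}, M^{p-1}, W_{-q}}$, where $W_{-q}$ denotes the sequence of reward outcomes observed on arms in $B_{q+1} \cup \cdots \cup B_{P+1}$ during pass $p$, up to the moment $\ja$ enters $B_q$.

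The heart of the argument is then the required conditional independence. By the construction of $\cD(P,C)$, the per-batch tuple $\paren{\Theta_r, \text{special-arm location in } B_r, \text{reward table of arms in } B_r}$ is mutually independent across $r$ in the prior. The transcript $\Pi^{1:p-1}$ decomposes as $(\Picap{1}^{1:p-1}, \ldots, \Picap{P+1}^{1:p-1})$, and each $\Picap{r}^{1:p-1}$ is consistent only with the per-batch data of $B_r$. A standard application of Bayes' rule to this product prior shows that the posterior of $\Theta_q$ given $(\Pi^{1:p-1}, M^{p-1})$ depends only on $\Picap{q}^{1:p-1}$, and in particular is independent of the per-batch data of every batch $B_r$ with $r \neq q$. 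Since $W_{-q}$ is a function only of the per-batch data of those other batches, we obtain $\Theta_q \perp W_{-q} \mid \sPi^{1:p-1}, \sM^{p-1}$, and hence $\Theta_q \perp \sampleb{q}$ as required, which yields the claimed equality.

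The main obstacle I anticipate is the bookkeeping around the adaptivity of $\ALG$: the sampling decisions, and even \emph{which} batch $r$ is queried next, can depend on previously observed rewards across all batches, so the per-batch sub-transcripts are not generated by disjoint processes. The cleanest way I foresee to handle this is to fix (or marginalize over) the algorithm's internal randomness so that its full execution becomes a deterministic function of the full table of potential arm-pull outcomes, and then observe that adaptivity only controls \emph{which} entries of this table are read, never their joint distribution. The prior factorization across batches is then preserved when we compute the likelihood of any fixed transcript, and the factorized posterior used above follows.
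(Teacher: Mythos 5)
Your overall route is the same rectangle-property argument the paper uses (the paper phrases it as a zero-mutual-information statement proved by the chain rule and then invokes the data-processing inequality, while you argue directly with a factorized Bayes posterior and deferred decisions), and your deferred-decisions handling of adaptivity is the right way to get the factorized posterior. However, there is a genuine gap in your first reduction step. The event $\Ememp{p-1}$ is \emph{not} a deterministic function of $(\Pi^{1:p-1}, M^{p-1})$: memory-obliviousness asserts that no \emph{stored arm has mean reward strictly above $1/2$}, which is a property of the hidden instance (whether a stored index happens to be a special arm), not of the transcript and the stored indices. Only $\Ebatchp{p-1}$ is determined by the realization $(\Pi^{1:p-1}, M^{p-1})$. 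So you cannot discard $\Ememp{p-1}$ and reduce to showing $\Theta_q \perp \sampleb{q} \mid \sPi^{1:p-1}, \sM^{p-1}$ — and in fact that weaker statement can be false.

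The reason it can be false is exactly the adaptivity issue you flag but do not fully resolve: $\sampleb{q}$ is \emph{not} a function of $(\Pi^{1:p-1}, M^{p-1}, W_{-q})$ with $W_{-q}$ the outcomes on batches $B_{q+1},\ldots,B_{P+1}$ alone. During pass $p$, before $\ja$ reaches $B_q$, the algorithm may pull arms of $B_q$ (or of batches $<q$) that it holds in memory from earlier passes, and the outcomes of those pulls can steer how many pulls it subsequently spends on batches $>q$. If a stored index from $B_q$ could be the special arm, those interleaved outcomes are correlated with $\Theta_q$, and so is $\sampleb{q}$. The conditioning on $\Ememp{p-1}$ (together with $\Theta_{<p}=0$) is precisely what neutralizes this: it guarantees every arm in memory at the start of pass $p$ has reward exactly $1/2$, so all interleaved pulls on batches $\leq q$ are $\bern{1/2}$ noise whose distribution is independent of $\Theta_q$, and the window's execution then depends only on the data of batches $>q$ plus independent noise. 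This is why the paper carries $\Ememp{p-1}$, $\Ebatchp{p-1}$, and $\Theta_{<p}=0$ through every step of its mutual-information computation rather than stripping them off at the outset. To repair your proof, keep these events in the conditioning, note that given the transcript they factor into per-batch constraints on the special-arm locations (so the product structure of the posterior survives), and enlarge $W_{-q}$ to the full pass-$p$ prefix while using $\Ememp{p-1}$ to argue that its batch-$\leq q$ portion is uninformative about $\Theta_q$.
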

\begin{proof}
The proof is an application of the data processing inequality (\Cref{prop:DPI}) with a similar flavor of the rectangle property in communication protocols. Concretely, we want to prove that
\begin{equation}
	\label{equ:theta-before-theta-indep}
	\II\paren{\Theta_{q}; \sPicap{P+1:q+1}^{p} \mid \sPi^{1:p-1}, \sM^{p-1}, \Theta_{<p}=0, \Ebatchp{p-1}, \Ememp{p-1}} = 0.
\end{equation}
Furthermore, observe that $\sampleb{q}$ is a determistic function of $\sPicap{P+1:q+1}^{p}$ \emph{conditioning on} $(\sPi^{1:p-1}, \sM^{p-1}, \Theta_{<p}=0, \Ebatchp{p-1}, \Ememp{p-1})$. Therefore, we have
\begin{align*}
	& \II\paren{\Theta_{q}; \sampleb{q} \mid \sPi^{1:p-1}, \sM^{p-1}, \Theta_{<p}=0, \Ebatchp{p-1}, \Ememp{p-1}} \\
	& \leq \II\paren{\Theta_{q}; \sPicap{P+1:q+1}^{p} \mid \sPi^{1:p-1}, \sM^{p-1}, \Theta_{<p}=0, \Ebatchp{p-1}, \Ememp{p-1}} = 0,
\end{align*}
where the first inequality follows from the data-processing inequality (\Cref{prop:DPI}). Furthermore, since mutual information is non-negative, the above implies that for any realization $(\Pi^{1:p-1}, M^{p-1})$, there is
\begin{align*}
	& \II\paren{\Theta_{q}; \sampleb{q} \mid \sPi^{1:p-1}=\Pi^{1:p-1}, \sM^{p-1}=M^{p-1}, \Theta_{<p}=0, \Ebatchp{p-1}, \Ememp{p-1}} = 0.
\end{align*}
 Therefore, by \Cref{prop:mi-prob-indep}, we have that
\begin{align*}
	& \Pr\paren{\Theta_{q}=1 \mid \sampleb{q}=s, \sPi^{1:p-1}=\Pi^{1:p-1}, \sM^{p-1}=M^{p-1}, \Theta_{<p}=0, \Ebatchp{p-1}, \Ememp{p-1}}\\
	& = \Pr\paren{\Theta_{q}=1 \mid  \sPi^{1:p-1}=\Pi^{1:p-1}, \sM^{p-1}=M^{p-1}, \Theta_{<p}=0, \Ebatchp{p-1}, \Ememp{p-1}}, \tag{by the zero mutual information and \Cref{prop:mi-prob-indep}} 
\end{align*}
which will reach our desired conclusion.

For the rest of this proof, we aim to establish \Cref{equ:theta-before-theta-indep}. To this end, we introduce the random variable for the memory inside the $p$-th pass, we use $\sM^{p}_{>q}$ (resp. $M^{p}_{>q}$) to denote the random variable (resp. the realization) of the memory state when the last time $\ja$ is smaller than all the indices in the $q$-th batch, i.e. the process for the memory state changing in the $p$-th pass can be denoted as $\sM^{p}_{>P}\rightarrow \sM^{p}_{> P-1}\rightarrow \ldots \rightarrow \sM^{p}_{> 1} \rightarrow \sM^{p}_{> 0}=\sM^{p}$.
We further define $\sB{q}$ as the random variable for the \emph{arms} in batch $q$, and $\sB{>q}$ as the random variable for the \emph{arms} in batches $(q, P+1]$. 

For the ease of analysis, we use the following simple trick to \emph{order} the arm pulls induced by $\sPicap{P+1:q+1}^{p}$. In particular, we create a ``imaginary'' process that conducts all samples on batch $q$ \emph{before} $\ja$ visits the batch $(q-1)$ (the next batch in the order of the stream). Note that the ordering does \emph{not} change the value of $\II(\Theta_{q}; \sPicap{P+1:q+1}^{p} \mid \sPi^{1:p-1}, \sM^{p-1}, \Theta_{<p}=0, \Ebatchp{p-1}, \Ememp{p-1})$ since the transcript is permutation-invariate and the arm pulls on $\sPicap{P+1:q+1}^{p}$ are conducted before $\ja$ reaches $B_q$.

We start from batch $P+1$ to ``inductively'' prove the conditional independence between $\Theta_{q}$ and $\sPicap{P+1:r}^{p}$ for $r\in (q, P+1]$. 
Specifically, we first
use chain rule of mutual information to upper-bound the left-hand side of \Cref{equ:theta-before-theta-indep} as follows.
\begin{align*}
& \II\paren{\Theta_{q}; \sPicap{P+1:q+1}^{p} \mid \sPi^{1:p-1}, \sM^{p-1}, \Theta_{<p}=0, \Ebatchp{p-1}, \Ememp{p-1}}\\
& = \II\paren{\Theta_{q}; \sPicap{P+1}^{p}, \sPicap{P:q+1}^{p}\mid \sPi^{1:p-1}, \sM^{p-1}, \Theta_{<p}=0, \Ebatchp{p-1}, \Ememp{p-1}}\\
& = \II\paren{\Theta_{q}; \sPicap{P+1}^{p}\mid \sPi^{1:p-1}, \sM^{p-1}, \Theta_{<p}=0, \Ebatchp{p-1}, \Ememp{p-1}} \\
&\qquad + \II\paren{\Theta_{q}; \sPicap{P:q+1}^{p} \mid \sPi^{1:p-1}, \sM^{p-1}, \sPicap{P+1}^{p}, \Theta_{<p}=0, \Ebatchp{p-1}, \Ememp{p-1}}. \tag{by chain rule}
\end{align*}
We now need to further ``peel off'' random variables from
$\II(\Theta_{q}; \sPicap{P:q+1}^{p} \mid \sPi^{1:p-1}, \sM^{p-1}, \sPicap{P+1}^{p}, \Theta_{<p}=0, \Ebatchp{p-1}, \Ememp{p-1})$ and move the conditions ``forward'' in terms of the batches. To this end, consider the random variable $\sM^{p}_{>P}$, and we observe that
\[\sM^{p}_{>P} \perp \Theta_{q}\mid \sPi^{1:p-1}, \sM^{p-1}, \sPicap{P+1}^{p}, \Theta_{<p}=0,\Ebatchp{p-1}, \Ememp{p-1}\]
since the memory state is uniquely determined \emph{after} the transcript of $\sPicap{P+1}^{p}$ is introduced to $\sPi^{1:p-1}, \sM^{p-1}$ (and since we use the trick to order the transcripts).

Therefore, we can further write the term $\II(\Theta_{q}; \sPicap{P:q+1}^{p} \mid \sPi^{1:p-1}, \sM^{p-1}, \sPicap{P+1}^{p}, \Theta_{<p}=0, \Ebatchp{p-1}, \Ememp{p-1})$ as follows. 
\begin{align*}
	& \II\paren{\Theta_{q}; \sPicap{P:q+1}^{p} \mid \sPi^{1:p-1}, \sM^{p-1}, \sPicap{P+1}^{p}, \Theta_{<p}=0, \Ebatchp{p-1}, \Ememp{p-1}} \\
	&\leq \II\paren{\Theta_{q}; \sPicap{P:q+1}^{p} \mid \sPi^{1:p-1}, \sPicap{P+1}^{p}, \sM^{p-1}, \sM^{p}_{>P}, \Theta_{<p}=0, \Ebatchp{p-1}, \Ememp{p-1}} \tag{condition on independent random variable does not decrease MI}\\
	& = \II\paren{\Theta_{q}; \sPicap{P}^{p}, \sPicap{P-1:q+1}^{p} \mid \sPi^{1:p-1}, \sPicap{P+1}^{p}, \sM^{p-1}, \sM^{p}_{>P}, \Theta_{<p}=0, \Ebatchp{p-1}, \Ememp{p-1}}\\
	& = \II\paren{\Theta_{q}; \sPicap{P}^{p}, \sPicap{P-1:q+1}^{p} \mid \sPi^{1:p-1}, \sPicap{P+1}^{p}, \sM^{p}_{>P}, \Theta_{<p}=0, \Ebatchp{p-1}, \Ememp{p-1}} \\
	&\leq \II\paren{\Theta_{q}; \sPicap{P}^{p}\mid \sPi^{1:p-1}, \sM^{p}_{>P}, \sPicap{P+1}^{p}, \Theta_{<p}=0,\Ebatchp{p-1}, \Ememp{p-1}} \\
	& \qquad + \II\paren{\Theta_{q}; \sPicap{P-1:q+1}^{p} \mid \sPi^{1:p-1}, \sM^{p}_{>P}, \sPicap{P+1:P}^{p}, \Theta_{<p}=0, \Ebatchp{p-1}, \Ememp{p-1}}. \tag{by chain rule}
\end{align*}
Therefore, we can keep performing the above steps, and obtain that:
\begin{align*}
	& \II\paren{\Theta_{q}; \sPicap{P+1:q+1}^{p} \mid \sPi^{1:p-1}, \sM^{p-1}, \Theta_{<p}=0, \Ebatchp{p-1}, \Ememp{p-1}}\\
	& = \sum_{r=P+1}^{q+1} \II\paren{\Theta_{q}; \sPicap{r}^{p}\mid \sPi^{1:p-1}, \sM^{p}_{>r}, \sPicap{P+1:r+1}^{p}, \Theta_{<p}=0, \Ebatchp{p-1}, \Ememp{p-1}}\\
	& \leq \sum_{r=P+1}^{q+1} \II\paren{B_{q}; \sPicap{r}^{p}\mid \sPi^{1:p-1}, \sM^{p}_{>r}, \sPicap{P+1:r+1}^{p}, \Theta_{<p}=0, \Ebatchp{p-1}, \Ememp{p-1}},
\end{align*}
where the last inequality comes from the fact that $\Theta_{q}$ is a deterministic function of $B_{q}$ and by using \Cref{prop:DPI}.

Observe that at each step, in the process with our ordering, the random variable $\sPicap{r}^{p}$ is \emph{determined} by the conditions $\sPi^{1:p-1}, \sM^{p}_{>r}, \sPicap{P+1:r+1}^{p}$. As such, we have that
\[B_{q} \perp \sPicap{r}^{p}\mid \sPi^{1:p-1}, \sM^{p}_{>r}, \sPicap{P+1:r+1}^{p}, \Theta_{<p}=0, \Ebatchp{p-1}, \Ememp{p-1},\]
which implies
\begin{align*}
	& \II\paren{\Theta_{q}; \sPicap{P+1:q+1}^{p} \mid \sPi^{1:p-1}, \sM^{p-1}, \Theta_{<p}=0, \Ebatchp{p-1}, \Ememp{p-1}}\\
	& \leq \sum_{r=P+1}^{q+1} \II\paren{B_{q}; \sPicap{r}^{p}\mid \sPi^{1:p-1}, \sM^{p}_{>r}, \sPicap{P+1:r+1}^{p}, \Theta_{<p}=0,\Ebatchp{p-1}, \Ememp{p-1}} = 0,
\end{align*}
which is as desired by \Cref{equ:theta-before-theta-indep}.
\myqed{\Cref{clm:batch-q-sample-irre}}
\end{proof}

We now proceed to the main technical lemma to prove \Cref{lem:nfl-sample}: we show that conditioning on any transcript and memory that satisfies the assumptions of \Cref{lem:nfl-sample}, the expected number of samples for $\sampleb{p}$ has to be high even if we add the \emph{extra condition} of $\Theta_{p}=1$. Note that a standard total expectation calculation only leads to the reverse direction, and the correctness of our case crucially relies on the lower bound for batch-obliviousness.
\begin{lemma}
\label{lem:high-sample-extra-cond}
Let $(\Pi^{1:p-1}, M^{p-1})$ be a pair of transcript and memory of a streaming algorithm after $(p-1)$ passes, there is
\begin{align*}
	& \Exp\bracket{\sampleb{p} \mid \sPi^{1:p-1}=\Pi^{1:p-1}, \sM^{p-1}=M^{p-1}, \Theta_{<p}=0, \Theta_{p}=1, \Ebatchp{p-1}, \Ememp{p-1}} \\
	& \geq \frac{1}{2}\cdot \Exp\bracket{\sampleb{p} \mid \sPi^{1:p-1}=\Pi^{1:p-1}, \sM^{p-1}=M^{p-1}, \Theta_{<p}=0, \Ebatchp{p-1}, \Ememp{p-1}}.
\end{align*}
\end{lemma}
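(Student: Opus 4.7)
The plan is to derive the inequality directly from \Cref{clm:batch-q-sample-irre} applied with the parameter $q$ set equal to $p$. Throughout the proof of \Cref{lem:nfl-sample} we are operating under the reordering assumption that during pass $p$, all arm pulls contributing to $\sampleb{p}$ are performed before $\ja$ reaches $B_p$. Under this assumption the algorithm has not touched any arm of $B_p$ during pass $p$ at the moment $\sampleb{p}$ is produced, so intuitively $\sampleb{p}$ should carry no extra information about $\Theta_p$ beyond what is already encoded in $(\Pi^{1:p-1}, M^{p-1})$. This is exactly what \Cref{clm:batch-q-sample-irre} will formalize.

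Concretely, I would invoke \Cref{clm:batch-q-sample-irre} with $q = p$, which yields, for every realization $s$ of $\sampleb{p}$,
\begin{align*}
&\Pr\paren{\Theta_p = 1 \mid \sampleb{p} = s, \sPi^{1:p-1} = \Pi^{1:p-1}, \sM^{p-1} = M^{p-1}, \Theta_{<p} = 0, \Ebatchp{p-1}, \Ememp{p-1}}\\
&\qquad = \Pr\paren{\Theta_p = 1 \mid \sPi^{1:p-1} = \Pi^{1:p-1}, \sM^{p-1} = M^{p-1}, \Theta_{<p} = 0, \Ebatchp{p-1}, \Ememp{p-1}}.
\end{align*}
By Bayes' rule this is precisely the statement that $\Theta_p$ and $\sampleb{p}$ are conditionally independent given the pre-pass state, from which one obtains
\[
\Exp\bracket{\sampleb{p} \mid \Pi^{1:p-1}, M^{p-1}, \Theta_{<p}=0, \Theta_p=1, \Ebatchp{p-1}, \Ememp{p-1}} = \Exp\bracket{\sampleb{p} \mid \Pi^{1:p-1}, M^{p-1}, \Theta_{<p}=0, \Ebatchp{p-1}, \Ememp{p-1}},
\]
which is in fact stronger than the $\geq 1/2$ bound that the lemma asserts.

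One technical point I would need to check is that the conditional expectation on the left-hand side above is well-defined, i.e.\ $\Pr(\Theta_p=1 \mid \text{pre-state}) > 0$; this is precisely where the \emph{lower-bound} half of batch-obliviousness $\Ebatchp{p-1}$ enters, since it yields $\Pr(\Theta_p = 1 \mid \text{pre-state}) \geq \tfrac{1}{2P} - \tfrac{1}{4P^2} > 0$ for every $P \geq 2$. This would also explain the remark preceding the lemma: a naive total-expectation expansion $E_\ast = p_0 E_0 + p_1 E_1$ only gives an upper bound of the form $E_1 \leq E_\ast / p_1$ (the ``reverse direction''), whereas the required lower direction comes from the conditional independence granted by \Cref{clm:batch-q-sample-irre} together with the positivity from batch-obliviousness.

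The main thing to verify will be that \Cref{clm:batch-q-sample-irre} indeed goes through unchanged when specialized to $q = p$ (the claim is stated generically in $q$, but the index of the current pass is also called $p$). Tracing its proof, the key inequality becomes $\II\paren{\Theta_p; \sPicap{P+1:p+1}^p \mid \text{pre-state}} \leq \sum_{r=p+1}^{P+1} \II(B_p; \sPicap{r}^p \mid \ldots)$, and each summand vanishes because, under the chosen ordering, $\sPicap{r}^p$ is a deterministic function of $(\sPi^{1:p-1}, \sM^p_{>r}, \sPicap{P+1:r+1}^p, B_r)$, while $B_p$ is conditionally independent of $B_r$ given $\Theta_{<p} = 0$ and the rest of the conditioning. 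This is verbatim the argument used in the original proof of the claim, so no new technical input should be required.
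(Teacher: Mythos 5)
Your proposal is correct and follows essentially the same route as the paper: both hinge on \Cref{clm:batch-q-sample-irre} applied with $q=p$ together with Bayes' rule in the probability space conditioned on the pre-pass state, with the lower-bound half of batch-obliviousness guaranteeing $\Pr(\Theta_p=1\mid\text{pre-state})>0$. The only difference is cosmetic: you observe that the conditional independence makes the two expectations exactly equal, whereas the paper conservatively bounds the numerator and denominator of the Bayes ratio separately and settles for the factor $\tfrac12$; your stronger conclusion of course implies the stated bound.
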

\begin{proof}
To avoid clutter, for the given pair of transcript and memory $(\Pi^{1:p-1}, M^{p-1})$ that satisfies the lemma statement, we define random variable
\[\cS := \sampleb{p} \mid \sPi^{1:p-1}=\Pi^{1:p-1}, \sM^{p-1}=M^{p-1}, \Theta_{<p}=0, \Ebatchp{p-1}, \Ememp{p-1}\]
as a notation that is self-contain in this proof. In this way, by picking realizations for $\cS=s$, we mean $\paren{\sampleb{p}=s \mid \sPi^{1:p-1}=\Pi^{1:p-1}, \sM^{p-1}=M^{p-1}, \Theta_{<p}=0, \Ebatchp{p-1}, \Ememp{p-1}}$. By Bayes' rule, for any realization of $\cS=s$, we have
\begin{align*}
	\Pr\paren{\cS=s\mid \Theta_{p}=1} = \frac{\Pr\paren{\Theta_{p}=1\mid \cS=s}\cdot \Pr\paren{\cS=s}}{\Pr\paren{\Theta_{p}=1}}.
\end{align*}
As such, by using \Cref{clm:batch-q-sample-irre} with $q=p$, we have that 
\begin{align*}
	& \Pr\paren{\Theta_{p}=1\mid \cS=s} \\
	&= \Pr\paren{\Theta_{p}=1\mid \sampleb{p}=s, \sPi^{1:p-1}=\Pi^{1:p-1}, \sM^{p-1}=M^{p-1}, \Theta_{<p}=0, \Ebatchp{p-1}, \Ememp{p-1}}\\
	& = \Pr\paren{\Theta_{p}=1\mid \sPi^{1:p-1}=\Pi^{1:p-1}, \sM^{p-1}=M^{p-1}, \Theta_{<p}=0, \Ebatchp{p-1}, \Ememp{p-1}}.
\end{align*}
Therefore, for any choice of $s$, we have the bound 
\begin{align*}
	& \Pr\paren{\cS=s \mid \Theta_{p}=1} \\
	&= \frac{\Pr\paren{\Theta_{p}=1\mid \sPi^{1:p-1}=\Pi^{1:p-1}, \sM^{p-1}=M^{p-1}, \Theta_{<p}=0, \Ebatchp{p-1}, \Ememp{p-1}}\cdot \Pr\paren{\cS=s}}{\Pr\paren{\Theta_{p}=1}}\\
	& \geq \frac{\paren{\frac{1}{2P}-\frac{1}{4P^2}}}{\Pr\paren{\Theta_{q}=1}}\cdot \Pr\paren{\cS=s} \tag{by the assumption of batch-obliviousness}\\
	& \geq \frac{1}{2}\cdot \Pr\paren{\cS=s}.
\end{align*}
Therefore, we have 
\begin{align*}
	& \Exp\bracket{\sampleb{p} \mid \sPi^{1:p-1}=\Pi^{1:p-1}, \sM^{p-1}=M^{p-1}, \Theta_{<p}=0, \Theta_{p}=1, \Ebatchp{p-1}, \Ememp{p-1}} \\
	& = \Exp\bracket{\cS \mid \Theta_{p}=1} \tag{change of notation}\\
	& = \sum_{s} s\cdot \Pr\paren{\cS=s \mid \Theta_{p}=1} \\
	& \geq \sum_{s} s\cdot \frac{1}{2}\cdot \Pr\paren{\cS=s} \\
	& = \frac{1}{2}\cdot \Exp\bracket{\sampleb{p} \mid \sPi^{1:p-1}=\Pi^{1:p-1}, \sM^{p-1}=M^{p-1}, \Theta_{<p}=0,  \Ebatchp{p-1}, \Ememp{p-1}},
\end{align*}
as desired.
\end{proof}

\begin{proof}[\textbf{Proof of \Cref{lem:nfl-sample}}]
We first lower bound the total expected number of arm pulls with the expected number of arm pulls \emph{restricting to} the first $P-p+2$ batches, and write the expectation as the average case of the choices of $(\Pi^{1:p-1}, M^{p-1})$. 
\begin{align*}
	& \Exp \bracket{\sample{\ALG} \mid \eventF(p), \Ebatchp{p-1},\Ememp{p-1}}\\
	& = \Exp \Exp_{\sPi^{1:p-1}, \sM^{p-1}}\bracket{\sample{\ALG} \mid \Pi^{1:p-1}, M^{p-1}, \eventF(p), \Ebatchp{p-1},\Ememp{p-1}}\\
	& \geq \Exp \Exp_{\sPi^{1:p-1}, \sM^{p-1}}\bracket{\sampleb{p} \mid \Pi^{1:p-1}, M^{p-1}, \eventF(p), \Ebatchp{p-1},\Ememp{p-1}},
\end{align*}
where 
the last inequality is due to $\sampleb{p}$ always counts a subset of arm pulls of $\sample{\ALG}$ for any fixed transcript and memory $(\Pi^{1:p-1}, M^{p-1})$. 
By applying \Cref{lem:high-sample-extra-cond} to \emph{every} choice of batch- and memory-oblivious $(\Pi^{1:p-1}, M^{p-1})$, we have
\begin{align*}
	& \Exp \Exp_{\sPi^{1:p-1}, \sM^{p-1}} \bracket{\sampleb{p} \mid \Pi^{1:p-1}, M^{p-1}, \Theta_{<p}=0, \Theta_{p}=1, \Ebatchp{p-1},\Ememp{p-1}}\\
	& \geq \frac{1}{2}\cdot \Exp \Exp_{\sPi^{1:p-1}, \sM^{p-1}}\bracket{\sampleb{p} \mid \Pi^{1:p-1}, M^{p-1}, \Theta_{<p}=0, \Ebatchp{p-1},\Ememp{p-1}}.
\end{align*}
To avoid clutter, we can combine the above inequalities and re-write them in the form of 
\begin{align*}
	& \Exp \bracket{\sample{\ALG} \mid \eventF(p), \Ebatchp{p-1},\Ememp{p-1}} \\
	& \geq \frac{1}{2}\cdot \Exp\bracket{\sampleb{p} \mid \Theta_{<p}=0, \Ebatchp{p-1},\Ememp{p-1}} .
\end{align*}
Therefore, by the assumption of \Cref{lem:nfl-sample}, we have
\begin{align*}
	& \Exp \bracket{\sample{\ALG} \mid \eventF(p), \Ebatchp{p-1},\Ememp{p-1}} \\
	& \geq \Exp \bracket{\sampleb{p} \mid \Theta_{<p}=0, \Theta_{p}=1, \Ebatchp{p-1},\Ememp{p-1}} \\
	& \geq \frac{1}{2}\cdot \bracket{\sampleb{p} \mid \Theta_{<p}=0, \Ebatchp{p-1},\Ememp{p-1}} \tag{by \Cref{lem:high-sample-extra-cond}}\\
	& > \frac{1}{10^{10}}\cdot \frac{n}{\gamma_{p+1}\cdot P^{30}} \tag{by the condition of \Cref{lem:nfl-sample}}\\
	& = \frac{1}{10^{10}}\cdot \frac{n}{\eta^2_{p}\cdot P^{30}} \cdot (6C\cdot P)^{30} \tag{by the construction $\eta_{p+1}=\paren{\frac{1}{6CP}}^{15}\cdot \eta_{p}$}\\
	& > 20000C \cdot \frac{n}{\eta^2_{p}} \tag{$6^{30}/10^{10}>20000$ with $C\geq 1$},
\end{align*}
as desired by \Cref{lem:nfl-sample}. \Qed{lem:nfl-sample} 
\end{proof}

\subsection{Putting Everything Together: Proof of~\Cref{thm:main}}
\label{subsec:proof-main-theorem}
We now prove \Cref{thm:main} with Lemmas~\ref{lem:nfl-no-learn} and \ref{lem:nfl-sample}. We remind the readers that we use $\ALG$ to denote the streaming algorithm. Note that in the beginning of the first pass, $\ALG$ is necessarily memory- and batch-oblivious since there is $\Pi^{0}=\emptyset$ and $M^{0}=\emptyset$. Therefore, by Lemma~\ref{lem:nfl-sample}, if the algorithm enters the \emph{radical case}, there is
\begin{align*}
	\expect{\sample{\ALG}\mid \eventF(1)} = \expect{\sample{\ALG}\mid \Theta_{1}=1} > C \cdot \frac{n}{\eta_{1}^2},
\end{align*}
which breaks the sample complexity requirement in \Cref{thm:main}. Therefore, $\ALG$ must use the \emph{conservative case} for the first pass. 

Starting from the second pass, 
we argue that no pass should use the \emph{radical case} if $\ALG$ is to follow the upper bound on the sample complexity as required by \Cref{thm:main}. Suppose $\ptilde$ is the first pass that the algorithm enters the \emph{radical case}, and since we have the base case of $p=1$ and the condition of \Cref{lem:nfl-no-learn} (conservative case) being satisfied before pass $\ptilde$, there is
\begin{align*}
	\Pr\paren{\Ebatchp{\ptilde-1}, \Ememp{\ptilde-1}, \Theta_{<\ptilde}=0} \geq \paren{1-\frac{1}{2P}}^{10(\ptilde-1)}. 
\end{align*}
We use the above result to lower bound the probability for $\Pr(\Ebatchp{\ptilde-1}, \Ememp{\ptilde-1} \mid \eventF(\ptilde))$, which will eventually lead to a lower bound on $\expect{\sample{\ALG}\mid \eventF(\ptilde)}$ that breaks the limit of samples. 

To this end, we first show the following technical claim that allows us to ``drop'' conditions on $\Theta_{\ptilde}$ conditioning on $\Ebatchp{\ptilde-1}$ and $\Ememp{\ptilde-1}$. Intuitively, such a claim is true by the obliviousness of the transcipt on $\Theta_{\ptilde}$, which is similar-in-spirit with \Cref{lem:high-sample-extra-cond}.
\begin{claim}
\label{clm:oblivious-drop-theta}
The following statement is true:
\begin{align*}
	\Pr\paren{\Ebatchp{\ptilde-1}, \Ememp{\ptilde-1} \mid \eventF(\ptilde)} \geq \frac{1}{2}\cdot \Pr\paren{\Ebatchp{\ptilde-1}, \Ememp{\ptilde-1}, \Theta_{<\ptilde}=0}. 
\end{align*}
\end{claim}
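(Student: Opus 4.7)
The plan is to translate the Bayes-rule manipulation used in \Cref{lem:high-sample-extra-cond} to this slightly different target. Writing $A := \set{\Ebatchp{\ptilde-1}, \Ememp{\ptilde-1}}$ for brevity and applying the chain rule, we have
\[
	\Pr\paren{A \mid \eventF(\ptilde)} \;=\; \frac{\Pr\paren{\Theta_{\ptilde}=1 \mid A, \Theta_{<\ptilde}=0}}{\Pr\paren{\eventF(\ptilde)}} \cdot \Pr\paren{A, \Theta_{<\ptilde}=0},
\]
so the whole claim reduces to showing the prefactor in front of $\Pr\paren{A, \Theta_{<\ptilde}=0}$ is at least $1/2$.

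First I would bound the denominator: since the $\Theta_p$'s are independent $\bern{1/(2P)}$ variables, $\Pr(\eventF(\ptilde)) = (1-1/(2P))^{\ptilde-1} \cdot 1/(2P) \leq 1/(2P)$. It therefore suffices to show $\Pr(\Theta_{\ptilde}=1 \mid \Ebatchp{\ptilde-1}, \Ememp{\ptilde-1}, \Theta_{<\ptilde}=0) \geq 1/(4P)$, since the ratio $(1/(4P))/(1/(2P)) = 1/2$ then supplies the required prefactor. This lower bound is the content of the batch-oblivious property \Cref{eq:batch-oblivious}: for every realization $(\Pi^{1:\ptilde-1}, M^{\ptilde-1})$ in $\Ebatchp{\ptilde-1}$, the definition guarantees $\Pr(\Theta_{\ptilde}=1 \mid \Pi^{1:\ptilde-1}, M^{\ptilde-1}, \Theta_{<\ptilde}=0) \geq 1/(2P) - 1/(4P^2)$, and this bound must survive the averaging over those realizations that are additionally consistent with $\Ememp{\ptilde-1}$, exactly paralleling the appeal to batch-obliviousness made inside \Cref{lem:high-sample-extra-cond}.

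The one subtle step is precisely that final averaging: $\Ememp{\ptilde-1}$ depends on the underlying instance (specifically, on where the special arm of $B_{\ptilde}$ lies when $\Theta_{\ptilde}=1$) rather than being a function purely of $(\Pi^{1:\ptilde-1}, M^{\ptilde-1})$, so conditioning on it can in principle tilt the posterior of $\Theta_{\ptilde}$ slightly toward zero. To check that the tilt is harmless I would invoke \Cref{lem:batch-storing-limit}, which bounds the probability that $M^{\ptilde-1}$ traps the special arm of $B_{\ptilde}$ by $1/(2P^2)$; this forces the ratio $\Pr(\Ememp{\ptilde-1} \mid \cdots, \Theta_{\ptilde}=1)/\Pr(\Ememp{\ptilde-1} \mid \cdots, \Theta_{\ptilde}=0)$ to be at least $1 - O(1/P^2)$, so the per-realization batch-oblivious bound weakens by at most an additive $O(1/P^2)$ term. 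This is comfortably absorbed into the slack between $1/(2P) - 1/(4P^2)$ and $1/(4P)$ for every $P \geq 2$, and the claim follows.
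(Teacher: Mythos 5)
Your proposal is correct and follows essentially the same route as the paper: rewrite $\Pr\paren{\Ebatchp{\ptilde-1},\Ememp{\ptilde-1}\mid\eventF(\ptilde)}$ via Bayes' rule, bound $\Pr\paren{\eventF(\ptilde)}\leq\frac{1}{2P}$, and use batch-obliviousness to lower bound $\Pr\paren{\Theta_{\ptilde}=1\mid\Ebatchp{\ptilde-1},\Ememp{\ptilde-1},\Theta_{<\ptilde}=0}$ by $\frac{1}{4P}$, yielding the factor $\frac{1}{2}$. Your extra discussion of how conditioning on $\Ememp{\ptilde-1}$ might tilt the posterior of $\Theta_{\ptilde}$ addresses a subtlety the paper's one-line appeal to ``the batch-oblivious condition'' silently absorbs into the slack between $\frac{1}{2P}-\frac{1}{4P^2}$ and $\frac{1}{4P}$, and is a reasonable (if informal) way to justify that step.
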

\begin{proof}
We first lower bound the probability by expanding the terms as follows.
\begin{align*}
	\Pr\paren{\Ebatchp{\ptilde-1}, \Ememp{\ptilde-1} \mid \eventF(\ptilde)} &= \frac{\Pr\paren{\Ebatchp{\ptilde-1}, \Ememp{\ptilde-1}, \eventF(\ptilde)}}{\Pr\paren{\eventF(\ptilde)}} \\
	&\geq \Pr\paren{\Ebatchp{\ptilde-1}, \Ememp{\ptilde-1}, \eventF(\ptilde)}\cdot 2P \tag{$\Pr\paren{\eventF(\ptilde)}\leq 1/2P$}\\
	& = \Pr\paren{\Ebatchp{\ptilde-1}, \Ememp{\ptilde-1}, \Theta_{\ptilde}=1, \Theta_{<\ptilde}=0}\cdot 2P \\
	& = \Pr\paren{\Ebatchp{\ptilde-1}, \Ememp{\ptilde-1}, \Theta_{<\ptilde}=0 \mid \Theta_{\ptilde}=1}\cdot \Pr\paren{\Theta_{\ptilde}=1} \cdot 2P \\
	&= \Pr\paren{\Ebatchp{\ptilde-1}, \Ememp{\ptilde-1}, \Theta_{<\ptilde}=0 \mid \Theta_{\ptilde}=1} \tag{$\Pr(\Theta_{\ptilde}=1)=\frac{1}{2P}$ by $\ptilde\leq p$}.
\end{align*}
Our goal now is to lower bound the term $\Pr\paren{\Ebatchp{\ptilde-1}, \Ememp{\ptilde-1}, \Theta_{<\ptilde}=0 \mid \Theta_{\ptilde}=1}$. By Bayes' rule and the batch-obliviousness, we have
\begin{align*}
	& \Pr\paren{\Ebatchp{\ptilde-1}, \Ememp{\ptilde-1}, \Theta_{<\ptilde}=0 \mid \Theta_{\ptilde}=1}\\
	& = \frac{\Pr\paren{\Theta_{\ptilde}=1 \mid \Ebatchp{\ptilde-1}, \Ememp{\ptilde-1}, \Theta_{<\ptilde}=0} \cdot \Pr\paren{\Ebatchp{\ptilde-1}, \Ememp{\ptilde-1}, \Theta_{<\ptilde}=0}}{\Pr\paren{\Theta_{\ptilde}=1}} \tag{Bayes' rule}\\
	& \geq \frac{\paren{\frac{1}{4P}} \cdot \Pr\paren{\Ebatchp{\ptilde-1}, \Ememp{\ptilde-1} \mid \Theta_{<\ptilde}=0}}{\frac{1}{2P}} \tag{by the batch-oblivious condition}\\
	& \geq \frac{1}{2}\cdot \Pr\paren{\Ebatchp{\ptilde-1}, \Ememp{\ptilde-1}, \Theta_{<\ptilde}=0},
\end{align*}
as desired.
\end{proof}

We now establish the lower bound on the expected sample for pass $\ptilde$. By \Cref{clm:oblivious-drop-theta}, we have that

\begin{equation}
\label{equ:prob-bad-trans}
\begin{aligned}
\Pr\paren{\Ebatchp{\ptilde-1}, \Ememp{\ptilde-1}\mid \eventF(\ptilde)} &\geq \frac{1}{2}\cdot\paren{1-\frac{1}{2P}}^{10(\ptilde-1)}\\
&\geq \frac{1}{2}\cdot\paren{1-\frac{1}{2P}}^{10(P-1)}\\
& > \frac{1}{1000},
\end{aligned}
\end{equation}
where 
the first inequality uses \Cref{clm:oblivious-drop-theta} and the lower bound on $\Pr(\Ebatchp{\ptilde-1}, \Ememp{\ptilde-1}, \Theta_{<\ptilde}=0)$, and the last inequality is obtained by using $\paren{1-\frac{1}{2P}}^{10P-10}>\frac{1}{500}$ for any $P\geq 2$. Therefore, we can bound the sample complexity of the algorithm if it enters the \emph{radical case} on the $\ptilde$-th pass as follows. 
\begin{align*}
& \expect{\sample{\ALG} \mid \eventF(\ptilde)}\\
& \geq \expect{\sample{\ALG} \mid \eventF(\ptilde), \, \Ebatchp{\ptilde-1}, \, \Ememp{\ptilde-1} } \\
& \qquad \cdot \Pr\paren{ \Ebatchp{\ptilde-1}, \, \Ememp{\ptilde-1} \mid \eventF(\ptilde)}\\
& > 20000 C \cdot \frac{n}{\eta^2_{p}} \cdot \frac{1}{1000} \tag{by \Cref{lem:nfl-sample} and the lower bound of $\Pr\paren{\Ebatchp{\ptilde-1}, \Ememp{\ptilde-1}\mid \eventF(\ptilde)}$}\\
& > C \cdot \frac{n}{\eta_{p}^2},
\end{align*}
which breaks the requirement of sample complexity bound in \Cref{thm:main}. As such, to keep the promise on the sample complexity, $\ALG$ has to be in the \emph{conservative case} for \emph{all} $P$ passes. 

Now, we can apply the calculation in \Cref{equ:prob-bad-trans} again to argue that with probability strictly more than $\frac{1}{1000}$, after the $P$-th pass, we obtain transcript and memory that are memory- and batch-oblivious. As such, no arm with a mean reward strictly more than $1/2$ will be in the memory of $\ALG$, which means the success probability is strictly less than $\frac{999}{1000}$.

\begin{remark}
Observe that our lower bound generalizes to a sample complexity with additional $\polylog{\frac{1}{\eta_{p}}}$ multiplicative factors, i.e., we can prove the lower bound of memory and success probability in \Cref{thm:main} with the condition of 
\[	\Exp\bracket{\sample{\ALG} \mid \eventF(p)} \leq C \cdot \frac{n}{\eta_p^2}\cdot \polylog{\frac{1}{\eta_{p}}}.\]
In particular, if we further increase the gap between $\eta_p$ in different batches, e.g., if we use $\eta_p = \paren{\frac{1}{6 \, C \cdot P}}^{20p}$ in \Cref{eq:gap-parameter}, we can bring an extra multiplicative term of $\poly(P)$ to the sample complexity in our proof of \Cref{lem:nfl-sample}. By our choice of parameters, we have $P=\Omega(\frac{\log{(1/\DeltaT)}}{\log\log{(1/\DeltaT)}})$, where $\DeltaT\leq \eta_{p}$ for any $p\in [P+1]$. This leads to the desired bound on sample complexity. The observation also strengthens our main lower bound result to an expected sample complexity of $O(\frac{n}{\Delta^2}\cdot \polylog{\frac{1}{\Delta}})$.
\end{remark}


\bibliographystyle{alpha}
\bibliography{general}

\appendix

\clearpage
\section{Standard Technical Tools}
\label{app:info-theoretic-facts}


\subsection{Statistical Distances}
\label{sub-app:stat-dist}
We introduce the widely-used statistical distance notions of Kullback–Leibler divergence (KL divergence) and total variation distance (TVD) in this section.

\paragraph{KL divergence and its properties.} We start with introducing the Kullback–Leibler divergence (KL divergence) and its properties.

\begin{definition}[KL divergence]
\label{def:kl-div}
Let $X$ and $Y$ be two discrete random variables supported over the same $\Omega$, and let their distributions be $\mu_{X}$ and $\mu_{Y}$.The KL divergence between $X$ and $Y$, denoted as $\kl{X}{Y}$, is defined as 
\begin{align*}
\kl{X}{Y} = \sum_{\omega\in \Omega} \mu_{X}(\omega)\log\paren{\frac{\mu_{X}(\omega)}{\mu_{Y}(\omega)}}.
\end{align*}
\end{definition}


\paragraph{Total variation distance and its properties.} 
Similar to the KL divergence we defined above, the total variation distance (TVD) is another statistical distance between two distributions. 
\begin{definition}
\label{def:tvd}
Let $X$ and $Y$ be two random variables supported over the same $\Omega$, and let $\mu_X$ and $\mu_Y$ be their probability measures. The total variation distance (TVD) is between $X$ and $Y$ is defined as
\begin{align*}
\tvd{X}{Y} = \sup_{\Omega'\subseteq \Omega}\card{\mu_X(\Omega')-\mu_Y(\Omega')}.
\end{align*}
In particular, when the random variables are discrete, we have
\begin{align*}
\tvd{X}{Y} = \frac{1}{2}\sum_{\omega\in \Omega}\card{\mu_X(\omega)-\mu_Y(\omega)}.
\end{align*}
\end{definition}

\subsection{Statistical Distances and Their Properties}
\label{sub-app:info-theoretic-facts}

We shall use the following standard properties of KL-divergence and TVD defined in~\Cref{sub-app:stat-dist}. For the proof of this results, see the excellent textbook by Cover and Thomas~\cite{CoverT06}. 

The following facts state the chain rule property and convexity of KL-divergence. 

\begin{fact}[Chain rule of KL divergence]
\label{fact:kl-chain-rule}
For any random variables $X=(X_{1}, X_{2})$ and $Y=(Y_{1},Y_{2})$ be two random variables, 
\begin{align*}
\kl{X}{Y} = \kl{X_{1}}{Y_{1}} + \Exp_{x \sim X_1} \kl{X_{2}\mid X_{1}=x}{Y_{2}\mid Y_{1}=x}.
\end{align*}
\end{fact}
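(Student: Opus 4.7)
The plan is to unfold both sides from the definition of KL divergence and verify equality by elementary manipulation. Starting from \Cref{def:kl-div}, I would write
\[
\kl{X}{Y} \;=\; \sum_{(x_1,x_2)} \mu_X(x_1,x_2)\,\log\frac{\mu_X(x_1,x_2)}{\mu_Y(x_1,x_2)},
\]
where the sum ranges over the joint support. The first step is to factor both joint distributions into their marginals and conditionals: $\mu_X(x_1,x_2) = \mu_{X_1}(x_1)\cdot\mu_{X_2\mid X_1=x_1}(x_2)$, and analogously for $Y$. Substituting these factorizations into the log-ratio and using $\log(ab)=\log a+\log b$ splits the summand into two pieces.

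Next, I would regroup the double sum. The piece containing $\log\bigl(\mu_{X_1}(x_1)/\mu_{Y_1}(x_1)\bigr)$ does not depend on $x_2$; summing out $x_2$ against the conditional $\mu_{X_2\mid X_1=x_1}$ turns $\mu_X(x_1,x_2)$ into $\mu_{X_1}(x_1)$, yielding exactly $\kl{X_1}{Y_1}$. The remaining piece, containing $\log\bigl(\mu_{X_2\mid X_1=x_1}(x_2)/\mu_{Y_2\mid Y_1=x_1}(x_2)\bigr)$, can be grouped by first summing over $x_2$ with weight $\mu_{X_2\mid X_1=x_1}(x_2)$ to recognize the inner sum as $\kl{X_2\mid X_1=x_1}{Y_2\mid Y_1=x_1}$, then summing over $x_1$ with weight $\mu_{X_1}(x_1)$, which is precisely $\Exp_{x\sim X_1}\kl{X_2\mid X_1=x}{Y_2\mid Y_1=x}$. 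Combining the two pieces recovers the claimed identity.

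Since this is a textbook identity, there is no real obstacle beyond bookkeeping. The only subtle point is handling the support: one must assume $\mu_Y$ dominates $\mu_X$ (otherwise the KL divergence is $+\infty$ on either side), and agree to the convention $0\log 0 = 0$ so that terms where $\mu_X(x_1,x_2)=0$ contribute nothing. Under this convention, the factorization step is valid on the support of $\mu_X$, and the regrouping into marginals and conditionals goes through without issue, completing the proof.
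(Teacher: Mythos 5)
Your derivation is correct and is precisely the standard textbook argument (expand the definition, factor the joints into marginal times conditional, split the logarithm, and regroup the double sum), which is the proof the paper itself defers to by citing Cover and Thomas rather than writing out. Your remarks on absolute continuity and the $0\log 0=0$ convention are the right caveats and nothing further is needed.
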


\begin{fact}[Convexity KL-divergence]
\label{fact:kl-convexity}
For any distributions $\mu_1,\mu_2$ and $\nu_1,\nu_2$ and any $\lambda \in (0,1)$, 
\begin{align*}
\kl{\lambda \cdot \mu_1 + (1-\lambda) \cdot \mu_2}{\lambda \cdot \nu_1 + (1-\lambda) \cdot \nu_2} \leq \lambda \cdot \kl{\mu_1}{\nu_1} + (1-\lambda) \cdot \kl{\mu_2}{\nu_2}. 
\end{align*}
\end{fact}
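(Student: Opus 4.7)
The plan is to derive the convexity of KL-divergence as a short consequence of the chain rule (Fact~\ref{fact:kl-chain-rule}) and the nonnegativity of KL-divergence. The idea is to introduce an auxiliary binary indicator that records which of the two mixture components an observation comes from, and then expand the KL-divergence of the two resulting joint distributions in two different orders.

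Concretely, I would define joint distributions $P$ and $Q$ on $\set{1,2}\times\Omega$ by
\[
P(1,\omega) := \lambda\cdot\mu_1(\omega),\; P(2,\omega):=(1-\lambda)\cdot\mu_2(\omega),\; Q(1,\omega):=\lambda\cdot\nu_1(\omega),\; Q(2,\omega):=(1-\lambda)\cdot\nu_2(\omega).
\]
Under $P$, the first coordinate $Z$ has marginal $\bern{\lambda}$ and the conditional of the second coordinate $X$ given $Z=i$ is $\mu_i$; in particular, the marginal of $X$ equals the mixture $\bar\mu:=\lambda\mu_1+(1-\lambda)\mu_2$. The symmetric statements hold for $Q$, with $\bar\nu:=\lambda\nu_1+(1-\lambda)\nu_2$.

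Next, I would apply Fact~\ref{fact:kl-chain-rule} to $\kl{P}{Q}$ in two different orders. Decomposing via the first coordinate,
\[
\kl{P}{Q} \;=\; \kl{\bern{\lambda}}{\bern{\lambda}} + \lambda\,\kl{\mu_1}{\nu_1} + (1-\lambda)\,\kl{\mu_2}{\nu_2} \;=\; \lambda\,\kl{\mu_1}{\nu_1}+(1-\lambda)\,\kl{\mu_2}{\nu_2},
\]
since the $Z$-marginals of $P$ and $Q$ coincide, so the first KL-term vanishes. Decomposing instead via the second coordinate,
\[
\kl{P}{Q} \;=\; \kl{\bar\mu}{\bar\nu} \,+\, \Exp_{x\sim\bar\mu}\bracket{\kl{Z_P\mid X=x}{Z_Q\mid X=x}} \;\geq\; \kl{\bar\mu}{\bar\nu},
\]
where the inequality uses the nonnegativity of KL-divergence (a one-line Jensen argument on the definition in \Cref{def:kl-div} applied to the convex function $-\log$). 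Chaining the two displays yields exactly the claimed convexity inequality.

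I do not expect any real obstacle here. The only minor technicality is the handling of boundary support: under the standard conventions $0\log(0/0)=0$ and $a\log(a/0)=+\infty$ for $a>0$, either both sides of the target inequality are finite and every manipulation above is well-defined, or the right-hand side is $+\infty$ and the bound is trivial.
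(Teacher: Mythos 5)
Your proof is correct. Note that the paper does not actually prove \Cref{fact:kl-convexity} at all: it is listed among the standard properties of KL-divergence and deferred to the Cover--Thomas textbook, where the classical argument goes through the log-sum inequality applied pointwise, i.e., $\lambda\mu_1(\omega)\log\frac{\lambda\mu_1(\omega)}{\lambda\nu_1(\omega)}+(1-\lambda)\mu_2(\omega)\log\frac{(1-\lambda)\mu_2(\omega)}{(1-\lambda)\nu_2(\omega)}\geq \bar\mu(\omega)\log\frac{\bar\mu(\omega)}{\bar\nu(\omega)}$, summed over $\omega$. Your route is genuinely different: you introduce the latent component indicator $Z$ and expand $\kl{P}{Q}$ by the chain rule (\Cref{fact:kl-chain-rule}) in both orders, using that the $Z$-marginals agree (so that term vanishes) and that the conditional KL term in the other decomposition is nonnegative. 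This has the advantage of staying entirely within the toolkit the paper already states -- the chain rule plus nonnegativity of KL, which you correctly reduce to Jensen on $-\log$ -- and it simultaneously yields the closely related \Cref{fact:kl-conditioning} as a byproduct; the log-sum-inequality proof is more elementary in that it needs no auxiliary random variable, but proves only the stated inequality. Your handling of the boundary cases (the conventions $0\log\frac{0}{0}=0$ and $a\log\frac{a}{0}=+\infty$, under which either every step is well-defined or the right-hand side is infinite and the claim is vacuous) is the right disclaimer and closes the only potential gap.
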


\begin{fact}[Conditioning cannot decrease KL-divergence]\label{fact:kl-conditioning}
	For any random variables $X,Y,Z$, 
	\[
		\kl{X}{Y} \leq \Exp_{z \sim Z} \kl{X \mid Z=z}{Y \mid Z=z}. 
	\]
\end{fact}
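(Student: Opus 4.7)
The plan is to derive this via two applications of the chain rule for KL divergence (\Cref{fact:kl-chain-rule}) plus the non-negativity of KL divergence. Before the computation, I need to fix an interpretation of the statement, since the notation $\Exp_{z \sim Z}$ is unambiguous only when the marginal distribution of $Z$ is the same under both the ``$X$'' and the ``$Y$'' side. The natural reading, consistent with how the lemma is used in the paper, is: we are given two joint distributions $P_{XZ}$ and $Q_{YZ}$ on the same product space, sharing a common marginal on the $Z$-coordinate, and $\kl{X}{Y}$ denotes the KL divergence between the respective marginals on the first coordinate.

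Under this interpretation, the argument is to compute $\kl{(X,Z)}{(Y,Z)}$ in two different orders and equate them. Decomposing along $Z$ first, \Cref{fact:kl-chain-rule} gives
\[
\kl{(Z,X)}{(Z,Y)} \,=\, \kl{Z}{Z} + \Exp_{z \sim Z} \kl{X \mid Z=z}{Y \mid Z=z} \,=\, \Exp_{z \sim Z}\kl{X \mid Z=z}{Y \mid Z=z},
\]
using that the $Z$-marginals agree so $\kl{Z}{Z}=0$. Decomposing instead along the first coordinate, the same fact yields
\[
\kl{(X,Z)}{(Y,Z)} \,=\, \kl{X}{Y} + \Exp_{x \sim X} \kl{Z \mid X=x}{Z \mid Y=x}.
\]

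Equating the two expressions for the joint KL and discarding the (non-negative) inner expectation on the right-hand side of the second equation gives exactly $\kl{X}{Y} \leq \Exp_{z \sim Z}\kl{X \mid Z=z}{Y \mid Z=z}$, as desired. There is no real obstacle once the setup is clarified; the only mild subtlety is recognising that the statement implicitly assumes a common $Z$-marginal (so that both $\Exp_{z \sim Z}$ and the vanishing of $\kl{Z}{Z}$ are justified), and verifying that this matches how the fact is invoked in the main proofs (e.g.\ in establishing conditional independence in \Cref{clm:batch-q-sample-irre}).
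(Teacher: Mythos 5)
Your proof is correct: the double application of the chain rule to the joint divergence $\kl{(X,Z)}{(Y,Z)}$, together with the vanishing of $\kl{Z}{Z}$ under the (correctly identified) implicit assumption of a common $Z$-marginal and the non-negativity of the discarded term, is exactly the standard argument. The paper does not prove this fact itself but defers to Cover–Thomas, and your derivation is the canonical one found there, so there is nothing further to compare.
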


Pinsker's inequality relates KL-divergence to TVD. 

\begin{fact}[Pinsker's inequality]
\label{fact:pinsker}
For any random variables $X$ and $Y$ supported over the same $\Omega$, 
\begin{align*}
\tvd{X}{Y} \leq \sqrt{\frac{1}{2}\cdot \kl{X}{Y}}.
\end{align*}
\end{fact}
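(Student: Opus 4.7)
The plan is a two-step proof following the classical approach: first reduce Pinsker's inequality to the binary case via a data-processing argument, and then establish the binary case by a short convexity computation.

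For the reduction, I would fix an arbitrary event $A \subseteq \Omega$ and introduce the binary random variables $X_A := \mathbf{1}[X \in A]$ and $Y_A := \mathbf{1}[Y \in A]$, with $p := \Pr(X \in A)$ and $q := \Pr(Y \in A)$. By \Cref{def:tvd}, $\tvd{X_A}{Y_A} = |p - q|$. The data-processing inequality $\kl{X_A}{Y_A} \leq \kl{X}{Y}$ then follows from \Cref{fact:kl-chain-rule} applied to the joint $\kl{(X, X_A)}{(Y, Y_A)}$ in two orders: expanding $X$ first gives exactly $\kl{X}{Y}$ (because $X_A$ is a deterministic function of $X$, so the conditional KL $\kl{X_A \mid X = x}{Y_A \mid Y = x}$ vanishes identically), whereas expanding $X_A$ first gives $\kl{X_A}{Y_A}$ plus a nonnegative expected conditional KL. Since $\tvd{X}{Y}$ is the supremum of $|p - q|$ over events $A$, it suffices to prove the scalar inequality
\[
(p - q)^2 \;\leq\; \tfrac{1}{2}\, d(p \| q) \qquad \text{for all } p, q \in [0,1],
\]
where $d(p \| q) := p \log(p/q) + (1 - p)\log((1-p)/(1-q))$ is the binary KL divergence (using the standard conventions $0 \log 0 = 0$ and $d(p \| q) = +\infty$ whenever the supports disagree).

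For the binary case, I would fix $q \in (0,1)$ and define $f : [0,1] \to \IR$ by
\[
f(p) := d(p \| q) - 2(p - q)^2.
\]
A direct differentiation shows $f(q) = 0$ and $f'(q) = 0$, while
\[
f''(p) \;=\; \frac{1}{p(1-p)} - 4 \;\geq\; 0
\]
on $(0,1)$, since $p(1-p) \leq 1/4$ by AM-GM. Hence $f$ is convex on $(0,1)$ with a stationary point at $p = q$, which forces $f(p) \geq 0$ on the whole interval. The boundary cases $p \in \{0, 1\}$ and the degenerate cases $q \in \{0, 1\}$ follow by continuity (or are trivial under the standard conventions, since $d(p \| q) = +\infty$ when the supports mismatch). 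Taking square roots and then the supremum over $A$ gives $\tvd{X}{Y} \leq \sqrt{\tfrac{1}{2}\, \kl{X}{Y}}$.

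The main obstacle is bookkeeping rather than any conceptual hurdle: one must justify the data-processing step cleanly using only the already-stated \Cref{fact:kl-chain-rule}, and handle the boundary and degenerate cases of the binary inequality carefully. The convexity argument itself is immediate once the second-derivative bound $1/(p(1-p)) \geq 4$ is written down.
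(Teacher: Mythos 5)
Your proof is correct, but note that the paper does not actually prove this fact: Pinsker's inequality is listed among the standard properties in \Cref{sub-app:info-theoretic-facts} and is justified only by a citation to the Cover--Thomas textbook. Your argument is therefore a self-contained substitute rather than a parallel to an in-paper proof, and it is the classical one: reduce to the binary case by applying \Cref{fact:kl-chain-rule} to the joint pair $(X,\mathbf{1}[X\in A])$ versus $(Y,\mathbf{1}[Y\in A])$ in both expansion orders (which is a clean way to get data processing for KL from the paper's stated chain rule, using nonnegativity of the conditional KL term), then verify $d(p\|q)\geq 2(p-q)^2$ by the second-derivative bound $1/(p(1-p))\geq 4$ together with $f(q)=f'(q)=0$, and finally take the supremum over events $A$ as in \Cref{def:tvd}. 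Two small bookkeeping points: the exact identity $f''(p)=\frac{1}{p(1-p)}-4$ presumes the KL divergence is in nats, whereas the paper's computations (e.g.\ in the proof of \Cref{lem:arm-identify}) use base-$2$ logarithms; with base $2$ one gets $f''(p)=\frac{1}{\ln 2}\cdot\frac{1}{p(1-p)}-4$, which is still nonnegative, so the stated inequality holds a fortiori and your argument is unaffected, but the constant-tight form of Pinsker is the natural-log one. Also, your data-processing step silently uses $\kl{\cdot}{\cdot}\geq 0$ for the discarded conditional term; this is standard (and follows, e.g., from your own binary inequality via the reduction applied to singletons, or from Jensen), but it is worth stating since the paper never records it explicitly.
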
 

The following fact characterizes the error of MLE for the source of a sample based on the TVD of the originating distributions. 

\begin{fact}
\label{fact:distinguish-tvd}
	Suppose $\mu$ and $\nu$ are two distributions over the same support $\Omega$; then, given one sample $s$ from the following distribution
	\begin{itemize}
    \item With probability $\rho$, sample $s$ from $\mu$;
    \item With probability $1-\rho$, sample $s$ from $\nu$;
    \end{itemize}
	The best probability we can decide whether $s$ came from $\mu$ or $\nu$ 
	is 
	\[
	\max(\rho, 1-\rho) + \min(\rho, 1-\rho)\cdot\tvd{\mu}{\nu}.
	\]
\end{fact}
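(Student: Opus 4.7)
\textbf{Proof proposal for \Cref{fact:distinguish-tvd}.}

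The plan is to identify the Bayes-optimal decision rule explicitly and bound its success probability in terms of $\tvd{\mu}{\nu}$ via a short algebraic manipulation. First, by the symmetry of the problem under swapping the two hypotheses (replace $\mu \leftrightarrow \nu$ and $\rho \leftrightarrow 1-\rho$), I would assume without loss of generality that $\rho \leq 1/2$, so that $\max(\rho,1-\rho) = 1-\rho$ and $\min(\rho,1-\rho) = \rho$. It suffices to show that the optimal success probability is bounded above by $(1-\rho) + \rho \cdot \tvd{\mu}{\nu}$.

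Next I would parametrize any deterministic decision rule by a set $A \subseteq \Omega$ on which the rule outputs ``$\mu$'' (and outputs ``$\nu$'' on $A^c$); this is without loss of generality because randomized rules can only do worse by standard convexity of success probability in the rule. The success probability of such a rule equals
\[
	\rho \cdot \mu(A) + (1-\rho) \cdot \nu(A^c) \;=\; (1-\rho) + \bigl[\rho \cdot \mu(A) - (1-\rho) \cdot \nu(A)\bigr],
\]
so the optimal success probability is $(1-\rho) + \sup_{A \subseteq \Omega}\bigl[\rho\,\mu(A) - (1-\rho)\,\nu(A)\bigr]$, and I only need an upper bound on the supremum.

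The key algebraic step is the decomposition
\[
	\rho \cdot \mu(A) - (1-\rho) \cdot \nu(A) \;=\; \rho\bigl[\mu(A) - \nu(A)\bigr] + (2\rho - 1)\,\nu(A).
\]
Since $\rho \leq 1/2$, the second summand $(2\rho-1)\nu(A)$ is non-positive and can be dropped. The first summand is at most $\rho \cdot \sup_{A' \subseteq \Omega}|\mu(A') - \nu(A')|$, which equals $\rho \cdot \tvd{\mu}{\nu}$ by \Cref{def:tvd}. Substituting this back yields the claimed bound $\max(\rho,1-\rho) + \min(\rho,1-\rho)\cdot \tvd{\mu}{\nu}$ on the optimal success probability.

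There is no real technical obstacle here; the only subtle steps are (i) the WLOG reduction to $\rho \leq 1/2$ via hypothesis-swap symmetry, and (ii) noticing that the ``extra'' term $(2\rho-1)\nu(A)$ has the favorable sign after this reduction. A sanity check is the uniform-prior case $\rho = 1/2$, where the bound collapses to the classical $\tfrac{1}{2}(1+\tvd{\mu}{\nu})$ formula for two-hypothesis testing, and the extremes $\tvd{\mu}{\nu}\in\{0,1\}$, where it correctly gives $\max(\rho,1-\rho)$ and $1$, respectively.
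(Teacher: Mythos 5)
The paper never proves \Cref{fact:distinguish-tvd}: it is listed among the ``standard properties'' with a pointer to Cover and Thomas, so there is no in-paper argument to compare against. Your derivation is the standard Bayes-classifier computation and every step checks out: the reduction to deterministic rules, writing the success probability as $(1-\rho) + \rho\,\mu(A) - (1-\rho)\,\nu(A)$, and the decomposition $\rho\,\mu(A)-(1-\rho)\,\nu(A) = \rho\bigl[\mu(A)-\nu(A)\bigr] + (2\rho-1)\,\nu(A)$ whose second term is non-positive after the WLOG reduction to $\rho\le 1/2$.

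One thing you should state explicitly: you prove only the inequality ``optimal success probability $\le \max(\rho,1-\rho)+\min(\rho,1-\rho)\cdot\tvd{\mu}{\nu}$,'' whereas the fact is phrased as an equality. The equality is in fact false in general. Take $\Omega=\set{0,1}$, let $\mu$ be the point mass at $1$, let $\nu$ be uniform, and let $\rho=1/4$: then $\tvd{\mu}{\nu}=1/2$ and the formula gives $7/8$, but the Bayes-optimal rule always answers ``$\nu$'' and succeeds with probability only $3/4$. (The exact optimum is $\frac12 + \frac12\norm{\rho\mu-(1-\rho)\nu}_1$; the slack is precisely the term $(2\rho-1)\,\nu(A)$ that you drop.) So your argument establishes the correct, and only correct, direction of the statement, and this is exactly the direction the paper needs: in the proof of \Cref{lem:arm-identify} the fact is invoked solely to lower-bound the error probability by $\min(\rho,1-\rho)\cdot(1-\tvd{X}{Y})$, i.e., to upper-bound the success probability. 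A one-sentence remark that the fact should read ``at most'' and that this suffices for its single application would make the write-up complete.
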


\subsection{Information Theory Tools}
\label{subsec:info-theory}
We use information-theoretic tools in our proofs, and we include a review of the basic notions and properties used therein. We start with the definition of entropy. For a random variable $X$, we let $\HH(X)$ be the \emph{Shannon entropy} of $X$, defined as follows
\begin{definition}
\label{def:entropy}
Let $X$ be a discrete random variable with distributions $\mu_{X}$, the \emph{Shannon entropy} of $X$ is defined as
\begin{align*}
\HH(X) := \expect{\log(1/\mu(X))} =\sum_{x \in \text{supp}(X)} \mu(x)\cdot \log(\frac{1}{\mu(x)}),
\end{align*}
where $\text{supp}(X)$ is the support of $X$. If $X$ is a Bernoulli random variable, we use $H_2(p)$ to denote its Shannon entropy, where $P$ is the probability for $X=1$.
\end{definition}

We now give the definition of conditional entropy and mutual information.
\begin{definition}
\label{def:mutual-info}
Let $X$, $Y$ be two random variables, we define the \emph{conditional entropy} as \[\HH(X|Y)=\mathbb{E}_{y \sim Y}[\HH(X\mid Y=y)].\] 
With conditional entropy, we can define the \emph{mutual information} between $X$ and $Y$ as \[\II\paren{X;Y}:=\HH(X)-\HH(X\mid Y) = \HH(Y)-\HH(Y|X).\]
\end{definition}

The following information-theoretic facts (see e.g.~\cite{CoverT06} for details) are used in our lower bound proofs.
\begin{fact}
\label{fct:info-theory-facts}
Let $X$, $Y$, $Z$ be three discrete random variables:
\begin{itemize}
\item KL-divergence view of mutual information: $\II(X;Y) = \Exp_{y\sim Y}\bracket{\kl{X\mid Y=y}{X}}$.
\item $0\leq \HH(X)\leq \log(\card{\text{supp}(X)})$. In particular, if $X$ is a Bernoulli random variable, there is $H_2(p)\leq 1$.
\item $0\leq \II(X;Y)\leq \min\{\HH(X), \HH(Y)\}$.
\item Conditioning on independent random variable: let $X$ be independent of $Z$, then $\II(X;Y)\leq \II(X;Y\mid Z)$.
\item Chain rule of mutual information: $\II(X, Y; Z)=\II(X;Z)+\II(Y;Z\mid X)$.
\item Sub-additivity of entropy: $\HH(X,Y) \leq \HH(X) + \HH(Y)$, where $\HH(X, Y)$ is the joint entropy of variables $X, Y$.
\item Conditional independence of entropy: $\HH(X\mid Y,Z)=\HH(X\mid Y)$ if $X\perp Z\mid Y$, where the $\perp$ notation stands for independence.
\end{itemize}
\end{fact}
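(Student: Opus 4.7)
The plan is to establish each bullet directly from the definitions of entropy, conditional entropy, mutual information, and KL divergence given earlier in the appendix, and then to chain the bullets in an order that lets later items reuse earlier ones.

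I would begin with the KL-divergence view of mutual information since many other items reduce to it. Expanding $\II(X;Y) = \HH(X) - \HH(X\mid Y)$ and using $\mu_{X,Y}(x,y) = \mu_Y(y)\,\mu_{X\mid Y}(x\mid y)$ together with $\mu_X(x) = \sum_y \mu_{X,Y}(x,y)$, the double sum rearranges to $\sum_y \mu_Y(y)\sum_x \mu_{X\mid Y}(x\mid y)\log\tfrac{\mu_{X\mid Y}(x\mid y)}{\mu_X(x)}$, which is exactly $\Exp_{y\sim Y}[\kl{X\mid Y=y}{X}]$. From this identity, non-negativity of MI, i.e.\ $\II(X;Y)\geq 0$, follows from Gibbs' inequality (equivalently non-negativity of KL, which is a one-line application of $\ln t \leq t-1$).

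Next, I would prove the entropy bounds. The lower bound $\HH(X)\geq 0$ is immediate since each $\mu(x)\in[0,1]$ makes $\log(1/\mu(x))\geq 0$. The upper bound $\HH(X)\leq \log|\text{supp}(X)|$ follows by writing $\log|\text{supp}(X)| - \HH(X) = \kl{\mu_X}{\text{Unif}_{\text{supp}(X)}}\geq 0$. The Bernoulli case $H_2(p)\leq 1$ is the special case with a two-element support. The upper bound $\II(X;Y)\leq \min\{\HH(X),\HH(Y)\}$ then follows from $\II(X;Y) = \HH(X) - \HH(X\mid Y) \leq \HH(X)$ (using $\HH(X\mid Y)\geq 0$, which is the first bound applied pointwise and averaged) and the symmetric identity $\II(X;Y)=\HH(Y)-\HH(Y\mid X)$.

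For the chain rule $\II(X,Y;Z) = \II(X;Z) + \II(Y;Z\mid X)$, I would start from the entropy chain rule $\HH(X,Y) = \HH(X) + \HH(Y\mid X)$ (immediate from the factorization of joint probabilities) and its conditional version $\HH(X,Y\mid Z) = \HH(X\mid Z) + \HH(Y\mid X,Z)$, and subtract. Sub-additivity $\HH(X,Y)\leq \HH(X)+\HH(Y)$ is then the rearrangement $\HH(X,Y) = \HH(X)+\HH(Y) - \II(X;Y)$ combined with $\II(X;Y)\geq 0$. Conditional independence of entropy is a direct computation: if $X\perp Z\mid Y$ then $\mu_{X\mid Y,Z}(x\mid y,z) = \mu_{X\mid Y}(x\mid y)$, so $\HH(X\mid Y,Z) = \HH(X\mid Y)$ by plugging into the definition of conditional entropy and collapsing the sum over $z$.

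Finally, for ``conditioning on an independent variable does not decrease MI'', I would write $\II(X;Y\mid Z) = \HH(X\mid Z) - \HH(X\mid Y,Z)$, use $X\perp Z$ to get $\HH(X\mid Z) = \HH(X)$, and use the ``conditioning reduces entropy'' inequality $\HH(X\mid Y,Z)\leq \HH(X\mid Y)$ (which is itself a restatement of $\II(X;Z\mid Y)\geq 0$). Combining gives $\II(X;Y\mid Z)\geq \HH(X) - \HH(X\mid Y) = \II(X;Y)$. There is no real technical obstacle here; each item is a one- or two-line manipulation from the definitions, and the only care needed is the ordering (KL view $\Rightarrow$ non-negativity of MI $\Rightarrow$ upper bounds and sub-additivity $\Rightarrow$ the chain rule and conditioning facts) so that later bullets can cite earlier ones without circularity.
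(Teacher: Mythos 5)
Your derivations are all correct, and the ordering you chose (KL view of mutual information first, then non-negativity, then the entropy bounds, chain rule, sub-additivity, and the two conditioning facts) does avoid circularity: the only slightly subtle dependency is that item four relies on ``conditioning reduces entropy,'' i.e.\ $\HH(X\mid Y,Z)\leq \HH(X\mid Y)$, which you correctly identify as $\II(X;Z\mid Y)\geq 0$ and which follows from the KL view applied to each slice $Y=y$ and averaged. The comparison point is simply that the paper does not prove this fact at all: it is stated as a collection of standard tools and the reader is referred to the Cover--Thomas textbook, so there is no in-paper argument to match your proof against. Your write-up is essentially the textbook derivation compressed into a self-contained form, which buys completeness at the cost of a page of routine manipulation; the paper's choice buys brevity by delegating exactly these one- and two-line computations to a standard reference. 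Nothing in your proposal would fail, and nothing beyond what you wrote is needed.
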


The following statement is known as the \emph{data processing inequality}, which says if $Y$ is obtained as a function of $X$, and $Z$ is obtained as a function of $Y$, then the mutual information between $X$ and $Z$ can only be lower than that between $X$ and $Y$.
\begin{proposition}
\label{prop:DPI}
Let $X$, $Y$, and $Z$ be random variables on finite supports, and we slightly abuse the notation to let $X,Y,Z$ to denote the distribution functions as well. Let $f$ be a deterministic function (no internal randomness), and suppose $Z=f(Y)$. Then, we have
\begin{align*}
	\II(X;Z)\leq \II(X;Y).
\end{align*}
\end{proposition}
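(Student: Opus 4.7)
The plan is to prove the data processing inequality stated in the proposition by using the chain rule of mutual information to decompose the joint mutual information $\II(X;Y,Z)$ in two different ways, and then exploit the fact that $Z=f(Y)$ deterministically.

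First, I would apply the chain rule from \Cref{fct:info-theory-facts} in two different orders to obtain
\begin{align*}
\II(X; Y, Z) &= \II(X; Y) + \II(X; Z \mid Y), \\
\II(X; Y, Z) &= \II(X; Z) + \II(X; Y \mid Z).
\end{align*}
Next, I would show that $\II(X; Z \mid Y) = 0$. Since $Z = f(Y)$ is a deterministic function of $Y$, conditioning on any realization $Y=y$ makes $Z$ a constant, so $\HH(Z \mid Y=y) = 0$ for every $y$ in the support of $Y$, and hence $\HH(Z \mid Y) = 0$. Because $0 \leq \HH(Z \mid X, Y) \leq \HH(Z \mid Y) = 0$, both conditional entropies vanish, so $\II(X; Z \mid Y) = \HH(Z \mid Y) - \HH(Z \mid X, Y) = 0$.

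Finally, combining the two chain-rule expansions yields $\II(X;Y) = \II(X;Z) + \II(X;Y \mid Z)$. Since mutual information is non-negative (\Cref{fct:info-theory-facts}), the term $\II(X;Y \mid Z) \geq 0$, which gives the desired inequality $\II(X;Z) \leq \II(X;Y)$. There is no real obstacle here: the proof is a textbook manipulation, and the only subtle point is verifying that $Z = f(Y)$ with $f$ deterministic forces $\HH(Z\mid Y) = 0$, which is immediate from the definition of Shannon entropy applied to a degenerate conditional distribution.
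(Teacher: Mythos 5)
Your proof is correct and is exactly the standard textbook argument for the data processing inequality: expand $\II(X;Y,Z)$ by the chain rule in both orders, kill $\II(X;Z\mid Y)$ using $\HH(Z\mid Y)=0$ for the deterministic map $Z=f(Y)$, and conclude by nonnegativity of $\II(X;Y\mid Z)$. The paper states \Cref{prop:DPI} as a known fact without proof (deferring to Cover--Thomas), so there is nothing to compare against; your derivation fills that gap correctly.
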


The following statement characterizes the relationship between the ``zero mutual information'' and the independence of the conditional probability.

\begin{proposition}
\label{prop:mi-prob-indep}
Let $X$, $Y$, and $Z$ be random variables on finite supports, and suppose $\II(X;Y\mid Z=z)=0$. Then, for any realization $y\in Y$, there is
\begin{align*}
	\Pr(X\mid Z=z, Y=y) = \Pr(X\mid Z=z).
\end{align*}
\end{proposition}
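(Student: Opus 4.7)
The plan is to leverage the KL-divergence interpretation of (conditional) mutual information from \Cref{fct:info-theory-facts}, combined with the standard fact that KL divergence between two distributions is zero if and only if they are equal pointwise. First, I would rewrite the hypothesis $\II(X;Y \mid Z=z) = 0$ explicitly as the KL divergence between the joint conditional distribution of $(X,Y)$ given $Z=z$ and the product of the marginals $(X \mid Z=z)$ and $(Y \mid Z=z)$; that is,
\[
\II(X;Y \mid Z=z) \;=\; \sum_{x,y} \Pr(X=x, Y=y \mid Z=z)\, \log \frac{\Pr(X=x, Y=y \mid Z=z)}{\Pr(X=x \mid Z=z)\cdot \Pr(Y=y \mid Z=z)}.
\]

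Since this quantity is a KL divergence and equals zero by assumption, the two distributions being compared must coincide on every atom, yielding $\Pr(X=x, Y=y \mid Z=z) = \Pr(X=x \mid Z=z)\cdot \Pr(Y=y \mid Z=z)$ for all $(x,y)$ in the support of $(X,Y)\mid Z=z$. This is the usual ``zero mutual information implies conditional independence'' statement specialized to the event $\{Z=z\}$.

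Finally, I would apply the definition of conditional probability: for any realization $y$ with $\Pr(Y=y \mid Z=z) > 0$, dividing the factorization above by $\Pr(Y=y \mid Z=z)$ gives $\Pr(X=x \mid Y=y, Z=z) = \Pr(X=x \mid Z=z)$, which is exactly the claim. The only subtlety is that $\Pr(X\mid Z=z, Y=y)$ is only well-defined when $\Pr(Y=y, Z=z) > 0$, so the statement is implicitly about such realizations $y$; there is no real obstacle here, as the proposition is a textbook corollary of the KL-divergence characterization of independence.
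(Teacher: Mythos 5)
Your proof is correct and is exactly the standard argument that the paper itself defers to: the paper states \Cref{prop:mi-prob-indep} without proof as a textbook fact (citing Cover--Thomas), and your route---writing $\II(X;Y\mid Z=z)$ as the KL divergence between the joint and the product of marginals, invoking that zero KL divergence forces pointwise equality, and dividing by $\Pr(Y=y\mid Z=z)>0$---is the canonical derivation. Your remark about the conditional only being well-defined for realizations with $\Pr(Y=y,Z=z)>0$ is the right caveat and matches how the proposition is used in the paper.
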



\section{Missing proofs in \Cref{subsec:single-arm-complexity}}
\label{app:single-arm-proof}
We provide in this section the missing proofs of the hardness to ``identify'' and ``learn'' the distribution of a single arm as in \Cref{subsec:single-arm-complexity}. More restricted versions of the statement were already proved in work like \cite{MannorTs04} and \cite{AWneurips22}. We include the proofs for the versions used in this paper for completeness.

\subsection{Proof of \Cref{lem:arm-identify}}

\begin{proof}
Let $X$ be the random variable for the Bernoulli distribution with mean $\frac{1}{2}+\beta$ and $Y$ the random variable for the Bernoulli distribution with mean $\frac{1}{2}$. We use Fact~\ref{fact:distinguish-tvd} to argue that for a single arm pull, the probability for the algorithm to not identify the correct case is at least $\rho \cdot (1-\tvd{X}{Y})$. On the other hand, note that for the two Bernoulli distributions with means $\frac{1}{2}+\beta$ and $\frac{1}{2}$, there KL-divergence can be bounded as
\begin{align*}
\kl{X}{Y} &= (\frac{1}{2}+\beta) \cdot \log(1+2\beta) + (\frac{1}{2}-\beta)\cdot \log(1-2\beta)\\
&= \frac{1}{2}\cdot \log((1+2\beta)(1-2\beta)) + \beta\cdot \log(\frac{1+2\beta}{1-2\beta})\\
&\leq \beta\cdot \log(\frac{1+2\beta}{1-2\beta}) \tag{$\log(1-4\beta^{2})<0$}\\
&\leq \beta\cdot \log(2^{6\cdot\beta}) \tag{$\frac{1+2\beta}{1-2\beta}\leq 2^{6\cdot\beta}$ for $\beta\in(0,\frac{1}{6})$}\\
&= 6\cdot \beta^{2}.
\end{align*}

As such, using Pinsker's inequality (Fact~\ref{fact:pinsker}) that $\tvd{X}{Y}\leq \sqrt{\frac{1}{2}\cdot \kl{X}{Y}}$, and obtain that the probability for the algorithm to incorrectly identify the arm is at least $\rho\cdot (1-\sqrt{\frac{1}{2}\cdot \kl{X}{Y}})$. The bound can be generalized to $s$ samples: let $X^{[s]}$ and $Y^{[s]}$ be the distributions of $s$ samples from $X$ and $Y$. Then, we have:
\begin{align*}
\Pr\paren{\text{algorithm makes wrong prediction}} \geq \rho \cdot \paren{1-\sqrt{\frac{1}{2}\cdot \kl{X^{[s]}}{Y^{[s]}}}}.
\end{align*}
Using the fact that the samples are from independent and identical random variables, we can factorize $X^{[s]}$ with the marginal random variables of $\{X^{i}\}_{i=1}^{s}$ by the chain rule as follows:
\begin{align*}
\kl{X^{[s]}}{Y^{[s]})} &= \kl{X^{s}}{Y^{s}} + \kl{X^{[s-1]} \mid X^{s}}{Y^{[s-1]} \mid Y^{s}} \tag{by the chain rule}\\
&= \kl{X}{Y} + \kl{X^{[s-1]}}{Y^{[s-1]}} \tag{i.i.d. random variables}\\
&= \cdots\cdots\cdots\\
&= s\cdot \kl{X}{Y}.
\end{align*}
Therefore, combining the above steps, we have
\begin{align*}
\Pr\paren{\text{algorithm makes wrong prediction}} &\geq \rho\cdot\paren{1-\sqrt{\frac{1}{2}\cdot\kl{X^{s}}{y^{s}}}}\\
&\geq \rho\cdot\Big(1-\sqrt{\frac{1}{2}\cdot 6s\cdot \beta^{2}}\Big)\\
&\geq \rho \cdot \Big(1- 2\cdot\beta\cdot\sqrt{s}\Big).
\end{align*}
On the other hand, we want the error probability to be at most $\rho-\eps$, which means $\rho \cdot \Big(1- 2\beta\cdot\sqrt{s}\Big)\leq \rho-\eps$, which solves to $s\geq \frac{1}{4}\cdot \frac{\eps^4}{\rho^2 \beta^{2}}$.

\end{proof}

\subsection{Proof of \Cref{lem:arm-learn}}
\begin{proof}
Recall that $\sPi$ is the random variable for the transcript of the algorithm, and $\Theta$ is the random variable that controls from which case the instance is sampled.
We can write $\sPi = (\sPi_1,\sPi_2,\ldots,\sPi_s)$, where $\sPi_i$ denotes the random variable for the tuple of the $i$-th armed pull (recall that $\sPi$ and its realization $\sPi$ are defined as ordered tuples in \Cref{subsec:model}). We have, 
	\begin{align*}
		\mi{\Theta}{\sPi} &= \sum_{i=1}^{s} \mi{\Theta}{\sPi_i \mid \sPi^{<i}} \tag{by chain rule of mutual information} \\
		&= \sum_{i=1}^{s} \en{\sPi_i \mid \sPi^{<i}} - \en{\sPi_i \mid \Theta , \sPi^{<i}} \tag{by the definition of mutual information} \\
		&\leq \sum_{i=1}^{s} \en{\sPi_i } - \en{\sPi_i \mid \Theta , \sPi^{<i}} \tag{conditioning can only reduce the entropy} \\
		&= \sum_{i=1}^{s} \en{\sPi_i } - \en{\sPi_i \mid \Theta} \tag{because $\sPi_{i} \perp \sPi^{<i} \mid \Theta$ as knowing $\Theta$ fixes distribution of $\sPi_{i}$ to be either $\bern{1/2+\beta}$ or $\bern{1/2}$} \\
		&= \sum_{i=1}^{s} \mi{\Theta}{\sPi_i} \tag{by the definition of mutual information} \\
		&= \sum_{i=1}^{s} \Exp_{\theta \in \set{0,1}}\bracket{\kl{\sPi_i\mid \Theta=\theta}{\sPi_i}} \tag{by the connection of KL-divergence with mutual information} \\
		&= \sum_{i=1}^{s} \rho \cdot \kl{\sPi_i\mid \Theta=1 }{\sPi_i} + (1-\rho) \cdot \kl{\sPi_i \mid \Theta=0}{\sPi_i} \tag{by the distribution of $\theta$} \\
		&= \sum_{i=1}^{s} \rho \cdot \kl{\bern{\frac12+\beta}}{\bern{\frac12+\rho\cdot\beta}} + (1-\rho) \cdot  \kl{\bern{\frac12}}{\bern{\frac12+\rho\cdot\beta}} \tag{as distribution of $\sPi_i$ is $\rho \cdot \bern{\frac12 + \beta} + (1-\rho) \cdot \bern{\frac12} = \bern{\frac12 + \rho \cdot \beta}$} \\
		&\leq s \cdot \paren{\rho \cdot 6 \cdot (\rho \cdot \beta - \beta)^2 + 6 \cdot (1-\rho)\cdot (\rho \cdot \beta)^2} \tag{as proven in Lemma~\ref{lem:arm-identify}}  \\
		&\leq s \cdot \paren{12 \rho \cdot \beta^2} \leq \eps^3 \tag{by the upper bound on $s$}. 
	\end{align*}
The above calculation also implies that 
	\[
		\mi{\Theta}{\sPi} = \Exp_{\Pi} \bracket{\kl{\Theta}{\Theta \mid \sPi=\Pi}} \leq \eps^3. 
	\]
	By Markov bound, with probability $1-\eps$ over the choice of $\Pi \sim \sPi$, we have 
	\[
		\kl{\Theta}{\Theta \mid \sPi=\Pi} \leq \eps^2. 
	\]
	By Pinsker's inequality (Fact~\ref{fact:pinsker}), for any such $\Pi$, we have, 
	\[
		\tvd{\Theta}{\Theta \mid \sPi=\Pi} \leq \eps. 
	\]
	By the definition of total variation distance, this implies that 
	\[
		\card{\Pr\paren{\Theta=0}-\Pr\paren{\Theta=0 \mid \sPi=\Pi}} + \card{\Pr\paren{\Theta=1}-\Pr\paren{\Theta=1 \mid \sPi=\Pi}} \leq \eps. 
	\]
	By upper bounding each term separately and using the distribution of $\Theta$, we have, 
	\begin{align*}
		\card{\Pr\paren{\Theta=0 \mid \sPi=\Pi} - (1-\rho)} \leq \eps \quad \text{and} \quad \card{\Pr\paren{\Theta=1 \mid \sPi=\Pi} - \rho} \leq \eps,
	\end{align*}
	which concludes the proof. 
\end{proof}




\end{document}